\pdfoutput=1
\documentclass{article} 
\usepackage{iclr2027_conference,times}
\makeatletter\@ifundefined{iflatexml}{\newif\iflatexml}{}\makeatother

\usepackage{amsmath,amsfonts,bm}

\def\eqref#1{equation~\ref{#1}}

\def\1{\bm{1}}

\DeclareMathAlphabet{\mathsfit}{\encodingdefault}{\sfdefault}{m}{sl}
\SetMathAlphabet{\mathsfit}{bold}{\encodingdefault}{\sfdefault}{bx}{n}

\usepackage{amsmath}
\usepackage{amsthm}
\usepackage{xcolor}
\usepackage{colortbl}
\definecolor{oursrow}{RGB}{230,239,252}   
\usepackage{longtable}   
\usepackage{hyperref}
\usepackage{url}

\usepackage{booktabs}
\newcommand{\dashvrule}{\raisebox{-\dp\strutbox}{\vbox to \dimexpr\ht\strutbox+\dp\strutbox\relax{%
  \leaders\vbox{\hrule height 1.1pt width 0.4pt\vskip 1.1pt}\vfill}}}
\newcolumntype{:}{!{\dashvrule}}
\iflatexml\newcolumntype{:}{|}\fi   
\usepackage{graphicx}
\graphicspath{{figures/}}
\providecommand{\ours}{\mbox{\textsc{CausalBridge}}}
\iflatexml\renewcommand{\ours}{\textsc{CausalBridge}}\fi   
\newcommand{\indep}{\perp\!\!\!\perp}            
\newcommand{\nindep}{\not\!\perp\!\!\!\perp}     
\iflatexml\renewcommand{\nindep}{\not\perp\!\!\!\!\!\perp}\fi   
\usepackage[section]{placeins}   
\usepackage{pifont}
\usepackage{framed}
\definecolor{shadecolor}{gray}{0.93}
\newenvironment{takeaway}{\setlength{\fboxsep}{3pt}\begin{snugshade}}{\end{snugshade}\vspace{-4pt}}

\title{How Causality Bridges the Semantic Gap}

\iflatexml
\author{Shuhao Zhang \\ {University of California, San Diego}
\And Xuran Zhou \\ {University of California, San Diego}
\And Han Guo \\ {University of California, San Diego}
\And Pengtao Xie\thanks{Corresponding author: \texttt{p1xie@ucsd.edu}.} \\ {University of California, San Diego}
\And Yujia Zheng\thanks{Corresponding author: \texttt{yujiaz@illinois.edu}.} \\ {University of Illinois Urbana-Champaign}}
\date{Project page: \url{https://sullivan07043.github.io/CausalBridge/}}
\else
\author{Shuhao Zhang$^{1}$\quad Xuran Zhou$^{1}$\quad Han Guo$^{1}$\quad Pengtao Xie$^{1}$\setcounter{footnote}{1}\thanks{Corresponding authors: \texttt{p1xie@ucsd.edu}, \texttt{yujiaz@illinois.edu}.}\quad Yujia Zheng$^{2}$\footnotemark[2] \\[3pt]
$^{1}$University of California, San Diego\qquad $^{2}$University of Illinois Urbana-Champaign \\
\normalfont Project page: \url{https://sullivan07043.github.io/CausalBridge/}
}
\fi

\iclrfinalcopy 
\newtheorem{theorem}{Theorem}
\newtheorem{proposition}{Proposition}
\newtheorem{definition}{Definition}
\newtheorem{condition}{Condition}

\renewcommand{\topfraction}{0.9}
\renewcommand{\bottomfraction}{0.6}
\renewcommand{\textfraction}{0.07}
\renewcommand{\floatpagefraction}{0.8}

\newsavebox{\fittabbox}
\newcount\fitcols
\newlength{\fitsep}
\makeatletter
\newcommand{\fittable}[2]{%
  \begin{lrbox}{\fittabbox}\begin{tabular}{#1}#2\end{tabular}\end{lrbox}%
  \ifdim\wd\fittabbox>\textwidth
    \resizebox{\textwidth}{!}{\usebox{\fittabbox}}%
  \else
    \fitcols=0 \@tfor\fitc:=#1\do{\if\fitc:\else\advance\fitcols by 1\fi}%
    \multiply\fitcols by 2
    \setlength{\fitsep}{\textwidth}\addtolength{\fitsep}{-\wd\fittabbox}%
    \divide\fitsep by \fitcols
    \addtolength{\fitsep}{\tabcolsep}%
    {\setlength{\tabcolsep}{\fitsep}\begin{tabular}{#1}#2\end{tabular}}%
  \fi}
\makeatother
\iflatexml\renewcommand{\fittable}[2]{\begin{tabular}{#1}#2\end{tabular}}\fi
\begin{document}

\maketitle
\lhead{Preprint}   

\begin{abstract}
Numerical measurements capture how a system behaves, but often leave the meanings of its variables unspecified. Some variables are measured but never labeled, and others are never measured at all. Existing methods assign semantics to such variables by consulting general human knowledge, but this inherits its biases where that knowledge exists and offers nothing where it does not.
We bridge this gap between measurements and their meanings with causal structure instead, reading a variable's semantics from how it acts on other variables. We formalize this as structure-constrained semantic alignment, in which the embedding of each unnamed variable is solved under the dependence relations implied by the causal graph, with the embeddings of a few known names as anchors. Accordingly, we build \textbf{\ours{}}, a framework that
discovers the causal graph from the measurements, latent variables included, solves for the embeddings under those relations, and expresses them as names through a language model. The causal structure reflects the mechanism that generated the measurements and is recovered from the measurements alone, which may make it the one source of information free of bias from human knowledge. We evaluate \ours{} on five questionnaires and three robotics scenarios, with 20 to 90\% of the variable names masked. It recovers the semantics of observed and latent variables more accurately than existing methods that rely on association, and its lead widens as less of the system is documented. The graph it discovers names variables as accurately as the documented one, and a new system is named in minutes and at a fraction of the cost of sampling methods. Once the semantic gap is bridged faithfully, machines can understand the world and take actions causally.
\end{abstract}
\suppressfloats[t]   
\begin{figure}[!htb]\centering
\includegraphics[width=0.93\textwidth]{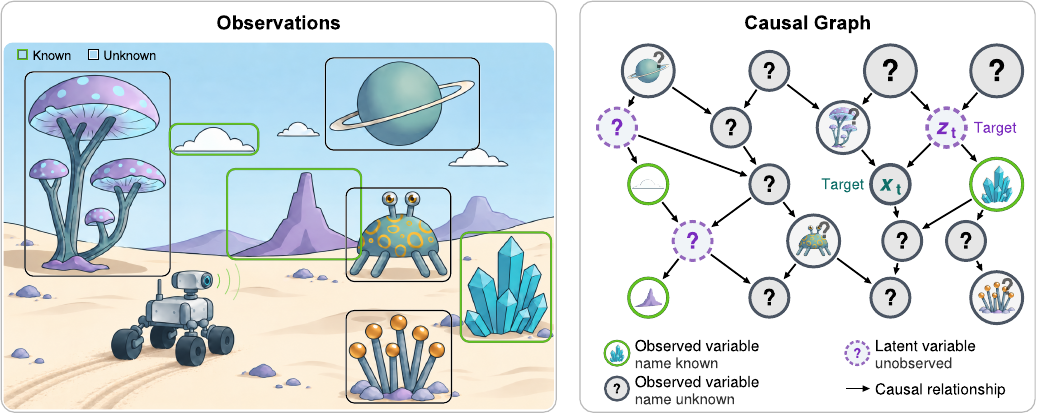}   
\caption{A system meets a world it has no words for. Left: a rover on a new planet, whose sensors return numbers for everything in view, can name a few things (green boxes) and has no name for most (gray boxes), and nobody wrote a manual for this world. Right: the causal graph that the numbers alone reveal, with latent variables (dashed purple) never measured yet acting on what is. We ask how this structure, anchored by the few known names, lets the system name the unnamed observed variable $x_t$ and the latent variable $z_t$. The arrows are the relations that the numbers alone reveal, and the question marks stand for the names the system does not have.}
\label{fig:setting}
\end{figure}

\section{Introduction}

An intelligent system sees the real world as numbers, while people understand the same world with semantics, and between the two sit unnamed variables with no semantics assigned. Some are measured but never documented, as in a shared corpus of robot demonstrations whose observation and action spaces differ from robot to robot \citep{OpenX}, so the meaning of each channel must be found again for every robot. Others, such as the latent variable behind a group of questionnaire items and the feature inside a network that a set of activations reflects, are never measured at all and appear only through their effect on what is measured. Causal discovery can recover such a latent variable from data \citep{Silva2006}, but only as a numerical variable with a structural position in a graph and no name (Figure~\ref{fig:setting}). A name turns that variable into a hypothesis a person can test or a concept a system can communicate. This semantic gap appears where human knowledge covers only a few of the variables, and closing it lets a system know the world it measures beyond what people have already written down. We therefore ask how a system can come to know such a world from its data alone, with few priors.

Existing strategies assign semantics to an unnamed variable by reading
human knowledge. Some propagate semantics from named neighbors over a supplied graph \citep{FeaturePropagation, GraphMAE}. Others match the activation profile of the variable against a fixed concept bank \citep{NetworkDissection, CLIPDissect}. Still others ask a language model what the active cases of the variable suggest \citep{autointerpBill, Delphi, SASC}, or what the variables around a latent variable in a recovered graph suggest \citep{TLVD}. Causal discovery and causal representation learning recover the structure among observed and latent variables and state when the latent ones are identifiable \citep{FOFC, GIN, Scholkopf2021, Khemakhem2020, Zheng2022}. ThoughtComm \citep{ThoughtComm} carries recovered latent variables into a language model, but none of these methods names what it recovers.

One instance shows where these strategies fail (Figure~\ref{fig:teaser}(b)). Consider a drive recorded from the CAN bus of a car \citep{comma2k19}, on which the car's controllers exchange messages. Six of its twelve channels are named, and one of the unnamed ones is the longitudinal acceleration. AutoInterp names it pitch rate and gives the same name to another unnamed channel, while the SASC-style method names it front left wheel speed and gives that name to three other unnamed channels. Neither name is correct. The unnamed channels carry different signals, yet each method gives one name to several of them. The name therefore reflects the language prior about what a car records, and the measurements have little effect on it. Where human knowledge covers the variables, the inferred name inherits its biases, and where it does not, there is nothing to retrieve.

\vspace{-4pt}
\begin{center}\itshape
How can a system recover the semantics of the variables it measures,
where human knowledge covers few of them?
\end{center}
\vspace{-4pt}

We answer this question with \textbf{\ours{}}, which first recovers the causal structure of a system from its measurements alone, without reading any name, and adds a latent variable wherever the dependence among the observed variables requires one (Figure~\ref{fig:teaser}b). It then solves for the meaning of every unnamed variable under that structure, where the vectors of the few named variables serve as anchors and the relations determine every unnamed vector relative to them. Finally, a frozen language model translates each vector into a name, and existing semantics enter at this step. In the graph discovered for that drive, the vertical acceleration causes the unnamed channel, which in turn causes the pitch rate. Both arise when the car brakes or accelerates, so the channel between them is its acceleration along the road, and \ours{} writes longitudinal acceleration. The structure supplies what human knowledge does not cover, and the language model supplies the words.

This paper makes four contributions. \textbf{First}, we \emph{discover the
semantics of a system from its causal structure}, and the language model
only writes the name. A system can therefore come to know the world it
measures from its own data. \textbf{Second}, we formalize the task as \emph{structure-constrained semantic alignment}. In this task, the embedding of every unnamed variable, observed or latent, is solved under the relations that the causal graph implies. The few known names serve as anchors. \textbf{Third}, we realize the alignment as \ours{}, which is \emph{a framework with discovered structure, causally grounded names and efficient deployment}. \textbf{Fourth}, we evaluate \ours{} on three robotics scenarios and five questionnaires with 20 to 90\% of the true names masked. We show that \emph{the lead is largest where the least has been written down}. At 50\% masking, \ours{} outperforms the strongest baseline by 2.2 times on the channels of an undocumented vehicle CAN. On latent variables, the lead grows from 1.5 to 1.9 times as the masking rises from 20 to 75\%. The graph \ours{} discovers matches the documented graph in naming accuracy, and one recipe fits any backbone. A new system is named in minutes, at 2.6 to 56.6 times fewer tokens per latent variable than the methods that describe sampled cases.

\begin{figure}[!h]\centering
\includegraphics[width=\textwidth,trim=0 7.5 0 12,clip]{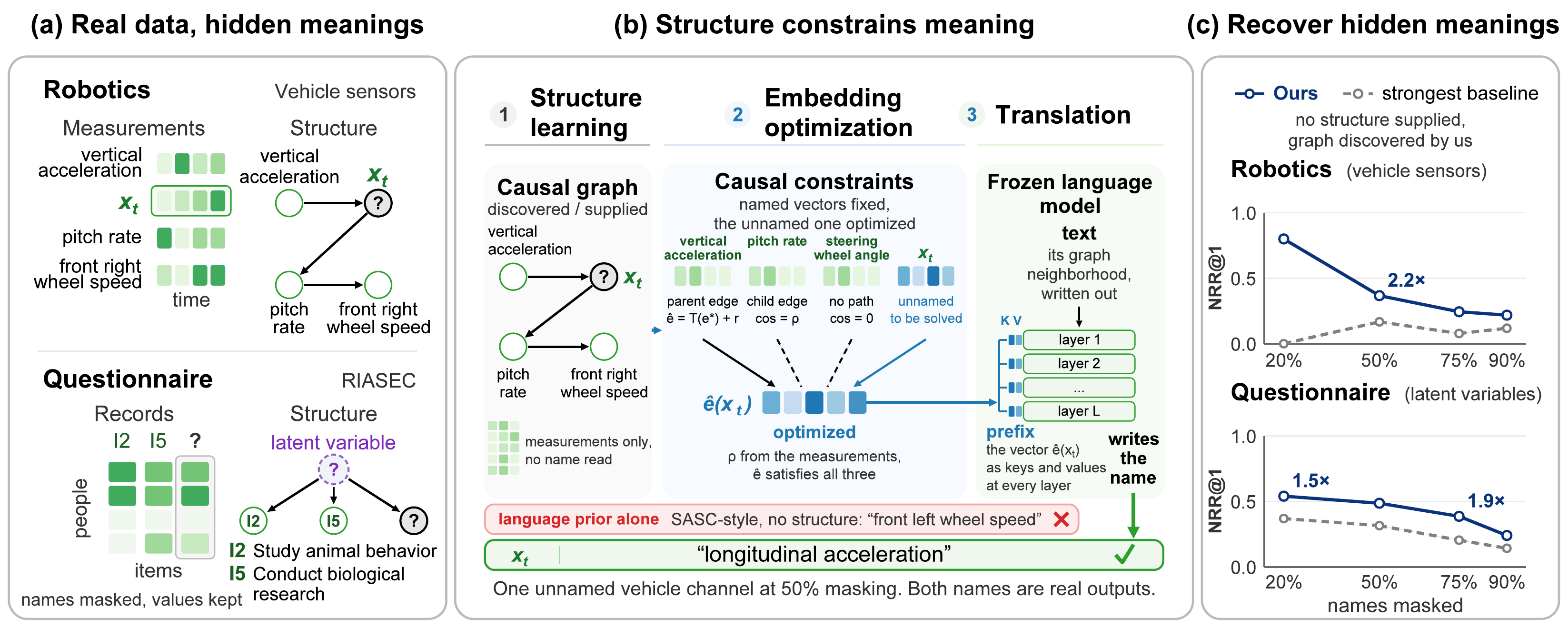}
\caption{(a) Two systems from the evaluation at 50\% masking: four channels of one vehicle CAN drive and three items of the RIASEC scale \citep{RIASEC} with the latent variable that causes them, each with its structure. A question mark marks an unknown name, and $x_t$ is the channel to be named. (b)~The three steps of \ours{} on $x_t$: the structure is learned from the measurements; the vector of $x_t$ is optimized under its structural relations to the named variables, whose vectors stay fixed; and a frozen language model takes this vector as a prefix and writes the name, shown below with the one the strongest baseline wrote without structure. (c)~NRR@1, the share of unnamed variables whose written name matches the true name best among all names of the same system, for \ours{} and the strongest baseline at each masking level with no structure supplied.}
\label{fig:teaser}
\end{figure}

\section{Problem Formulation}
\label{sec:formulation}

Given the measurements of a system and the names of a few of its variables, we aim to recover the semantics of the rest, including the latent variables that are never measured.

\begin{takeaway}
\textbf{Key idea.} The semantics of a variable can be recovered from its causal relations to the other variables. A causal graph contains no names, but once a few variables are named, the relations of an unnamed variable to them constrain its meaning.
\end{takeaway}

\paragraph{Data-generating process.}
Consider a system with variables $V = O \cup L$, where $O$ and $L$ denote the observed and the latent variables, respectively. We assume that the variables are generated by a structural causal model \citep{Pearl2009} with a directed acyclic graph $G$, in which an edge from $u$ to $v$ means that $u$ is a direct cause of $v$. The measurements $X$ contain $n$ samples of the observed variables, while the latent variables are not measured. Each variable $v$ has a name $\ell(v)$, but the names are known only for a subset $V_{\mathrm{N}}$ of the observed variables, which we call the named variables.
\begin{equation}
V_i = f_i\big(\mathrm{pa}(V_i), \varepsilon_i\big), \qquad
X \in \mathbb{R}^{n \times |O|}, \qquad
\ell : V \to \mathcal{T}, \qquad
V_{\mathrm{N}} \subseteq O, \quad
V_{\mathrm{U}} = (O \setminus V_{\mathrm{N}}) \cup L ,
\label{eq:setting}
\end{equation}
where $\mathrm{pa}(V_i)$ denotes the parents of $V_i$ in $G$, $\varepsilon_i$ is the noise term, $\mathcal{T}$ is the space of text, and $V_{\mathrm{U}}$ collects the unnamed variables. Variables with a common parent are siblings. In our evaluation, $L$ is empty for a robot or a vehicle CAN, and contains the latent variables of a questionnaire.

\paragraph{Causal graphs and d-separation.}
Following \citet{SGS2000}, the causal graph $G$ characterizes the causal relations among the variables, and d-separation derives from $G$ the conditional independence relations among them, associating dependence with connectedness in the graph and independence with separation. A path between two variables is any consecutive sequence of edges, disregarding their directions. A set $S$ blocks a path if the path contains a chain $a \to m \to b$ or a fork $a \leftarrow m \to b$ whose middle variable $m$ is in $S$, or a collider $a \to m \leftarrow b$ such that neither $m$ nor any descendant of $m$ is in $S$. $S$ d-separates $a$ and $b$ if every path between them is blocked by $S$. The collider case reflects that conditioning on a common effect makes its causes dependent. In Figure~\ref{fig:dsep}, $x$ and $y$ are d-separated by the empty set, since the only path between them passes the collider at $t$, while $s$ and $y$ are d-connected given $\{r, p\}$ because $p$ is a descendant of $t$ and is in the conditioning set. Under the Markov and faithfulness assumptions, $S$ d-separates $a$ and $b$ if and only if they are independent given $S$, denoted by $a \indep b \mid S$ (Definition~\ref{def:markov}). These relations are determined by the causal graph without semantic information, and we use them to find the semantics.

\begin{figure}[!h]\centering
\includegraphics[width=0.76\textwidth]{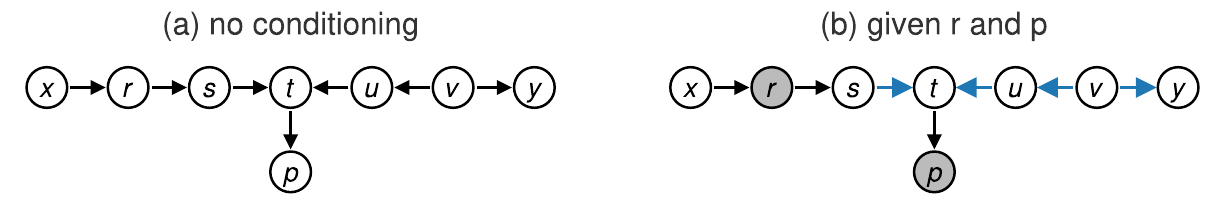}
\caption{d-separation on a graph adapted from \citet[Section 11.1.2]{Pearl2009}. A shaded
variable is in the conditioning set $S$. (a) With $S$ empty, $x$ and $y$
are d-separated by the collider at $t$. (b) Given $\{r, p\}$, $s$ and $y$ are d-connected, since $p$ is a descendant of the collider $t$ and is in $S$.}
\label{fig:dsep}
\end{figure}

\paragraph{The naming task.}
We formulate the naming task as finding vectors that realize the causal structure and translating the vector of each unnamed variable into a name. To this end, we assign each variable $v$ a vector $e_v$ in the embedding space of a text encoder $\phi : \mathcal{T} \to \mathbb{R}^d$, so that each named variable keeps the vector $e^\star_v = \phi(\ell(v))$ of its name, which we call an anchor. In this space a function $\delta$ of two vectors, such as their cosine similarity, measures how related their meanings are, so we use it as a surrogate for the dependence between the corresponding variables. We say that an assignment $e : V \to \mathbb{R}^d$ realizes the structure if it satisfies three conditions. First, the vector of each variable is generated from the vectors of its parents. Second, the values of $\delta$ among the vectors reflect the dependence and independence relations that $G$ implies, with the strength of each dependence estimated from the measurements $X$. Third, the vectors of the named variables stay fixed to their anchors. Let $\mathcal{A}(G, X)$ denote the feasible set of all assignments that realize the structure, which Section~\ref{sec:method-solver} states formally. The goal is to find an assignment $\hat e \in \mathcal{A}(G, X)$ and to translate $\hat e_v$ into a name $\hat\ell(v)$ that agrees with the true name $\ell(v)$ for each unnamed variable $v \in V_{\mathrm{U}}$.

The first two conditions constrain only the relations among the vectors. Edges determine which vectors should be similar, and d-separations determine which should remain distinct. The anchors remove part of this freedom, and the choice that remains within the feasible set is determined by existing knowledge, which enters as a regularizer toward English phrases (Section~\ref{sec:method-solver}) and as the language model that translates the vector (Section~\ref{sec:method-translation}).

\section{Method}
\label{sec:method}

Based on the formulation above, we propose the framework \ours{}, which names the unnamed variables of a system in three steps (Figure~\ref{fig:teaser}(b) and Figure~\ref{fig:pipeline}). First, we learn the causal structure $\hat G$ from the measurements alone so that its relations are free of the biases of human knowledge. Second, we optimize the vectors of the unnamed variables under this structure, with the named variables as anchors. Third, to write the meaning in words without losing the relations that determined it, a frozen language model takes the vector of each unnamed variable directly as input and translates it into a name. The steps are modular, so a supplied causal graph can substitute for the discovered one at the second step.

\begin{takeaway}
\textbf{Running example.} This section follows the vehicle CAN drive of the introduction at 50\% masking (Figure~\ref{fig:teaser}(a)). Its named channels include the vertical acceleration, the pitch rate and the steering wheel angle, and $x_t$ is an unnamed channel whose true name is longitudinal acceleration.
\end{takeaway}

\subsection{Causal Structure Learning}
\label{sec:method-discovery}

\looseness=-1
The causal structure $G$ describes how the variables of a system cause
one another, and it is essential for our task, since it
represents the true mechanism underlying the measurements and is therefore
agnostic to the biases of existing knowledge. On a vehicle the vertical acceleration and the pitch rate both arise when the car brakes or accelerates, so an unnamed channel between them on a causal path measures the acceleration along the road, whatever a language prior expects it to be.

\looseness=-1
To learn the causal structure with guarantees, we apply causal discovery algorithms, which under their assumptions identify it from the measurements without any semantic information (Appendix~\ref{apx:theory}). Specifically, when all variables are observed, as on a robot or a vehicle CAN, we use best order score search (BOSS) \mbox{\citep{BOSS}}, a score-based search over the orderings of the variables. Since these measurements form a time series, we search over two consecutive time steps and keep the edges that agree with the temporal order. When latent variables exist, as on a questionnaire, we use rank-based latent causal discovery (RLCD) \citep{Dong2024}, given its flexibility to handle general structures that involve both latent and observed variables. Intuitively, respondents who rate ``Study animal behavior'' highly on the RIASEC questionnaire of Figure~\ref{fig:teaser}(a) tend to rate ``Conduct biological research'' highly as well, because one interest causes both. RLCD detects such a latent variable from the rank of the covariance of the measurements. Both algorithms are theoretically supported and empirically verified on various setups (Appendices~\ref{apx:theory} and~\ref{apx:discovery}), and the output is the causal structure $\hat G$ over the observed and the latent variables, with no semantics attached.

\begin{takeaway}
\textbf{Takeaway.} In the running example, the discovered graph places
$x_t$ on a causal path from the vertical acceleration to the pitch rate
and separates it from the steering wheel angle. Because these relations
are learned from the measurements without reading any name, they already
restrict what $x_t$ can measure. None of the baselines learns a
structure, so none of them can use such relations.
\end{takeaway}

\subsection{Constrained Embedding Optimization}
\label{sec:method-solver}

We use the relations that the causal graph implies to constrain where the vector of each unnamed variable can lie in the embedding space, given the vectors of the named variables. In the running example, $x_t$ is generated from the vertical acceleration, generates the pitch rate and is d-separated from the steering wheel angle. Specifically, let $\rho_{ab \cdot S}$ denote the strength of the dependence between $a$ and $b$ given $S$, estimated from the measurements, and let $e_{v \cdot S}$ denote the vector of $v$ after removing the part explained by $S$. When $S$ is empty, $e_{v \cdot S}$ is simply $e_v$, and when $S$ consists of the parents of $a$ and $b$, $e_{v \cdot S}$ is the residual $r_v = e_v - g_v(e_{\mathrm{pa}(v)})$. Here $g_v$ is the embedding-space counterpart of the causal mechanism $f_v$ in Eq.~\ref{eq:setting}, generating the vector of $v$ from those of its parents, and $r_v$ captures what the parents do not explain, as the noise term $\varepsilon_v$ does in the measurements. An assignment $e$ realizes the three conditions if it satisfies
\begin{equation}
\begin{alignedat}{4}
&e_v = g_v\big(e_{\mathrm{pa}(v)}\big) + r_v
&& \quad\text{\scriptsize generation}
&\quad \delta(e_{a \cdot S}, e_{b \cdot S}) &= 0 \; \text{if } a \indep b \mid S
&& \quad\text{\scriptsize independence} \\
&\delta(e_{a \cdot S}, e_{b \cdot S}) = \psi(\rho_{ab \cdot S}) \; \text{if } a \nindep b \mid S
&& \quad\text{\scriptsize dependence}
&\quad e_v &= e^\star_v \;\; \forall v \in V_{\mathrm{N}}
&& \quad\text{\scriptsize anchoring}
\end{alignedat}
\label{eq:realization}
\end{equation}
where $\psi$ denotes a monotone function that maps a dependence to the scale of $\delta$ (Appendix~\ref{apx:theory}). The function $\delta$ admits several choices, such as the cosine similarity or the conditional mutual information of a Gaussian model of the vectors (Appendix~\ref{apx:cmi-vectors}). We turn these conditions into an optimization problem by fixing each named variable at the vector $e^\star_v$ of its name and optimizing a vector $e_v \in \mathbb{R}^d$ for each unnamed variable, observed or latent. The objective:
\begin{align}
\mathcal{L}(e) =&
\underbrace{\sum\nolimits_{v:\, \mathrm{pa}(v) \neq \emptyset} \|r_v\|^2}_{\text{\scriptsize generation}}
+ \underbrace{\tfrac{\lambda_{\mathrm{ind}}}{|\mathcal{P}_{\mathrm{ind}}|} \sum\nolimits_{\mathcal{P}_{\mathrm{ind}}}
      ( \delta(e_a, e_b) - \rho_{ab} )^2}_{\text{\scriptsize marginal independence}}
\label{eq:objective} \\[-3pt]
&+ \underbrace{\tfrac{\lambda_{\mathrm{dep}}}{|\mathcal{P}_{\mathrm{dep}}|} \sum\nolimits_{\mathcal{P}_{\mathrm{dep}}}
      [\kappa |\rho_{ab}| - |\delta(e_a, e_b)| ]_+^2}_{\text{\scriptsize marginal dependence}}
+ \underbrace{\tfrac{\lambda_{\mathrm{con}}}{|\mathcal{P}_{\mathrm{con}}|} \sum\nolimits_{\mathcal{P}_{\mathrm{con}}}
      ( \delta(r_a, r_b) - \rho_{ab \cdot \mathrm{pa}} )^2}_{\text{\scriptsize conditional dependence}}
+ \mathcal{R}(e),
\nonumber
\end{align}
where $\mathcal{P}_{\mathrm{ind}}$ contains the pairs that $\hat G$ d-separates by the empty set, $\mathcal{P}_{\mathrm{dep}}$ the strongly dependent pairs, $\mathcal{P}_{\mathrm{con}}$ the pairs whose variables both have parents, $\rho_{ab}$ denotes the distance correlation between $a$ and $b$ \citep{DistanceCorrelation}, and $[x]_+ = \max(x, 0)$. The weights $\lambda$ scale the relation terms and $\kappa < 1$ is a fixed fraction. The four terms encode a common principle: the semantics of a variable should reflect those of its causes, causally related variables should be semantically related and unrelated ones distinct, and the semantics not explained by its causes should be specific to the variable itself. The generation term instantiates $g_v$ as $g_v(e_{\mathrm{pa}(v)}) = \sum_{u \in \mathrm{pa}(v)} T(e_u; \rho_{uv})$, where $T$ is a learned operator shared across all edges. $T$ scales the parent's vector by $|\rho_{uv}|$ after a learned correction. On a negatively correlated edge, it applies a learned semantic negation map first, so the parent contributes its opposite meaning (Appendix~\ref{apx:solver}). The marginal dependence term imposes only a lower bound, which leaves the vectors the freedom to fit the other constraints. The conditional dependence term matches $\delta(r_a, r_b)$ to the dependence that remains between $a$ and $b$ given their parents. Because the constraints admit multiple solutions, the regularizer $\mathcal{R}$ favors unit-norm vectors close to English phrases and to the named variables they depend on most. On questionnaires, the generation constraints also let latent variables carry semantics from named to unnamed items.

The operator $T$, the weight network that predicts per-variable weights for the terms of Eq.~\ref{eq:objective}, and the encoder fine-tuning are fitted on training scales and on generated arms and vehicles (Appendix~\ref{apx:solver}). The operator and the weight network are trained with names masked as at evaluation, minimizing the cosine distance between each optimized vector and the embedding of its true name. For a new system, we jointly optimize the vectors of the unnamed variables by gradient descent.

\begin{takeaway}
\textbf{Takeaway.} In the running example, Eq.~\ref{eq:realization}
turns the three relations of $x_t$ into three constraints on its vector.
The vector must be generated from the anchor of the vertical
acceleration, generate the anchor of the pitch rate, and be unrelated to
the anchor of the steering wheel angle. The more relations tie an unnamed
variable to the named ones, the more tightly its vector is constrained.
\end{takeaway}

\subsection{Translation via Prefix Adaptation}
\label{sec:method-translation}

We translate $\hat e_v$ into a name by passing it to a frozen language model as a prefix \citep{PrefixTuning}, so the model, which keeps its knowledge of words, attends to the whole vector and thus to every relation while it writes the name. Decoding $\hat e_v$ into words first would replace it by its few nearest phrases and drop its relations to every named variable, which the graph determined.

To build the prefix, we concatenate $\hat e_v$ with the vectors of the latent parent of $v$, when it has one, and a mean of the vectors of the named variables most correlated with $v$, into $z_v$. A learned adapter $q$ maps $z_v$ to the prefix $P_v = q(z_v) \in \mathbb{R}^{m \times d_p}$, where $m$ is the prefix length and $d_p$ the dimension of a prefix vector. Following \citet{PTuningV2}, learned matrices project the prefix into the key-value (KV) cache of every layer $j$ as $\mathbf{K}_j = P_v W^K_j$ and $\mathbf{V}_j = P_v W^V_j$. The prompt text $t_v$ carries the names around $v$. It lists the directed edges incident to $v$, and an evidence line gives the named variables most correlated with $v$ and the phrases nearest to $\hat e_v$.

We keep the language model frozen except for a low-rank update \citep{LoRA} and minimize
\begin{equation}
\mathcal{L}_{\mathrm{adapter}} =
-\log p_\theta\big(\ell(v) \mid P_v, t_v\big)
+ \lambda_{\mathrm{sim}} \big(1 - \cos(\bar\phi_{\mathrm{LM}}(\hat y),\, \bar\phi_{\mathrm{LM}}(\ell(v)))\big),
\label{eq:adapter-loss}
\end{equation}
where $\theta$ collects the adapter and the low-rank update and $\hat y$ is the predicted name. The first term supervises the words of the name, and the second its meaning under the mean token embeddings $\bar\phi_{\mathrm{LM}}$. Two further terms separate the name from those of the siblings of $v$ (Appendix~\ref{apx:plugin}). The adapter is trained on synthetic data only, and only the shape of the KV cache depends on the language model.

\begin{takeaway}
\textbf{Takeaway.} In the running example, the phrases nearest to the optimized vector of $x_t$ are heading angle and lateral acceleration. Both describe the motion of the car body, yet neither names $x_t$. Decoding the vector into words discards the relations that distinguish $x_t$ from the other channels. Reading the vector directly as a prefix, the frozen model writes longitudinal acceleration. The meaning of a name comes from the structure, and the language model supplies only the words.
\end{takeaway}

\textit{Generality of the framework.} The graph and the vectors are computed before the language model is called. Any model that takes vectors as input, such as a world model or a policy, can therefore receive the causal structure of its system in this form. The framework
applies to any system with measurements and a few named variables, under
the conditions of causal discovery (Appendix~\ref{apx:theory}).

\section{Experiments}
\label{sec:experiments}

RQ1 asks whether causal structure recovers the semantics of the variables a language prior cannot, and whether \ours{} can discover that structure itself. RQ2 to RQ4 ask where the gain comes from, whether one recipe carries across backbones and what naming a new system costs.

\paragraph{Data.}
\label{sec:environments}
We evaluate on two kinds of data that differ in how much text is available for naming, and their contrast tests the claim that causal structure supplies what a language prior does not cover. Robot and vehicle channels present the semantic gap in practice, since channel definitions vary across platforms and vehicle bus signals are often undocumented. Their names are short labels, which provide a language prior with little semantic information. We use six simulated arms \mbox{\citep{robosuite, robomimic}}, three logs of real arms \mbox{\citep{OpenX, ALOHA}} and ten drives of one car recorded from its CAN bus. The three scenarios differ in how much of the platform is documented, from a simulator with the true graph of its physics to a vehicle CAN with no documentation from the maker of the car. The graph derived from the physics or the documentation of a platform serves as the supplied graph. Questionnaire items are statements that respondents rate, so the text of the named items indicates what an unnamed neighbor measures. A latent variable causes the answers to several items, so the questionnaires also test the claim on variables that no one measured. We use five published scales, DASS \citep{DASS}, HEXACO \citep{HEXACO}, KIMS \citep{KIMS}, RIASEC and the World Values Survey \citep{Welzel2013, WVS7}. The supplied graph is the scoring key of a scale, which names every latent variable and lists its items. Every variable has a true name, so we mask 20 to 90\% of the names and score the name a method writes against the masked names. Besides generated arms and vehicles, the operator and the weights of the objective are fitted on sixteen further public scales and the adapter is trained on synthetic scales. No training set overlaps the evaluation data (Appendix~\ref{apx:data}).

\paragraph{Protocol.}
Masking hides 20, 50, 75 or 90\% of the names. Every method receives the same measurements, known names and unnamed variables. When no structure is supplied, \ours{} discovers the graph from the measurements and each baseline uses the task adaptation described in Appendix~\ref{apx:roster}. When a structure is supplied, every method that accepts a graph as input receives the same graph. \ours{} instantiates $\delta$ with the cosine similarity. With the conditional mutual information as $\delta$, the results remain essentially unchanged (Appendix~\ref{apx:cmi}). The name a method writes is scored as text in a frozen public embedding space \citep{E5} against every name of the same arm, log, drive or scale, so a copied known name scores zero. NRR@1 is the rate at which the true name is ranked first among the candidates, and MRR is the mean reciprocal rank of the true name \citep{MRR}. Identification is the rate at which a judge selects the target from five candidates. Naming quality, reported for latent variables, requires the output to be a name and rejects a copied item text. Under the discovered graph a discovered latent variable is scored against the published latent variable whose key contains most of its items (Appendix~\ref{apx:latent-ref}). Appendix~\ref{apx:protocol} gives the judge, the pools and the auxiliary metrics.

\paragraph{Baselines.}
Every baseline receives the variable's data and the known names.
CLIP-Dissect-e5 selects a concept from a fixed bank. Automated
Interpretability (AutoInterp), Delphi and SASC-style prompt a language
model to describe the variable. DeViSE-style \citep{DeViSE}
regresses the data into the embedding space of the names. When a structure is supplied, the base language model is given the graph as text and GraphMAE reconstructs the masked node from its neighbors. Naming a latent variable requires a grouping of items. With no structure supplied, every questionnaire baseline therefore takes its item groups from a shared factor analysis (Appendix~\ref{apx:roster}).


\begin{takeaway}
\textbf{RQ1: Does causal structure recover the semantics of the variables a language prior cannot, and can \ours{} discover that structure from the measurements themselves?}\\
\ours{} gives the most accurate names on every set of variables, including the latent variables that no one measured, and it needs no supplied structure to do so.
\end{takeaway}
\textit{The gain of \ours{} is largest on the robotics tasks, where a language prior provides the least information.} Without structure, a baseline relies on this prior. A language model carries broad knowledge of psychological scales, so the text of one item already indicates which latent variable it measures. A channel label provides no comparable information.
On the three robot columns, the baselines therefore remain low, while our method derives the meaning of each channel from its relations to the named channels. The lead narrows at 90\% masking, where
few anchors remain.

\textit{\ours{} also names the latent variables more accurately than every baseline at every masking level.} The baselines name a latent variable from its items, so their evidence shrinks as the items lose their names. Our method relies on the causal relations between a latent variable and its items, and these relations are estimated from the measurements and remain available under masking. On naming quality \ours{} also outperforms the baselines, since the name is written from the vector of the latent variable and its relations to every item.

\textit{\ours{} gains more over the baselines when no structure is supplied, since it discovers the causal graph itself while the baselines cannot} (Appendix~\ref{apx:j-graphs}). The latent variables show this pattern most clearly, since our method discovers each of them from the measurements together with the items it causes. In Figure~\ref{fig:main} (bottom), \ours{} on its discovered graph closely tracks its run on the supplied graph and outperforms it on the vehicle CAN. On the latent variables, as on the vehicle CAN, its lead is wider than under the supplied graph at every masking level. On every set of observed variables, our method on its own graph reaches an NRR@1 at least as high as any baseline given the supplied graph. A method that waits for a supplied structure can know only what someone has already written down, with the purpose and the bias of whoever wrote it. The graph \ours{} discovers reflects the mechanism that produced the measurements, so a system can come to know its world from its own data alone.

\begin{figure}[!h]\centering
\includegraphics[width=\textwidth,trim=0 4.5 0 0,clip]{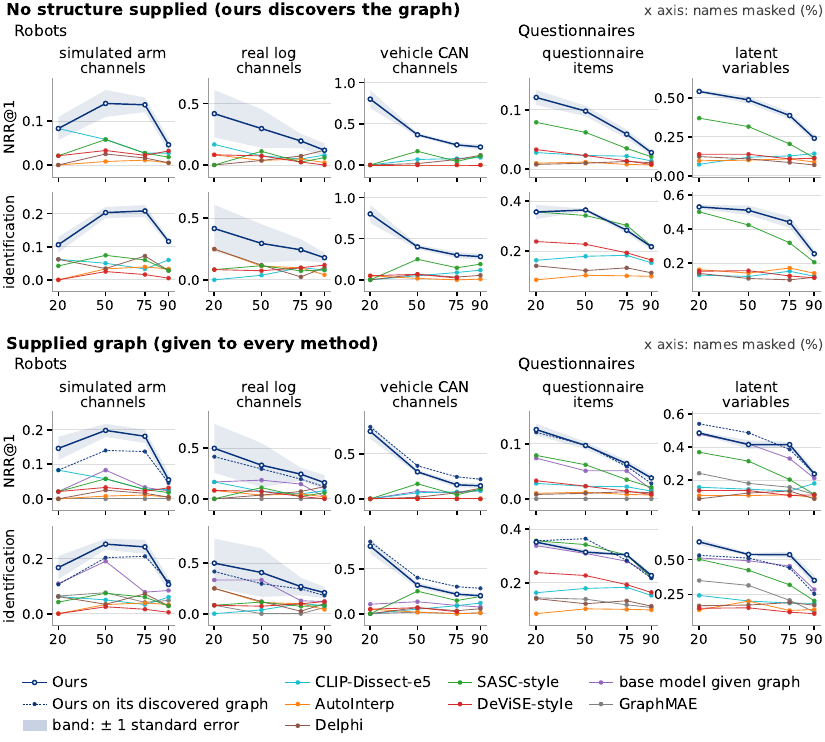}
\caption{\ours{} against every baseline on the five sets of variables. Top,
no structure is supplied, so our method runs on the graph it discovers. Bottom, the supplied graph is given to every method that accepts a graph as input. The thin dashed line with filled markers is our method on its discovered graph, whose latent variables differ from those of the supplied graph. Rows
are NRR@1 and identification. Bands are one standard error over folds or
over arms, logs and drives.}
\label{fig:main}
\end{figure}

\begin{takeaway}
\textbf{RQ2: Where does the gain come from?}\\
\ours{} gains most from the prefix, which carries the structure into the language model, while the evidence line supplies the vocabulary of the name (Figure~\ref{fig:comp}).
\end{takeaway}
Removing the prefix causes the largest loss. The prefix passes the vector $\hat e_v$ to every attention layer through the KV cache, so the model conditions the name on the relations of the variable to all named variables. Without the prefix, the model receives $\hat e_v$ only as the phrases nearest to it, which preserve the meaning of $\hat e_v$ but lose these relations. The evidence line lists the named variables most correlated with the unnamed one in the measurements, so its value depends on how many names remain. On the simulated arms and the real logs, removing the evidence line costs more than removing the prefix through 50\% masking, and the order reverses from 75\% on. The two inputs also resolve different parts of a channel name, which consists of a type and a coordinate (Appendix~\ref{apx:protocol}). The prefix determines the type on every robot column, and the evidence line resolves the coordinate on the vehicle CAN and the real logs. On latent variables, the prefix contributes most under the discovered graph, since a discovered latent variable exists only as a vector, whereas a latent variable of a supplied key can enter the text as its list of items (Appendix~\ref{apx:j-components}). The low-rank update adds little on its own. Its weights are shared across all variables and therefore cannot represent variable-specific structural relations, which enter the model through the prefix and the prompt.

\begin{takeaway}
\textbf{RQ3: Does one recipe carry across backbones?}\\
\ours{} generalizes across backbones with one adapter recipe, since the graph and the vectors are computed before the language model is called and only the adapter is trained per backbone.
\end{takeaway}
The same recipe trains an adapter for the 3B model \citep{Llama32} and the 4B and 8B models \citep{Qwen3}. The causal structure and the embedding do not depend on the backbone, and only the shape of the KV cache enters the adapter. Across the backbones, \ours{} yields the largest improvements on the channels, whose short
labels provide the backbone alone with little semantic information. The
improvements are smaller on latent variables and small on items, where
the item text already provides strong semantic cues. At 90\% masking the
three backbones converge under both the supplied and the discovered graph.
As the names become scarce, the structure determines
performance, and the differences among the backbones shrink (Figure~\ref{fig:comp}, Appendix~\ref{apx:j-backbones}).

\begin{takeaway}
\textbf{RQ4: What does naming a new system cost?}\\
\ours{} uses fewer tokens than any sampling method and names a new system in minutes, since it writes each name from one prompt, whereas a sampling method describes many examples.
\end{takeaway}
Our method spends 2.6 to 56.6 times fewer tokens on the same latent variable than the sampling methods and names the latent variables more accurately (Figure~\ref{fig:deploy}). A sampling method draws and describes examples of each variable, whereas our method writes each name from a single prompt. The three steps name ten drives from their raw traces in about seven minutes (Appendix~\ref{apx:cost}). The cost of a sampling method grows with the text that describes a variable, from the vehicle CAN to the questionnaires, whereas ours stays within a few hundred tokens on every column.

\section{Conclusion and Discussion}
\label{sec:conclusion}

We asked how causality bridges the semantic gap. Our answer is that, once a few variables are named, the meaning of another can be read from its causal relations, namely what causes it, what it causes and what it is independent of. These relations come from the mechanism behind the measurements and are recovered before any name is read, whereas association reaches a name through human knowledge and inherits its biases. The lead of \ours{} is largest where the least has been written down, and the discovered graph names variables as accurately as the documented one.

\looseness=-1
The main limitation of this work is its scope. We evaluated \ours{} only on systems whose true names are known but masked, so that the names it writes can be scored. We have not yet applied it to the settings that motivate it, such as the features of neural networks, latent variables in scientific data and recursively self-improving systems, which we discuss below.

The same semantic gap appears wherever a system produces measurements faster than its variables are named, and it includes every latent variable, which causal discovery returns without a name. Some of these systems were built by people who left them undocumented, such as the robots of a shared corpus \citep{OpenX}, the CAN bus of a vehicle or a system under audit. A language prior knows only what such systems usually record and can assign one name to several channels, whereas their causal relations distinguish them. Other variables have no name yet, such as a feature that a sparse autoencoder recovers from a neural network \citep{Monosemanticity, SAECunningham}, the states behind a population of neurons or the modules of a gene regulatory network. Where no text describes such a variable, a language prior has nothing to read, but its causal relations remain in the measurements, and the framework turns the variable into a named hypothesis that can be tested.

A system that collects its own data meets the semantic gap again with each new variable, and a closed loop forms: the system measures, recovers the structure of what it measures and names the variables in that structure, and each recovered name can then serve as an anchor for the next. Repeating the loop expands the system's model of the world and its vocabulary from its own data. RSI \mbox{(\textbf{R}ecursively \textbf{S}elf-\textbf{I}mproving)} systems require this capability on their perceptual side. The loop is also a step from measuring the world to knowing it, and from signals to concepts that agree with the underlying causes. Causal relations are written in the data even when semantics are not.

\subsection*{Acknowledgments}

We appreciate the helpful discussion with Isaac Ang.

\subsection*{AI use statement}

In this work, we used generative AI tools to generate a synthetic data
set and to aid and polish the writing. A local open-weight language
model (Qwen3-8B) wrote the latent variable names and the item texts of
the synthetic questionnaire scales on which the adapter is trained
(Appendix~\ref{apx:data}), and a leakage filter removes every generated
text that is close to a real one. A language model aided and polished
the writing of this manuscript. We did not use generative AI tools to
develop the conceptual framework or theoretical models, to formulate
mathematical claims, to provide ingredients for proving them or to write
proofs, to propose or refine the hypotheses, to design or provide
feedback on the methodology or the experiments, to implement methods, or
to interpret results, which are the authors' own. Translation, dataset
cleaning and qualitative data analysis do not apply to this work. We
have reviewed all AI-assisted work. Audit scripts recompute every number
in the paper from the stored records, and the authors revised every
AI-assisted passage. The language models that serve as a component of
the method, as baselines and as the identification judge are described
in Sections~\ref{sec:method} and~\ref{sec:experiments} and
Appendices~\ref{apx:plugin} to~\ref{apx:roster}. We take
responsibility for the final content of this work, including text,
claims and artifacts produced with the aid of generative AI.

\subsection*{Ethics statement}

This work uses public questionnaire response data and public robot and
vehicle telemetry. The questionnaire responses are anonymous and were released
for research use. We do not attempt to re-identify any respondent, and we
report results only at the level of items and latent variables. Naming variables
in a system that a person did not document can make that system easier to
audit, and it can also make an undocumented system easier to reverse
engineer without the owner's consent. We consider the audit case the
motivating one and note the other.

\subsection*{Reproducibility statement}

Appendix~\ref{apx:data} gives the data, their preprocessing, the masking
procedure and the training data of every trained part.
Appendix~\ref{apx:protocol} fixes the evaluation, including the scoring
encoder, the candidate pools and the judge protocol.
Appendix~\ref{apx:discovery} gives the discovery algorithms and their
tests, Appendix~\ref{apx:solver} the objective and the trained parts of
the optimization, and Appendix~\ref{apx:plugin} the adapter architecture
and its training objective. Appendix~\ref{apx:compute} lists the settings
of every stage. We will open-source the framework and the related data in the future. Every number in the tables and figures is computed from
the stored records of the runs.

\bibliography{iclr2027_conference}
\bibliographystyle{iclr2027_conference}

\renewcommand{\topfraction}{0.92}
\renewcommand{\bottomfraction}{0.92}
\renewcommand{\textfraction}{0.06}
\renewcommand{\floatpagefraction}{0.80}
\setcounter{topnumber}{4}
\setcounter{bottomnumber}{4}
\setcounter{totalnumber}{8}
\appendix
\clearpage

\section*{Appendix contents}

\begin{itemize}\setlength{\itemsep}{1pt}
\item \textbf{\ref{apx:notation}} Notation (p.~\pageref{apx:notation})
\item \textbf{\ref{apx:cases}} Case studies (p.~\pageref{apx:cases})
\item \textbf{\ref{apx:theory}} Theoretical background (p.~\pageref{apx:theory})
\item \textbf{\ref{apx:data}} Datasets and preprocessing (p.~\pageref{apx:data})
\item \textbf{\ref{apx:discovery}} Causal structure learning (p.~\pageref{apx:discovery})
\item \textbf{\ref{apx:solver}} Constrained embedding optimization (p.~\pageref{apx:solver})
\item \textbf{\ref{apx:plugin}} Translation via prefix adaptation (p.~\pageref{apx:plugin})
\item \textbf{\ref{apx:protocol}} Evaluation protocol (p.~\pageref{apx:protocol})
\item \textbf{\ref{apx:roster}} Baselines and fidelity to the published designs (p.~\pageref{apx:roster})
\item \textbf{\ref{apx:tables}} Supplemental experimental results (p.~\pageref{apx:tables})
\item \textbf{\ref{apx:cmi}} Results with conditional mutual information as the dependence measure (p.~\pageref{apx:cmi})
\item \textbf{\ref{apx:compute}} Settings, compute and licenses (p.~\pageref{apx:compute})
\item \textbf{\ref{apx:struct-merged}} Recovered structures (p.~\pageref{apx:struct-merged})
\end{itemize}

\section{Notation}
\label{apx:notation}

\begingroup\footnotesize\renewcommand{\arraystretch}{1.25}\setlength{\tabcolsep}{3pt}
\begin{longtable}{@{}>{\raggedright\arraybackslash}p{0.20\textwidth}>{\raggedright\arraybackslash}p{0.78\textwidth}@{}}
\caption{Summary of notation, in the order of the pipeline.}\label{tab:notation}\\
\toprule
symbol & meaning \\
\midrule
\endfirsthead
\toprule
symbol & meaning \\
\midrule
\endhead
\bottomrule
\endfoot
\bottomrule
\endlastfoot
\multicolumn{2}{@{}l}{\textit{Setting and causal structure}} \\
$V = O \cup L$ & the variables, observed and latent \\
$f_i$, $\varepsilon_i$ & the causal mechanism of $V_i$ in Eq.~\ref{eq:setting}, and its noise term \\
$X$, $n$ & the measurement matrix over the observed variables, and the number of samples \\
$\ell$ & the true naming map, from a variable to its text \\
$\mathcal{T}$, $\hat\ell$ & the space of text, and the name written for an unnamed variable \\
$V_{\mathrm{N}}, V_{\mathrm{U}}$ & the named variables, and the unnamed variables to be named \\
$G$, $\hat G$ & the causal graph, and the discovered graph \\
$\mathrm{pa}(v)$ & the parents of $v$ in the graph \\
$a \indep b \mid S$ & $a$ and $b$ are independent given $S$ \\
$(A \indep B \mid S)_G$ & $S$ d-separates $A$ from $B$ in $G$ \\
$\Sigma_{A,B}$, $\Sigma_{ab \cdot S}$ & the covariance between $A$ and $B$, and of $a$ and $b$ given $S$ \\
$\operatorname{pcor}(a, b \mid S)$ & the partial correlation of Eq.~\ref{eq:pcor} \\
\midrule
\multicolumn{2}{@{}l}{\textit{Constrained embedding optimization}} \\
$\phi$, $d$ & the frozen text encoder, and the dimension of its embedding space \\
$e_v$ & the vector assigned to variable $v$ \\
$e^\star_v = \phi(\ell(v))$ & the vector of the name of $v$, an anchor when $v$ is named \\
$\hat e_v$ & the optimized vector of an unnamed variable \\
$\mathcal{A}(G, X)$ & the feasible set of assignments that satisfy Eq.~\ref{eq:realization} \\
$g_v$ & the generating function of the vector of $v$ from the vectors of its parents \\
$r_v$ & the residual $e_v - g_v(e_{\mathrm{pa}(v)})$ \\
$T$ & the operator shared across edges \\
$b_{uv}, \Delta, \eta_{uv}$ & the base map of the operator on an edge, its trained correction and the edge type \\
$f_{\mathrm{neg}}$ & the frozen semantic negation map \\
$e_{v \cdot S}$ & the vector of $v$ given $S$, $r_v$ for the parents and the projection residual in Appendix~\ref{apx:cmi-vectors} \\
$\rho_{ab}$ & the distance correlation of $a$ and $b$, signed by their Pearson correlation \\
$\tilde\rho_{ab}$ & $\rho_{ab}$ set to zero below the significance threshold \\
$\rho_{ab \cdot S}$ & the dependence of $a$ and $b$ given $S$, a distance correlation in the main method \\
$\delta$ & a function of two vectors for the dependence, the cosine similarity in the main method \\
$\psi$ & the map from a dependence to the scale of $\delta$, the identity in the main method \\
$\mathcal{P}_{\mathrm{ind}}$ & pairs d-separated by the empty set \\
$\mathcal{P}_{\mathrm{dep}}$ & the trek-connected pairs of strongest dependence \\
$\mathcal{P}_{\mathrm{con}}$ & pairs of variables that have parents \\
$\lambda_{\mathrm{ind}}, \lambda_{\mathrm{dep}}, \lambda_{\mathrm{con}}, \kappa$ & the weights of the relation terms, and the fraction in the dependence bound \\
$\Pi$ & the soft nearest-neighbor projection onto the concept bank \\
$c_v$ & the neighbor mean of an unnamed observed variable \\
$\mathcal{R}$ & the regularizer of Eq.~\ref{eq:reg} \\
$\mathcal{L}$, $\mathcal{L}_0$ & the objective of Eq.~\ref{eq:objective}, and the objective without the regularizer \\
$\hat{\mathcal{A}}(G, X)$ & the estimated feasible set of Definition~\ref{def:relations} \\
$\omega$, $\mathcal{S}$ & the weight network on the terms of Eq.~\ref{eq:objective}, and the sign-consistency penalty \\
\midrule
\multicolumn{2}{@{}l}{\textit{Translation via prefix adaptation}} \\
$z_v, s_v$ & the adapter input of a variable, and its neighborhood scalars \\
$x_1, \ldots, x_5$ & the five input vectors of the adapter \\
$P_v, m, d_p$ & the prefix of a variable, its length and the dimension of a prefix vector \\
$q$, $q_i$, $q_\gamma$, $\gamma$ & the adapter, its five networks, its gate network and the gates \\
$W^K_j, W^V_j$, $\mathbf{K}_j, \mathbf{V}_j$ & the key and value matrices of layer $j$, and the keys and values they produce \\
$t_v$ & the prompt text around $v$ \\
$\phi_{\mathrm{LM}}$, $\bar\phi_{\mathrm{LM}}$ & the token embeddings of the language model, and their mean over a text \\
$\hat y$, $\theta$ & the written name, and the parameters of the adapter and the low-rank update \\
$\mathcal{L}_{\mathrm{NLL}}, \mathcal{L}_{\mathrm{ctr}}, \mathcal{L}_{\mathrm{sib}}$ & the likelihood, contrastive and sibling terms of Eq.~\ref{eq:plugin-loss} \\
$\lambda_{\mathrm{sim}}, \lambda_{\mathrm{ctr}}, \lambda_{\mathrm{sib}}$ & the weights of the terms of Eq.~\ref{eq:plugin-loss} \\
\midrule
\multicolumn{2}{@{}l}{\textit{Conditional mutual information (Appendices~\ref{apx:cmi-vectors} and~\ref{apx:cmi-objective})}} \\
$Z$, $X^e_v$ & a standard Gaussian vector, and the Gaussian model of an assignment (Definition~\ref{def:gauss-model}) \\
$I(a; b \mid S)$, $\hat I$ & the conditional mutual information, and its estimate by CCMI \\
$\tau_{ab \cdot S}$, $\xi_{ab \cdot S}$ & the tested and signed $\psi$ of $\hat I$ of Eq.~\ref{eq:cmi-target}, and its sign \\
$\beta_{|S|}$ & the threshold of the test for a conditioning set of size $|S|$ \\
$S_{ab}$ & the conditioning set $(\mathrm{pa}(a) \cup \mathrm{pa}(b)) \setminus \{a, b\}$ of the conditional term \\
$w$ & the Theiler window of serial rows \\
\end{longtable}
\endgroup

\section{Case Studies}
\label{apx:cases}
\makeatletter\setlength{\@fptop}{0pt plus 1fil}\setlength{\@fpsep}{16pt plus 2fil}\setlength{\@fpbot}{0pt plus 1fil}\makeatother

Figure~\ref{fig:examples} compares the names that three baselines and our method wrote for the same variables. The variables are three questionnaire latent variables and three unnamed channels of one drive. Table~\ref{tab:case-worked} then shows one latent variable of DASS in full. The four figures that follow show our own outputs together with the context the model received. They cover two questionnaires, one vehicle drive and one real robot log.

\begin{figure}[!ht]\centering
\includegraphics[width=\textwidth]{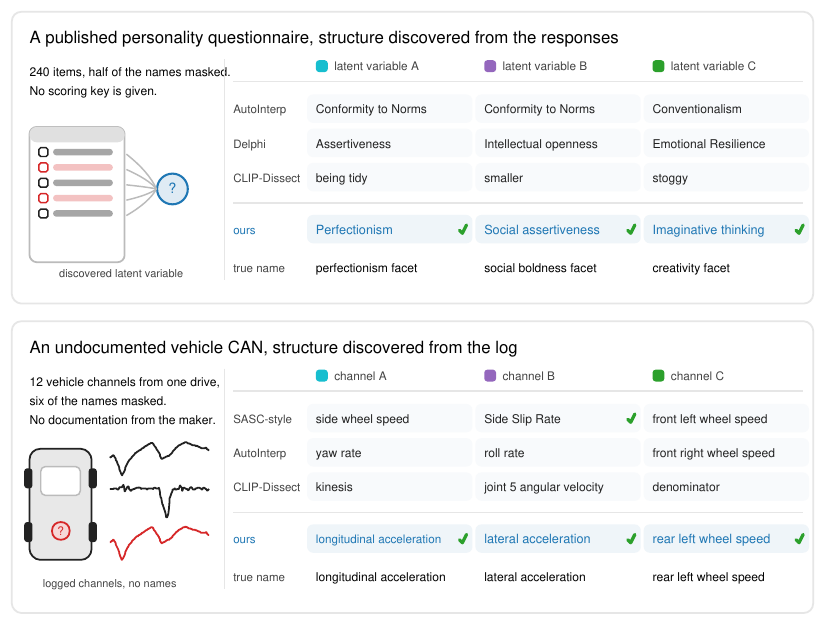}
\caption{A questionnaire with three unnamed latent variables at 50\%
masking, and one drive with three of its unnamed
channels at 50\%. \textbf{Left}, the data. \textbf{Right}, what existing
methods (above) and ours (below) wrote, with the true name underneath. A
tick marks an answer the identification judge accepted.}
\label{fig:examples}
\end{figure}

Table~\ref{tab:case-worked} shows one latent variable at 50\% masking.
The first block holds one neighbor line per item the latent variable
causes, with \texttt{(label hidden)} where the item is unnamed. The second block is the evidence line, which lists the named items that the latent variable causes. It also gives the correlations of the factor score of the latent variable with the factor scores of the other latent variables of the scale. The last two lines are the name \ours{} wrote and
the true name.

\begin{table}[!ht]\centering\fontsize{9.5}{11.5}\selectfont\renewcommand{\arraystretch}{1.1}
\caption{One latent variable end to end at 50\% masking, printed with the context the model received. It causes 4 items, 2 of them named and 2 unnamed.}
\label{tab:case-worked}
\begin{tabular}{@{}p{\textwidth}@{}}
\toprule
\textbf{Causal neighbors given to the model}\\
\hspace*{1em}- EFFECT: it causes the item "I experienced breathing difficulty (eg, excessively rapid breathing, breathlessness in the absence of physical exertion)."\\
\hspace*{1em}- EFFECT: it causes the item "(label hidden)"\\
\hspace*{1em}- EFFECT: it causes the item "(label hidden)"\\
\hspace*{1em}- EFFECT: it causes the item "I experienced trembling (eg, in the hands)."\\[2pt]
\textbf{Structural evidence given to the model}\\
\hspace*{1em}\begin{minipage}{0.96\textwidth}a factor with 4 items (2 hidden), visible items: "I experienced breathing difficulty (eg, excessively rapid breathing, breathlessness in the absence of physical exertion).", "I experienced trembling (eg, in the hands).". its factor score correlates +0.73 with the factor of "I found myself in situations that made me so anxious I was most relieved when they ended." / "I had a feeling of faintness.", +0.68 with the factor of "I found it difficult to relax." / "I found it hard to wind down.", +0.52 with the factor of "I felt down-hearted and blue." / "I was unable to become enthusiastic about anything.". cue phrases decoded from its data: feeling nervous, nervousness. in the constrained embedding its vector lies closest to "I experienced trembling (eg, in the hands)." (cos 0.94), "I felt that I was using a lot of nervous energy." (cos 0.92). its hidden items read from the data as: feeling nervous, trembling; feeling nervous, nervousness.\end{minipage}\\[2pt]
\midrule
\textbf{\ours{} writes} \quad Anxiety symptoms\\
\textbf{True name} \quad the DASS clinical scale: anxiety\\
\bottomrule\end{tabular}\end{table}

\begin{figure}[!ht]\centering
\makebox[\textwidth][c]{\includegraphics[width=1.08\textwidth]{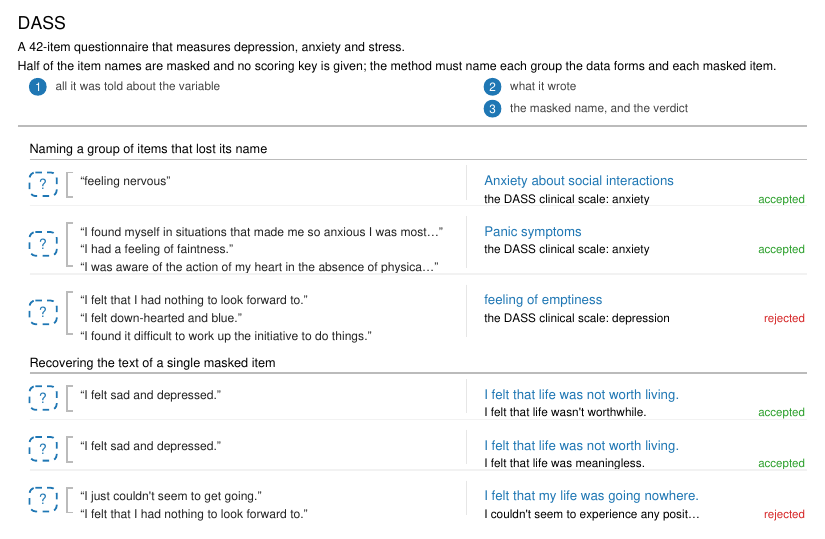}}
\caption{Cases from DASS at 50\% masking. The upper block covers discovered latent variables and the lower block covers masked items. Each row shows
what the model received, the name it wrote, and the reference name with
the judge's verdict.}
\label{fig:case-dass}
\end{figure}

\begin{figure}[!ht]\centering
\makebox[\textwidth][c]{\includegraphics[width=1.08\textwidth]{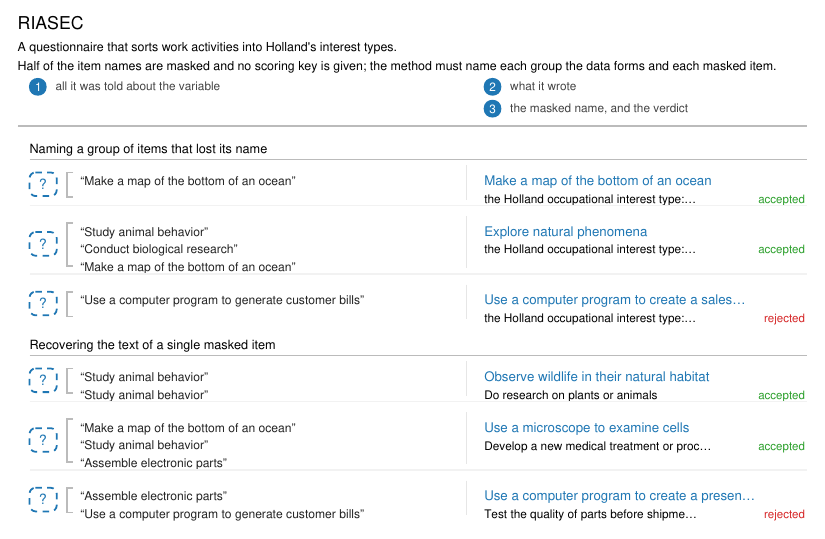}}
\caption{Cases from RIASEC at 50\% masking, in the layout of
Figure~\ref{fig:case-dass}.}
\label{fig:case-riasec}
\end{figure}

\begin{figure}[!ht]\centering
\includegraphics[width=\textwidth]{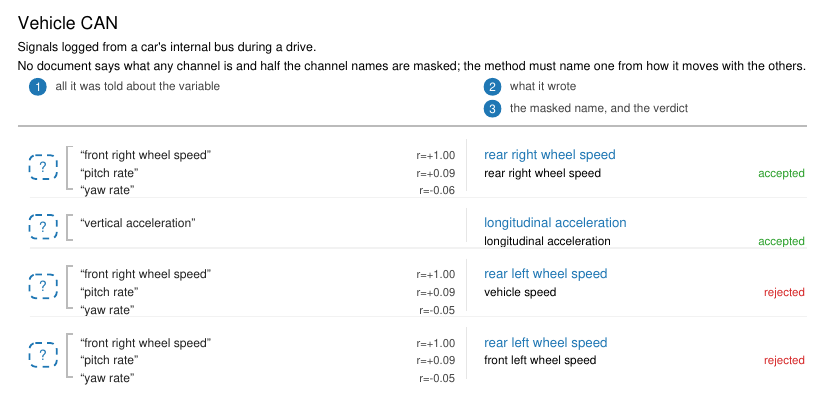}
\caption{Cases from one vehicle drive at 50\% masking. Each row shows the named neighbors of the unnamed channel in the graph, the name written, and the true name with the judge's verdict. For a channel without named neighbors, the row shows the channels it moves with.}
\label{fig:case-can}
\end{figure}

\begin{figure}[!ht]\centering
\includegraphics[width=\textwidth]{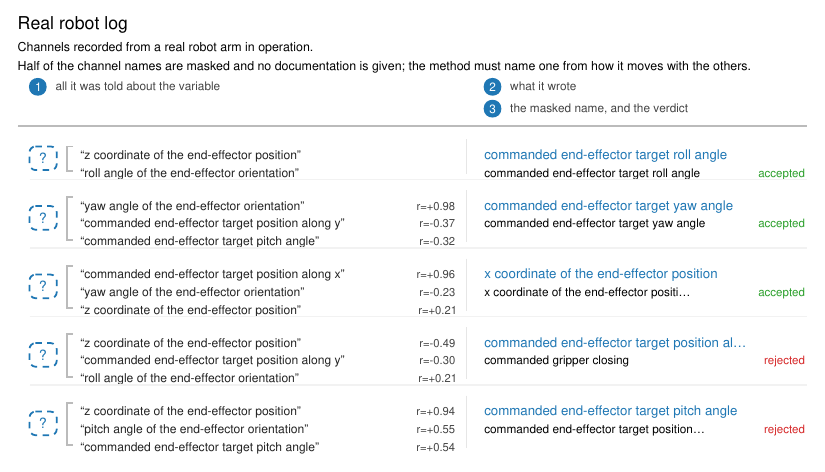}
\caption{Cases from a real robot log at 50\% masking, in the layout of
Figure~\ref{fig:case-can}.}
\label{fig:case-real}
\end{figure}

Figure~\ref{fig:case-can} shows a failure mode that the aggregate numbers
hide. On a straight drive, the speed channels have a pairwise Pearson correlation of $+1.00$, so nothing in the data separates them. Two of the three unnamed speed channels therefore receive the same name. One is then counted as an
identification and two as misses.

\section{Theoretical Background}
\label{apx:theory}
\makeatletter\setlength{\@fptop}{0pt plus 1fil}\setlength{\@fpsep}{8pt plus 2fil}\setlength{\@fpbot}{0pt plus 1fil}\makeatother

Every cited statement below is restated from its source in the notation of
Section~\ref{sec:formulation}, and its proof is in the cited work.
Definitions~\ref{def:relations} and~\ref{def:gauss-model} and Propositions~\ref{prop:feasible} and~\ref{prop:cmi} are ours. Let $G=(V,E)$ be a directed acyclic graph and $P$ a distribution over $V$ with covariance matrix $\Sigma$, and let $\Sigma_{A,B}$ denote the submatrix of $\Sigma$ with rows $A$ and columns $B$. For disjoint $A, B, S \subseteq V$ we write
$(A \indep B \mid S)_G$ when $S$ d-separates $A$ from $B$ in $G$, and
$(A \indep B \mid S)_P$ when $A$ and $B$ are independent given $S$ in $P$.

\subsection{The graph and the distribution}
\label{apx:theory-dist}

\begin{definition}[Markov and faithfulness; \citealp{Pearl2009, SGS2000}]
\label{def:markov}
$P$ is Markov relative to $G$ if $(A \indep B \mid S)_G \Rightarrow (A \indep B \mid S)_P$ for all disjoint $A, B, S \subseteq V$, and faithful to $G$ if the converse holds for all such triples. Under both,
\begin{equation}
\{(A,B,S): (A \indep B \mid S)_P\} \;=\; \{(A,B,S): (A \indep B \mid S)_G\}.
\label{eq:markov}
\end{equation}
\end{definition}

\begin{proposition}[Gaussian case; \citealp{Sullivant2010}, Prop.~2.2 and the remark after it]
\label{prop:gauss}
Let $P = \mathcal{N}(\mu, \Sigma)$ with $\Sigma$ positive definite, and let $A, B, S$ be disjoint. For two variables $a$ and $b$, the covariance given $S$ and the partial correlation given $S$ are
\begin{equation}
\Sigma_{ab \cdot S} = \Sigma_{a,b} - \Sigma_{a,S}\, \Sigma_{S,S}^{-1}\, \Sigma_{S,b},
\qquad
\operatorname{pcor}(a, b \mid S) = \frac{\Sigma_{ab \cdot S}}{\sqrt{\Sigma_{aa \cdot S}\; \Sigma_{bb \cdot S}}} ,
\label{eq:pcor}
\end{equation}
with $\Sigma_{ab \cdot \emptyset} = \Sigma_{a,b}$ for an empty $S$. Then
\begin{equation}
(A \indep B \mid S)_P
\;\Longleftrightarrow\;
\operatorname{rank} \Sigma_{A \cup S,\, B \cup S} = |S|
\;\Longleftrightarrow\;
\operatorname{pcor}(a, b \mid S) = 0 \;\; \forall a \in A,\, b \in B .
\end{equation}
\end{proposition}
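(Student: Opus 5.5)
The plan is to reduce all three conditions to a single matrix identity: the vanishing of the conditional cross-covariance block
\[
\Sigma_{AB\cdot S} \;=\; \Sigma_{A,B} - \Sigma_{A,S}\,\Sigma_{S,S}^{-1}\,\Sigma_{S,B}.
\]
Its $(a,b)$ entry is exactly the scalar $\Sigma_{ab\cdot S}$ of Eq.~\ref{eq:pcor}. Since $\Sigma$ is positive definite, every principal submatrix is positive definite. In particular $\Sigma_{S,S}$ is invertible, so the block is well defined. I would prove two equivalences: independence $\Leftrightarrow$ $\Sigma_{AB\cdot S}=0$, and rank condition $\Leftrightarrow$ $\Sigma_{AB\cdot S}=0$. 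Chaining them gives the statement.

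\emph{Step 1 (independence).} I would use the standard Gaussian conditioning formula. Given $S=s$, the vector $(A,B)$ is Gaussian with a mean that is affine in $s$ and with covariance equal to the Schur complement
\[
\Sigma_{A\cup B,\,A\cup B} - \Sigma_{A\cup B,\,S}\,\Sigma_{S,S}^{-1}\,\Sigma_{S,\,A\cup B}.
\]
This covariance does not depend on $s$, and its off-diagonal block is $\Sigma_{AB\cdot S}$. A jointly Gaussian vector has independent subvectors if and only if their cross-covariance vanishes. One direction is immediate. For the other, a zero cross-covariance makes the density factor, because the inverse of a block-diagonal covariance is block-diagonal. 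Since the conditional covariance is the same for every $s$, conditional independence for almost every $s$ is equivalent to $\Sigma_{AB\cdot S}=0$. This is the step needing the most care, because conditional independence is a statement about all conditioning values. The constancy of the conditional covariance in $s$ is what makes the reduction clean, and I would state it explicitly.

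\emph{Step 2 (partial correlations).} The condition $\Sigma_{AB\cdot S}=0$ means $\Sigma_{ab\cdot S}=0$ for every $a\in A$ and $b\in B$. The denominators $\Sigma_{aa\cdot S}$ and $\Sigma_{bb\cdot S}$ are Schur complements of positive definite matrices, hence strictly positive. So $\operatorname{pcor}(a,b\mid S)$ is well defined and vanishes exactly when $\Sigma_{ab\cdot S}$ does.

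\emph{Step 3 (rank).} Order the rows as $(A,S)$ and the columns as $(B,S)$, so that
\[
\Sigma_{A\cup S,\,B\cup S}=\begin{pmatrix}\Sigma_{A,B} & \Sigma_{A,S}\\ \Sigma_{S,B} & \Sigma_{S,S}\end{pmatrix}.
\]
Multiply on the left by the invertible matrix $\begin{pmatrix} I & -\Sigma_{A,S}\Sigma_{S,S}^{-1}\\ 0 & I\end{pmatrix}$ and on the right by the invertible matrix $\begin{pmatrix} I & 0\\ -\Sigma_{S,S}^{-1}\Sigma_{S,B} & I\end{pmatrix}$. The result is the block-diagonal matrix $\operatorname{diag}(\Sigma_{AB\cdot S},\,\Sigma_{S,S})$. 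Invertible multiplications preserve rank, so
\[
\operatorname{rank}\Sigma_{A\cup S,\,B\cup S} \;=\; |S| + \operatorname{rank}\Sigma_{AB\cdot S}.
\]
Hence the rank equals $|S|$ if and only if $\Sigma_{AB\cdot S}=0$.

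Combining Steps 1--3 gives the three-way equivalence. The main obstacle is only the measure-theoretic bookkeeping in Step 1; Steps 2 and 3 are routine linear algebra.
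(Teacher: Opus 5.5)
Your proof is correct and follows the standard argument behind the cited result. The paper gives no proof of its own and defers to \citet{Sullivant2010}, where independence is read off the off-diagonal Schur-complement block $\Sigma_{AB\cdot S}$ of the conditional covariance, combined with $\operatorname{rank}\Sigma_{A\cup S,\,B\cup S} = |S| + \operatorname{rank}\Sigma_{AB\cdot S}$; your partial-correlation step is the entrywise restatement, well defined because $\Sigma_{aa\cdot S}, \Sigma_{bb\cdot S} > 0$.
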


\emph{Interpretation.} Under the Markov and faithfulness assumptions, Eq.~\ref{eq:markov} turns every d-separation of $G$ into an independence of $P$ and every d-connection into a dependence. The graph therefore determines which pairs of variables are independent given a set $S$ and which are dependent. In the Gaussian case each of these statements is a condition on $\Sigma$ alone, namely whether the partial correlation of Eq.~\ref{eq:pcor} vanishes. The objective imposes them for two conditioning sets, the empty set and the parents (Appendix~\ref{apx:theory-objective}).

\subsection{Latent variables and rank}
\label{apx:theory-rank}

\begin{definition}[Trek and t-separation; \citealp{Sullivant2010}, Defs.~2.6 and~2.7]
\label{def:trek}
A trek from $u$ to $v$ is a pair of directed paths $(P_1, P_2)$ with sinks
$u$ and $v$ and a common source, either path possibly a single vertex. For
subsets $A, B, C_A, C_B$ of $V$, not necessarily disjoint, $(C_A, C_B)$
t-separates $A$ from $B$ if every trek from $A$ to $B$ contains a vertex of
$C_A$ on $P_1$ or a vertex of $C_B$ on $P_2$.
\end{definition}

\begin{theorem}[Trek separation; \citealp{Sullivant2010}, Thm.~2.8]
\label{thm:trek}
For every covariance matrix of a linear structural equation model with
graph $G$ and mutually independent noise terms,
\begin{equation}
\operatorname{rank} \Sigma_{A,B} \;\le\; \min \big\{ |C_A| + |C_B| : (C_A, C_B) \text{ t-separates } A \text{ from } B \big\},
\label{eq:trek}
\end{equation}
with equality for generic parameters.
\end{theorem}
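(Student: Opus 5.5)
The plan follows the trek expansion of a linear structural equation model, in the spirit of \citet{Sullivant2010}. Write the model as $V = \Lambda^\top V + \varepsilon$, with $\Lambda$ supported on the edges of $G$ and $\varepsilon$ having a diagonal covariance $\Omega$. The covariance is then
\begin{equation}
\Sigma = (I-\Lambda)^{-\top}\,\Omega\,(I-\Lambda)^{-1}.
\end{equation}
Expanding $(I-\Lambda)^{-1}=\sum_k \Lambda^k$, which is finite because $G$ is acyclic, gives the trek rule. Each entry $\Sigma_{a,b}$ is a sum over treks from $a$ to $b$. Each summand is the product of the edge weights on both directed paths, times $\omega_{s}$ at the common source $s$. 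So $\Sigma_{A,B} = M_A^\top \Omega M_B$, where $M = (I-\Lambda)^{-1}$ collects the directed-path sums.

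\emph{Upper bound.} Fix a t-separating pair $(C_A, C_B)$. I would split each trek from $A$ to $B$ at its \emph{first} vertex in $C_A$ on $P_1$, searching from the source, if one exists. Otherwise I split it at its first vertex in $C_B$ on $P_2$; t-separation guarantees that one of the two exists. This partitions the trek sum into two groups.

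The first group factors through $C_A$. The part of $P_1$ from $c\in C_A$ down to $a$ is a directed path. The remaining piece is a trek from $c$ to $b$ whose left path avoids $C_A$ above $c$. So this group equals a product $F_1^\top K_1$, with $F_1$ having $|C_A|$ rows.

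The second group similarly equals $K_2^\top F_2$, with $F_2$ having $|C_B|$ rows. Here $K_2$ collects treks whose left path avoids $C_A$ entirely.

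Hence $\Sigma_{A,B}$ is a sum of a matrix of rank at most $|C_A|$ and one of rank at most $|C_B|$. This gives $\operatorname{rank}\Sigma_{A,B}\le |C_A|+|C_B|$, and minimising over pairs gives Eq.~\ref{eq:trek}. The delicate point is that the splitting rule must be a well-defined bijection onto pairs of path pieces, so that no trek is counted twice. Choosing the first hit from the source makes the remainder condition (avoiding $C_A$ above the split) exactly characterise the pieces.

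\emph{Generic equality.} This is the main obstacle. Let $r$ be the min-cut value. By a Menger-type duality on the doubled graph, I would show that $r$ equals the maximum number of vertex-disjoint treks from $r$ elements of $A$ to $r$ elements of $B$, disjoint in the "no sided intersection" sense. The doubled graph has a copy of $G$ with reversed edges feeding into a copy of $G$, joined at each vertex through a source edge carrying $\omega_v$. T-separating pairs correspond to vertex cuts in it, and treks correspond to directed paths.

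Then I would apply the Lindström--Gessel--Viennot lemma to the $r\times r$ minor $\det\Sigma_{A',B'}$. It is a signed sum over families of non-intersecting trek systems. A single such family contributes a monomial in the edge weights and $\omega$'s that no other family can cancel, because distinct systems give distinct monomials. Thus the minor is a nonzero polynomial in the parameters. It is therefore nonzero off a measure-zero algebraic set, which gives rank $\ge r$ generically.

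The care needed lies in checking that the path-system cancellation argument of LGV still applies when paths meet only on the same side. I would use the doubled-graph encoding so that ordinary vertex-disjointness there is exactly the required condition.
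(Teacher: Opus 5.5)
Your upper bound is correct. Cutting each trek at the first vertex of $C_A$ on $P_1$, or else at the first vertex of $C_B$ on $P_2$, is a bijection, and concatenations stay paths because $G$ is acyclic. You get $\Sigma_{A,B}=M_{C_A,A}^\top K_1+K_2^\top M_{C_B,B}$ at every parameter value, which is more direct than deriving the bound from identically vanishing minors. Your Menger step on the doubled graph is also right.

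\textbf{The gap is the claim that ``distinct systems give distinct monomials''.} In the doubled graph both copies of an edge $u\to v$ carry the same parameter $\lambda_{uv}$. So the weight of a path system records only how often each edge of $G$ is used, not on which side. This already fails for a single trek. Let $G$ have edges $s\to x$, $s\to y$, $x\to v$, $y\to v$, $v\to a$, $v\to b$. The trek with left path $s\,x\,v\,a$ and right path $s\,y\,v\,b$ and the trek with left path $s\,y\,v\,a$ and right path $s\,x\,v\,b$ are distinct. Yet both have weight $\omega_s\lambda_{sx}\lambda_{sy}\lambda_{xv}\lambda_{yv}\lambda_{va}\lambda_{vb}$. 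In general, whenever the two sides of a trek meet below its top, exchanging their upper segments preserves the weight.

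For $r=1$ this is harmless, since coefficients simply add. For $r\ge 2$, each surviving system carries the sign of its permutation $A'\to B'$. Your argument gives no reason why two systems with the same monomial cannot have opposite signs and cancel. Proving that they never do is a much more delicate statement than you need. The LGV expansion in the doubled graph is itself fine with repeated weights, because the tail-swapping involution preserves the multiset of edges. The problem is only the non-cancellation of the terms that survive it.

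The repair, essentially how the cited proof proceeds, is to apply Cauchy--Binet before LGV:
\[
\det\Sigma_{A',B'}=\sum_{S\subseteq V,\ |S|=r}\Big(\prod_{s\in S}\omega_s\Big)\det M_{S,A'}\,\det M_{S,B'} .
\]
Distinct $S$ give distinct squarefree monomials in the $\omega$'s, so nothing cancels across different $S$. Each factor $\det M_{S,A'}$ is an LGV sum over vertex-disjoint path systems in $G$ itself, where every edge is a separate variable. Such a system is determined by its edge set, so the factor is a nonzero polynomial exactly when a system $S\to A'$ exists. The polynomial ring is an integral domain, so $\det\Sigma_{A',B'}\not\equiv 0$ exactly when some $S$ admits vertex-disjoint systems $S\to A'$ and $S\to B'$. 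That is exactly a trek system from $A'$ to $B'$ without sided intersection, which your Menger step supplies.
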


\emph{Interpretation.} For generic parameters Eq.~\ref{eq:trek} holds with equality, so $\operatorname{rank} \Sigma_{A,B} = k < \min(|A|, |B|)$ exactly when the smallest pair $(C_A, C_B)$ that t-separates $A$ from $B$ has $|C_A| + |C_B| = k$. Such a pair may contain latent variables. A rank deficiency among observed variables can therefore reveal latent variables through which every trek between them passes, and Appendix~\ref{apx:theory-discovery} states when RLCD identifies them. The rank tests of Appendix~\ref{apx:discovery} search for such deficiencies at every cardinality up to a maximum.

\subsection{What structure learning returns}
\label{apx:theory-discovery}

\paragraph{Identifiability of the recovered latent structure.}

\begin{definition}[Latent linear causal model; \citealp{Dong2024}, Def.~1 and Asm.~1]
\label{def:llcm}
$V = L \cup O$ with latent variables $L$ and observed variables $O$, and
\begin{equation}
V_i \;=\; \sum_{V_j \in \mathrm{pa}(V_i)} a_{ij} V_j + \varepsilon_i,
\end{equation}
where $a_{ij}$ is the causal coefficient from $V_j$ to $V_i$ and $\varepsilon_i$ the noise term. The noise terms are mutually independent, as in the linear model of Theorem~\ref{thm:trek}. $P$ is rank faithful to $G$ if Eq.~\ref{eq:trek} holds with equality for all $A, B \subseteq V$.
\end{definition}

\begin{definition}[Pure children and atomic cover; \citealp{Dong2024}, Defs.~4 and~5]
\label{def:cover}
The pure children of a set $\mathbf{V} \subseteq V$ are
$\mathrm{PCh}(\mathbf{V}) = \{v : \mathrm{pa}(v) \neq \emptyset,\ \mathrm{pa}(v) \subseteq \mathbf{V}\}$. For a set $\mathcal{C}$ of sets let
$\|\mathcal{C}\| = |\bigcup_{C \in \mathcal{C}} C|$. Let $l$ of the
variables of $\mathbf{V}$ be latent. $\mathbf{V}$ is an atomic cover if it is a single observed variable, or if the three conditions below hold. (i) There is a set $\mathcal{C}$ of atomic covers with $\bigcup_{C \in \mathcal{C}} C \subseteq \mathrm{PCh}(\mathbf{V})$ and $\|\mathcal{C}\| \ge l + 1$. (ii) There is a set $\mathcal{N}$ of covers with $(\bigcup_{N \in \mathcal{N}} N) \cap (\bigcup_{C \in \mathcal{C}} C) = \emptyset$, every element of $\bigcup_{N \in \mathcal{N}} N$ a neighbor of $\mathbf{V}$, $\mathcal{N}$ and $\mathcal{C}$ d-separated by $\mathbf{V}$, and $\|\mathcal{N}\| \ge l + 1$. (iii) $\mathbf{V}$ has no partition into two atomic covers.
\end{definition}

\begin{condition}[\citealp{Dong2024}, Conds.~1 and~2]
\label{cond:rlcd}
(a) Every latent variable belongs to at least one atomic cover, and for
each atomic cover that contains a latent variable, none of its children is
adjacent to any of its neighbors. (b) If a set $C$ consists of colliders
of two atomic covers $\mathbf{V}_1, \mathbf{V}_2$, $A$ is the minimal set
that d-separates $\mathbf{V}_1$ from $\mathbf{V}_2$, and a latent variable
lies in $\mathbf{V}_1 \cup \mathbf{V}_2 \cup C \cup A$, then
$|C| + |A| \ge |\mathbf{V}_1| + |\mathbf{V}_2|$.
\end{condition}

\begin{theorem}[Identifiability of RLCD; \citealp{Dong2024}, Thm.~12]
\label{thm:rlcd}
Let $(G, P)$ be a latent linear causal model of Definition~\ref{def:llcm}
with $P$ rank faithful to $G$, and let $G$ satisfy
Condition~\ref{cond:rlcd}. Then RLCD asymptotically identifies the Markov equivalence class of $\mathcal{O}_{\min}(\mathcal{O}_s(G))$. Here the skeleton operator $\mathcal{O}_s$ introduces edges that involve latent variables, and the minimal-graph operator $\mathcal{O}_{\min}$ absorbs redundant latent variables. The rank information over $O$ is invariant to both operators.
\end{theorem}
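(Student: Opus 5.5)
The statement is cited from \citet{Dong2024}, so the plan is to reconstruct the argument in the paper's notation rather than derive anything new. It proceeds in three stages. First, reduce everything RLCD observes to rank information: the algorithm only queries ranks of cross-covariance submatrices $\Sigma_{A,B}$ for $A,B\subseteq O$. Theorem~\ref{thm:trek} with rank faithfulness (Definition~\ref{def:llcm}) turns each such rank into the minimal t-separating size $\min\{|C_A|+|C_B|\}$ in $G$. Asymptotically the rank tests are consistent, so in the limit RLCD has exact access to this combinatorial t-separation oracle over observed sets. The last sentence of the theorem, invariance of the rank information over $O$ under $\mathcal{O}_s$ and $\mathcal{O}_{\min}$, will be checked first. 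The check shows that adding latent-involving skeleton edges and absorbing redundant latents preserves every minimal t-separator size among observed sets. This fixes the best possible target: no method using ranks can distinguish $G$ from $\mathcal{O}_{\min}(\mathcal{O}_s(G))$.

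Second, show that the first phase of RLCD recovers the skeleton among observed variables and the atomic covers. For the observed-skeleton part I would use Proposition~\ref{prop:gauss}: conditional independences correspond to rank conditions $\operatorname{rank}\Sigma_{A\cup S,B\cup S}=|S|$. Hence a PC-style phase over rank-based CI tests returns the CI skeleton, grouping variables connected through latents into clusters. Within each cluster, I would prove by induction on the cardinality $k$ of the rank deficiency that atomic covers of size $k$ are detected. The detection criterion is a set of $k+1$ pure children (condition (i) of Definition~\ref{def:cover}) and $k+1$ neighbors (condition (ii)) whose cross-covariance has rank exactly $k$. Such a cover t-separates them, and by genericity no smaller separator exists. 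Condition~\ref{cond:rlcd}(a) is used here to exclude the spurious deficiencies created when a child is adjacent to a neighbor. Condition~\ref{cond:rlcd}(b) is used to exclude false rank drops caused by colliders. Once a cover is found, its children are replaced by the cover, the new graph is shown to still satisfy the conditions, and the induction proceeds to the next level.

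Third, orient edges among the recovered covers and observed variables using rank-based CI and v-structures. The result is the Markov equivalence class of the minimal graph, and no more, since orientations within the class are rank-indistinguishable by the first stage. The main obstacle is the inductive step of the second stage. The worry is that a rank deficiency of size $k$ witnessed by a false candidate cover could also be produced by a union of smaller, already-identified covers or by collider configurations. Ruling this out needs a careful case analysis of t-separators. For this step I would first try to show that any minimal t-separator of the pure-children and neighbor sets must lie inside the true cover $\mathbf{V}$, using condition (iii) of Definition~\ref{def:cover} to forbid splitting it. If that fails, I would follow \citet{Dong2024}'s proof directly.
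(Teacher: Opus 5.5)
The paper gives no proof of its own here and defers to \citet{Dong2024}. Your outline follows the phases of RLCD: a rank oracle from Theorem~\ref{thm:trek} and rank faithfulness, rank invariance of $\mathcal{O}_s$ and $\mathcal{O}_{\min}$ to fix the target, a grouping stage, an iterative cover search in increasing $k$, then orientation. The gap is that the step carrying the content, correctness of the cover search, is not supplied, and the version you spell out is not the test RLCD runs.

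RLCD tests $\operatorname{rank}\Sigma_{\mathcal{C}\cup\mathcal{X},\,\mathcal{X}\cup R}=k$ with $|\mathcal{C}|+|\mathcal{X}|=k+1$ (Eq.~\ref{eq:rank}). The observed members $\mathcal{X}$ of the candidate cover sit on both sides, which forces each $x\in\mathcal{X}$ into every t-separator through the trivial trek from $x$ to itself. The other side is the whole remainder $R$ of the group, not a set of neighbors. Your criterion, $k+1$ pure children against $k+1$ neighbors, fails in both directions.
\begin{itemize}
\item For a cover with $l<k$ latent members, Definition~\ref{def:cover}(i) guarantees only $l+1$ pure children. The set your test needs may not exist, so such a cover can be missed.
\item The neighbors of a cover not yet found are unknown to the algorithm, so your test would have to run against some chosen set. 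There a rank drop can come from a bottleneck that is not a common parent. Take a tree in which a latent $P$ is the parent of latents $L_1$, $L_2$ and $M$, with $C_1$ a child of $L_1$, $C_2$ a child of $L_2$, and $N_1,N_2$ children of $M$. Give each latent enough further children to meet Condition~\ref{cond:rlcd}. Then $(\{P\},\emptyset)$ t-separates $\{C_1,C_2\}$ from $\{N_1,N_2\}$, so that block has rank $1$ although $C_1$ and $C_2$ have different latent parents. Against a remainder containing further children of $L_1$ and $L_2$ the rank is $2$. Testing against all of $R$ is what rules this out.
\end{itemize}

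The lemma you propose for the main obstacle does not close this. The claim that every minimal t-separator lies inside the true cover $\mathbf{V}$ presupposes the cover, so it bears at most on completeness. The false positives you worry about are a soundness question. From a deficiency on $(\mathcal{C}\cup\mathcal{X},\,\mathcal{X}\cup R)$ alone, one must produce an atomic cover made of $\mathcal{X}$ and $k-|\mathcal{X}|$ latent variables with $\mathcal{C}$ among its pure children. That argument has to exploit three things:
\begin{itemize}
\item the search order, under which no deficiency of smaller cardinality remains in the current active set;
\item the exclusion of the measured descendants of $\mathcal{C}$ from $R$;
\item Condition~\ref{cond:rlcd}.
\end{itemize}
Condition (iii) of Definition~\ref{def:cover} says only that a cover does not split into two atomic covers. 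It says nothing about which side of a t-separator its vertices fall on, so it cannot do this work. This is the content of \citet[Theorem~8]{Dong2024}, which your fallback imports rather than replaces.

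Three further steps are asserted rather than shown.
\begin{itemize}
\item The phrase ``by genericity no smaller separator exists'' conflates two claims. Rank faithfulness equates the rank with the minimal t-separator size. That this minimum equals $k$ is a separate combinatorial fact about the treks through the cover, which genericity does not supply.
\item Your third stage cannot run CI tests or find v-structures on covers directly, since covers contain unobserved variables. Each d-separation statement about covers has to be rewritten as a rank constraint on observed descendants, and the soundness of that rewriting is a further lemma.
\item Proposition~\ref{prop:gauss} is the wrong tool for the first stage, since Definition~\ref{def:llcm} does not assume Gaussian noise. The equivalence of $\operatorname{rank}\Sigma_{A\cup S,B\cup S}=|S|$ with d-separation by $S$ follows instead from rank faithfulness together with the fact that $S$ d-separates $A$ from $B$ exactly when some split of $S$ t-separates $A\cup S$ from $B\cup S$.
\end{itemize}
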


\emph{Interpretation.} By Theorem~\ref{thm:rlcd}, RLCD recovers $\mathcal{O}_{\min}(\mathcal{O}_s(G))$ up to its Markov equivalence class, and a latent variable in it is identified up to its atomic cover, which its pure children and its neighbors jointly determine (Definition~\ref{def:cover}). A discovered latent variable is therefore identified only through the items it reaches, which is also how the evaluation refers it to a published latent variable (Appendix~\ref{apx:latent-ref}).

\paragraph{Score-based search over observed variables.}

\begin{proposition}[Correctness of BOSS; \citealp{BOSS}, Prop.~2]
\label{prop:boss}
Let $P$ belong to a curved exponential family and be causal Markov and
causal faithful to $G$, and let $X$ be drawn i.i.d.\ from $P$. Then BOSS
with its backward phase returns the Markov equivalence class of $G$ for
every initial permutation as the sample size grows.
\end{proposition}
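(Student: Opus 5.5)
The statement is the correctness of BOSS, a result cited from \citet{BOSS}. The plan reproduces its argument in the notation of Section~\ref{sec:formulation}. It has three steps: a consistent score, a permutation-to-DAG map, and a search that cannot stall short of the true class.

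\textbf{Step 1: a consistent score.} Because $P$ lies in a curved exponential family, the BIC score is \emph{locally consistent} as $n \to \infty$ (Chickering's result for such families). Adding an edge $u \to v$ raises the score in the limit exactly when $u \nindep v \mid \mathrm{pa}(v)$ in $P$, and lowers it otherwise. By Definition~\ref{def:markov}, Markov and faithfulness make these independencies exactly the d-separations of $G$. The score is also decomposable and score-equivalent. In the limit, the maximizers of the score are therefore exactly the members of the Markov equivalence class of $G$, and among DAGs containing $P$ the sparser ones score higher.

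\textbf{Step 2: from a permutation to a DAG.} For a permutation $\pi$, let $G_\pi$ be the DAG in which each $v$ takes as parents the score-optimal subset of its predecessors in $\pi$. Asymptotically this is the Grow--Shrink minimal set: the Markov boundary of $v$ among its predecessors. Hence $G_\pi$ is a minimal I-map of $P$ that is consistent with $\pi$. This is the step that links BOSS to the sparsest-permutation and GRaSP line of work.

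\textbf{Step 3: the search reaches a true member, from any start.} This is the main obstacle. BOSS moves single variables to their best positions (``best order'') until no move improves the score, then runs a backward (GES-style deletion) phase on $G_\pi$. I would argue in two parts.

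\begin{itemize}
\item First, a local optimum under BOSS's moves is at least a local optimum under tuck and covered-edge moves. To show this, I would exhibit each such move as a composition of single-variable relocations that do not decrease the score.
\item Second, I would invoke the GRaSP/ESP theorem. Under faithfulness, every permutation whose $G_\pi$ is a local optimum under these moves yields a DAG in the Markov equivalence class of $G$. The proof follows Chickering's transformational characterization: if $G_\pi$ were not in the class, some sequence of covered reversals and edge deletions would lead to a sparser I-map. The first strict improvement along that sequence would be reachable by one move, contradicting local optimality.
\end{itemize}

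\textbf{Combining the steps.} The backward phase then repairs any remaining superfluous edges. It is the backward stage of GES, whose consistency under a locally consistent score guarantees that it ends in the class of $G$. Since Steps 1 through 3 never use the initial permutation, the conclusion holds for every start.
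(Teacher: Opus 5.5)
Your Step~3, which you call the main obstacle, does not go through as argued, and the result does not need it. The covered-edge argument behind ESP and GRaSP does not say that every single-move local optimum lies in the class of $G$. What it gives is weaker. From a minimal I-map $G_\pi$ outside the class there is a sequence of covered-edge reversals along which the number of edges never increases, but it may stay constant for several steps before it drops. The first strict improvement can therefore sit behind several score-preserving moves, so it need not be one move away from $\pi$, and the local optimality of $\pi$ gives no contradiction. This is exactly why ESP and GRaSP search depth-first through ties.

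The same plateau problem defeats your first bullet. Writing an improving tuck as a chain of non-decreasing relocations only yields a strictly improving relocation from some intermediate permutation, not from $\pi$ itself, and it is not clear that such a chain exists. There is also an internal tension. If the best-order phase already reached the class of $G$, the backward phase would have no superfluous edges left to remove. That the statement explicitly includes the backward phase is the hint that the guarantee comes from that phase, not from the relocation moves.

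The paper gives no proof of its own and defers to \citet{BOSS}. The argument needs only your Steps~1 and~2 together with the backward-phase theorem of GES \citep{GES}, stated with its actual hypothesis. Under a locally consistent score and a faithful $P$, the backward phase started from any equivalence class that contains $P$ ends at the Markov equivalence class of $G$. In the limit it never takes a deletion that would make the state stop containing $P$, since such a deletion lowers the score. It also cannot stop at an I-map that is not a perfect map, since Meek's conjecture, proved by Chickering, supplies a deletion that raises the score.

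Your Step~2 supplies the hypothesis: whatever permutation the best-order phase stops at, $G_\pi$ is a minimal I-map of $P$ and so contains $P$. This is why the conclusion holds for every initial permutation, and indeed wherever the best-order phase stops. Your last paragraph nearly says this. State the hypothesis of the backward-phase theorem explicitly, and delete Step~3.
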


\emph{Interpretation.} On robot and vehicle channels, BOSS runs over the channels at two consecutive time steps (Appendix~\ref{apx:discovery}). Under its hypotheses, Proposition~\ref{prop:boss} guarantees the skeleton and the unshielded colliders of the graph over these columns. A cause precedes its effect, so an edge between the two steps runs from the earlier step, and the discovery step keeps the edges that run forward in time. On questionnaires, BOSS takes the place of the first stage of RLCD. It supplies an initial graph over the items and partitions them into groups, within which the rank tests of RLCD then run. The discovered graph $\hat G$ is the input of the constrained embedding optimization.

\subsection{From the graph to the objective}
\label{apx:theory-objective}
Eq.~\ref{eq:objective}, with $\delta$ the cosine similarity, imposes the lines of Eq.~\ref{eq:realization} for two conditioning sets and takes $\psi$ as the identity. With $S$ empty, $e_{v \cdot S} = e_v$. The independence line then applies to $\mathcal{P}_{\mathrm{ind}}$, the pairs d-separated by the empty set, which include the causes of a common effect. The dependence line applies to $\mathcal{P}_{\mathrm{dep}}$, the most strongly dependent trek-connected pairs, as the one-sided bound $|\cos(e_a, e_b)| \ge \kappa\,|\rho_{ab}|$. With $S$ the parents, $e_{v \cdot S}$ is the residual $r_v$ of Eq.~\ref{eq:residual}, and the conditional line applies to $\mathcal{P}_{\mathrm{con}}$. The objective imposes no relation for other conditioning sets.

The dependence $\rho_{ab}$ is the distance correlation \citep{DistanceCorrelation} signed by the Pearson correlation, and it vanishes under independence without the Gaussian assumption. On $\mathcal{P}_{\mathrm{ind}}$ it enters as the thresholded $\tilde\rho_{ab}$ of Appendix~\ref{apx:solver}. The target $\rho_{ab \cdot \mathrm{pa}}$ is the dependence between the residuals of $a$ and $b$ after regression on their own parents. When the noise enters additively, this residual is the noise term of the variable, so under a correct graph with mutually independent noise terms the residuals of any two variables are independent \citep{Peters2014} and the target vanishes. Where the measurements show a residual dependence, the conditional term matches the residual cosine similarity to it.

\begin{definition}[The estimated feasible set]
\label{def:relations}
With $\mathcal{P}_{\mathrm{ind}}$, $\mathcal{P}_{\mathrm{dep}}$,
$\mathcal{P}_{\mathrm{con}}$, $r_v$ and $\kappa$ as in
Eq.~\ref{eq:objective} and $\tilde\rho_{ab}$ the thresholded distance
correlation of Appendix~\ref{apx:solver}, the estimated feasible set $\hat{\mathcal{A}}(G, X)$ consists of the assignments with $e_v = e^\star_v$ on $V_{\mathrm{N}}$ and
\begin{equation}
\begin{aligned}
r_v &= 0 \;\; \forall v : \mathrm{pa}(v) \neq \emptyset, &
\cos(e_a, e_b) &= \tilde\rho_{ab} \;\; \forall (a,b) \in \mathcal{P}_{\mathrm{ind}}, \\
\cos(r_a, r_b) &= \rho_{ab \cdot \mathrm{pa}} \;\; \forall (a,b) \in \mathcal{P}_{\mathrm{con}}, &
|\cos(e_a, e_b)| &\ge \kappa\,|\rho_{ab}| \;\; \forall (a,b) \in \mathcal{P}_{\mathrm{dep}}.
\end{aligned}
\label{eq:relations}
\end{equation}
\end{definition}

\begin{proposition}
\label{prop:feasible}
Let $\mathcal{L}$ be the objective of Eq.~\ref{eq:objective} with $\delta$ the cosine similarity, with $\rho_{ab}$ on $\mathcal{P}_{\mathrm{ind}}$ taken as $\tilde\rho_{ab}$ and with positive term weights. Let $\mathcal{R}$ be the regularizer of Eq.~\ref{eq:reg} and $\mathcal{L}_0 = \mathcal{L} - \mathcal{R}$. Here $\cos(r_a, r_b)$ is taken on the unit-normalized residuals and is defined as $0$ when a residual vanishes. For every assignment $e$ with
$e_v = e^\star_v$ on $V_{\mathrm{N}}$,
\begin{equation}
\mathcal{L}_0(e) = 0 \;\Longleftrightarrow\; e \in \hat{\mathcal{A}}(G, X).
\end{equation}
The set $\hat{\mathcal{A}}(G, X)$ is nonempty only if
$\rho_{ab \cdot \mathrm{pa}} = 0$ on every pair of $\mathcal{P}_{\mathrm{con}}$,
since $r_v = 0$ on every child sets every residual cosine similarity to
$0$. The set is also nonempty only if the marginal lines can hold together with the anchors.
\end{proposition}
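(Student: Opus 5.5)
The plan is to read $\mathcal{L}_0$ as a sum of four nonnegative terms and to show that each term vanishes exactly when one line of Eq.~\ref{eq:relations} holds. With $\mathcal{R}$ removed and the anchors fixed, $\mathcal{L}_0(e)$ is the generation term $\sum_{v:\mathrm{pa}(v)\neq\emptyset}\|r_v\|^2$ plus three weighted averages. The first is of squares $(\cos(e_a,e_b)-\tilde\rho_{ab})^2$ over $\mathcal{P}_{\mathrm{ind}}$. The second is of squared hinges $[\kappa|\rho_{ab}|-|\cos(e_a,e_b)|]_+^2$ over $\mathcal{P}_{\mathrm{dep}}$. The third is of squares $(\cos(r_a,r_b)-\rho_{ab\cdot\mathrm{pa}})^2$ over $\mathcal{P}_{\mathrm{con}}$. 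Every summand is a square, so it is nonnegative, and the weights $\lambda/|\mathcal{P}|$ are positive. An empty pair set contributes nothing and imposes no condition, so I adopt the convention that an empty sum counts as zero. Hence $\mathcal{L}_0(e)=0$ if and only if every summand vanishes.

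Next I match the summands to the lines of Eq.~\ref{eq:relations}. The vanishing of $\|r_v\|^2$ is equivalent to $r_v=0$. The vanishing of $(\cos(e_a,e_b)-\tilde\rho_{ab})^2$ is equivalent to $\cos(e_a,e_b)=\tilde\rho_{ab}$. The vanishing of $(\cos(r_a,r_b)-\rho_{ab\cdot\mathrm{pa}})^2$ is equivalent to the conditional line, with the convention that the cosine of a vanishing residual is $0$. The hinge gives $[x]_+=0$ if and only if $x\le 0$, that is, $|\cos(e_a,e_b)|\ge\kappa|\rho_{ab}|$. Together with the anchoring condition, which is assumed on both sides, these equivalences give $\mathcal{L}_0(e)=0\Leftrightarrow e\in\hat{\mathcal{A}}(G,X)$.

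The one delicate point is well-definedness. The cosine $\cos(e_a,e_b)$ must be defined at every $e$, which needs $e_a,e_b\neq 0$. I would handle this either by restricting to nonzero vectors or by extending the cosine to $0$ at the origin in the same way as for residuals. The equivalences above hold pointwise under either convention.

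For the two necessary conditions I argue by contrapositive. Suppose $e\in\hat{\mathcal{A}}$. Every pair in $\mathcal{P}_{\mathrm{con}}$ consists of variables with parents, so $r_a=r_b=0$. By the stated convention $\cos(r_a,r_b)=0$, and the conditional line then forces $\rho_{ab\cdot\mathrm{pa}}=0$. Therefore, if some $\rho_{ab\cdot\mathrm{pa}}\neq0$, the set $\hat{\mathcal{A}}$ is empty.

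The second condition is immediate from the definition. Any member of $\hat{\mathcal{A}}$ satisfies the anchors and the two marginal lines at once. If no assignment with the fixed anchors meets those marginal lines, then $\hat{\mathcal{A}}$ is empty.

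I expect no real obstacle. The only care needed is to fix the cosine conventions so that every term is defined, and to note that the generation constraint and the conditional constraint interact only through the zero-residual convention.
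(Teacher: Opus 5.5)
Your proposal is correct and follows the paper's own argument: every term of $\mathcal{L}_0$ is a positively weighted sum of squares or squared hinges, each vanishing exactly when its line of Eq.~\ref{eq:relations} holds. The two necessary conditions then follow from the zero-residual convention and directly from the definition. Your remarks on empty pair sets and on the cosine of a zero vector are harmless refinements that the paper leaves implicit.
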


\begin{proof}
Each term of $\mathcal{L}_0$ is a sum or a mean, over one line of Eq.~\ref{eq:relations}, of squares or of squared hinges that vanish exactly when the line holds, and its weight is positive.

($\Leftarrow$) Let $e \in \hat{\mathcal{A}}(G, X)$. Every line of Eq.~\ref{eq:relations} holds, so every term of $\mathcal{L}_0$ is zero and $\mathcal{L}_0(e) = 0$.

($\Rightarrow$) Let $\mathcal{L}_0(e) = 0$. Every term is nonnegative, so every term is zero, and every line of Eq.~\ref{eq:relations} holds. Anchoring holds by assumption, so $e \in \hat{\mathcal{A}}(G, X)$.

For the first necessary condition, the generation line requires $r_v = 0$ for every variable with parents. Every residual cosine similarity is then $0$ by the convention above, so the conditional line holds only if $\rho_{ab \cdot \mathrm{pa}} = 0$ on $\mathcal{P}_{\mathrm{con}}$. The second necessary condition restates that the marginal lines are part of Eq.~\ref{eq:relations}.
\end{proof}

\emph{Interpretation.} Each marginal cosine relation is an entry of the Gram matrix of the unit-normalized assignment, and each conditional cosine relation is an entry of the Gram matrix of the normalized residuals. The relation terms therefore make the second step a constrained multidimensional scaling of the matrix of dependences onto the unit sphere \citep{Torgerson1952, KTA}, with the anchors held fixed. Since $\mathcal{L} = \mathcal{R}$ on $\hat{\mathcal{A}}(G,X)$, a minimizer of $\mathcal{L}$ with $\mathcal{L}_0 = 0$ lies in $\hat{\mathcal{A}}(G,X)$ and minimizes $\mathcal{R}$ over it. On finite data $\mathcal{L}_0$ does not reach zero, and a minimizer of $\mathcal{L}$ balances the squared violations of the lines of Eq.~\ref{eq:relations} against $\mathcal{R}$. Which member of $\mathcal{A}(G,X)$ carries the semantics of a variable is left to existing knowledge, which enters as $\mathcal{R}$ and as the language model (Section~\ref{sec:formulation}).

\subsection{The conditional mutual information of the vectors}
\label{apx:cmi-vectors}

Eq.~\ref{eq:objective}, with $\delta$ the cosine similarity, takes $\psi$ as the identity and measures the dependence by the distance correlation. With the conditional mutual information (CMI) as the dependence measure, $\psi$ takes an exact form, which follows from a Gaussian model of the vectors.

\begin{definition}[Gaussian model of an assignment]
\label{def:gauss-model}
For an assignment $e : V \to \mathbb{R}^d$, let $Z \sim \mathcal{N}(0, I_d)$ and $X^e_v = \langle e_v, Z \rangle$ for every $v \in V$. For $S \subseteq V$, let $e_{v \cdot S}$ be the residual of $e_v$ after its orthogonal projection onto $\operatorname{span}\{e_s : s \in S\}$, with $e_{v \cdot \emptyset} = e_v$.
\end{definition}

\begin{proposition}[CMI of the vectors]
\label{prop:cmi}
Let $a, b \in V$ and $S \subseteq V \setminus \{a, b\}$. With the cosine similarity taken as $0$ when a residual vanishes,
\begin{equation}
I\big(X^e_a; X^e_b \mid X^e_S\big) = -\tfrac{1}{2} \log\big(1 - \cos^2(e_{a \cdot S}, e_{b \cdot S})\big),
\label{eq:cmi-vectors}
\end{equation}
which is $+\infty$ when $|\cos(e_{a \cdot S}, e_{b \cdot S})| = 1$. With $\psi(I) = \sqrt{1 - e^{-2I}}$, it follows that
\begin{equation}
\psi\big(I(X^e_a; X^e_b \mid X^e_S)\big) = \big|\cos(e_{a \cdot S}, e_{b \cdot S})\big| .
\label{eq:cmi-psi}
\end{equation}
When the vectors of $S$ are linearly independent and neither residual vanishes, $\cos(e_{a \cdot S}, e_{b \cdot S})$ is the partial correlation of Eq.~\ref{eq:pcor} for the distribution of $X^e$, whose covariance matrix $\Sigma$ is the Gram matrix of the vectors.
\end{proposition}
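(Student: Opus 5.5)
The plan is to reduce everything to a statement about a jointly Gaussian vector whose covariance is a Gram matrix, and then apply the Gaussian conditional mutual information formula. By Definition~\ref{def:gauss-model}, $X^e = (X^e_v)_{v \in V}$ is a linear image of $Z \sim \mathcal{N}(0, I_d)$. It is therefore jointly Gaussian with mean zero and covariance $\operatorname{Cov}(X^e_u, X^e_v) = \mathbb{E}\langle e_u, Z\rangle\langle e_v, Z\rangle = \langle e_u, e_v\rangle$, that is, the Gram matrix of the vectors. This already proves the last sentence's claim about $\Sigma$. The covariance may be singular, so I will avoid inverting $\Sigma_{S,S}$ directly and work with projections instead.

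\textbf{Key step: conditioning equals orthogonal projection.} Let $\Pi_S$ be the orthogonal projection of $\mathbb{R}^d$ onto $\operatorname{span}\{e_s : s \in S\}$. Then $\langle \Pi_S e_a, Z\rangle$ is a linear function of $X^e_S$, because $\Pi_S e_a$ is a combination of the $e_s$. The remainder $\langle e_{a\cdot S}, Z\rangle$ is Gaussian and uncorrelated with every $X^e_s$, since $\langle e_{a\cdot S}, e_s\rangle = 0$. Joint Gaussianity turns this into independence from $X^e_S$. Hence, conditionally on $X^e_S$, the pair $(X^e_a, X^e_b)$ is a deterministic shift of $(\langle e_{a\cdot S},Z\rangle, \langle e_{b\cdot S},Z\rangle)$. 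This pair is independent of $X^e_S$, with covariance entries $\|e_{a\cdot S}\|^2$, $\|e_{b\cdot S}\|^2$ and $\langle e_{a\cdot S}, e_{b\cdot S}\rangle$. Mutual information is invariant under shifts determined by the conditioning variable, so
\[
I(X^e_a; X^e_b \mid X^e_S) = I\big(\langle e_{a\cdot S},Z\rangle; \langle e_{b\cdot S},Z\rangle\big).
\]

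For two jointly Gaussian scalars with correlation $r$, the standard formula $I = -\tfrac12\log(1-r^2)$ applies, with $r = \cos(e_{a\cdot S}, e_{b\cdot S})$. This gives Eq.~\ref{eq:cmi-vectors} and covers three cases.
\begin{itemize}
\item When $|r| = 1$, one variable is a deterministic nonzero multiple of the other. The information is then $+\infty$, which matches the formula at $|r| = 1$.
\item When a residual vanishes, that variable is constant given $X^e_S$. The conditional mutual information is then $0$, which matches the convention $\cos = 0$.
\end{itemize}
Eq.~\ref{eq:cmi-psi} then follows by algebra: $e^{-2I} = 1 - r^2$, so $\psi(I) = \sqrt{r^2} = |r|$.

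For the final claim, I will use that $\Sigma_{S,S}$ is invertible when the $e_s$ are linearly independent. Then $\Pi_S e_a = E_S \Sigma_{S,S}^{-1}\Sigma_{S,a}$, where $E_S$ is the matrix with columns $e_s$. Expanding gives $\langle e_{a\cdot S}, e_{b\cdot S}\rangle = \Sigma_{a,b} - \Sigma_{a,S}\Sigma_{S,S}^{-1}\Sigma_{S,b} = \Sigma_{ab\cdot S}$, and likewise $\|e_{a\cdot S}\|^2 = \Sigma_{aa\cdot S}$. Dividing yields the partial correlation of Eq.~\ref{eq:pcor}, which is well defined because neither residual vanishes.

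I expect the main obstacle to be rigor in the degenerate cases, namely a singular $\Sigma$ and a vanishing or collinear residual, where textbook Gaussian formulas assume positive definiteness. The projection argument sidesteps inverting $\Sigma_{S,S}$. For the infinite case, I would check directly that the joint law is singular with respect to the product law, so the Kullback--Leibler divergence is infinite.
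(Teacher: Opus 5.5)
Your proof is correct and follows essentially the same route as the paper's: the Gram-matrix covariance, the split of $X^e_a$ and $X^e_b$ into a linear function of $X^e_S$ plus a residual that joint Gaussianity makes independent of $X^e_S$, the shift-invariance reduction of the conditional mutual information to the mutual information of the two residual variables, the bivariate Gaussian formula with the vanishing and $|\cos| = 1$ cases, and the Schur-complement identity for the partial correlation. Your remark that the projection argument avoids inverting a possibly singular $\Sigma_{S,S}$ matches the paper's use of a decomposition with arbitrary coefficients $\nu_s$, which likewise needs invertibility only for the final claim.
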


\emph{Interpretation.} The Gaussian model is a model of the vectors alone, and the measurements enter only through their estimated CMI (Appendix~\ref{apx:cmi-objective}). By Proposition~\ref{prop:cmi}, the absolute cosine similarity of two residuals given $S$ is the CMI of the vectors given $S$, on the scale of $\psi$. This $\psi$ is Linfoot's informational coefficient of correlation \citep{Linfoot1957}. It is defined for any joint distribution, equals the absolute correlation for a bivariate Gaussian and does not change under an invertible transformation of either variable. Take $\delta$ as the cosine similarity and the dependence $\rho_{ab \cdot S}$ as the CMI $I(a; b \mid S)$ of the measurements, with the sign of the dependence applied after $\psi$. The dependence line of Eq.~\ref{eq:realization} then sets the absolute cosine similarity to $\psi(I(a; b \mid S))$ and its sign to that sign. Since $\psi$ is strictly increasing, Eq.~\ref{eq:cmi-psi} turns the first condition into $I(X^e_a; X^e_b \mid X^e_S) = I(a; b \mid S)$. Equivalently, $\delta$ can be the CMI of the vectors itself, $\delta(e_{a \cdot S}, e_{b \cdot S}) = I(X^e_a; X^e_b \mid X^e_S)$, which by Eq.~\ref{eq:cmi-vectors} depends only on the two residuals. With this $\delta$ and $\psi$ the identity, the dependence line states the same equality, and the cosine form adds the sign. The independence line holds exactly when $X^e_a \indep X^e_b \mid X^e_S$, and $a \indep b \mid S$ holds exactly when $I(a; b \mid S) = 0$. Both sides of these two lines are therefore conditional mutual informations, and the dependence measure places no constraint on the distribution of the measurements or on the form of their relations.

\begin{proof}
The vector $X^e$ is Gaussian with mean zero and
\begin{equation}
\operatorname{Cov}\big(X^e_u, X^e_v\big) = e_u^\top \mathbb{E}[Z Z^\top]\, e_v = \langle e_u, e_v \rangle .
\label{eq:cmi-cov}
\end{equation}
Decompose $e_a = \sum_{s \in S} \nu_s e_s + e_{a \cdot S}$ and $e_b = \sum_{s \in S} \nu'_s e_s + e_{b \cdot S}$, with $e_{a \cdot S}$ and $e_{b \cdot S}$ orthogonal to every $e_s$. Then
\begin{equation}
X^e_a = \sum_{s \in S} \nu_s X^e_s + R_a, \qquad X^e_b = \sum_{s \in S} \nu'_s X^e_s + R_b, \qquad R_a = \langle e_{a \cdot S}, Z \rangle,\;\; R_b = \langle e_{b \cdot S}, Z \rangle .
\label{eq:cmi-split}
\end{equation}
By Eq.~\ref{eq:cmi-cov}, $\operatorname{Cov}(R_a, X^e_s) = \langle e_{a \cdot S}, e_s \rangle = 0$ for every $s \in S$, and likewise for $R_b$. Since $(R_a, R_b, X^e_S)$ is jointly Gaussian,
\begin{equation}
(R_a, R_b) \indep X^e_S .
\label{eq:cmi-indep}
\end{equation}
Given $X^e_S = x_S$, Eqs.~\ref{eq:cmi-split} and~\ref{eq:cmi-indep} make $(X^e_a, X^e_b)$ the pair $(R_a, R_b)$ shifted by constants, and a shift of each argument leaves the mutual information unchanged. Hence
\begin{equation}
I\big(X^e_a; X^e_b \mid X^e_S\big) = \mathbb{E}_{x_S}\, I\big(X^e_a; X^e_b \mid X^e_S = x_S\big) = I(R_a; R_b).
\label{eq:cmi-reduce}
\end{equation}
If $e_{a \cdot S} = 0$ or $e_{b \cdot S} = 0$, then $R_a$ or $R_b$ is $0$ almost surely and $I(R_a; R_b) = 0$, as Eq.~\ref{eq:cmi-vectors} gives under the convention for a vanishing residual. Otherwise $(R_a, R_b)$ is a bivariate Gaussian whose correlation is $\cos(e_{a \cdot S}, e_{b \cdot S})$ by Eq.~\ref{eq:cmi-cov}, so \citep[Chapter 8]{CoverThomas2006}
\begin{equation}
I(R_a; R_b) = -\tfrac{1}{2} \log\big(1 - \cos^2(e_{a \cdot S}, e_{b \cdot S})\big),
\label{eq:cmi-gauss}
\end{equation}
which is $+\infty$ when the absolute cosine similarity is $1$. Eqs.~\ref{eq:cmi-reduce} and~\ref{eq:cmi-gauss} give Eq.~\ref{eq:cmi-vectors}. Eq.~\ref{eq:cmi-psi} follows, since Eq.~\ref{eq:cmi-vectors} gives $e^{-2I} = 1 - \cos^2(e_{a \cdot S}, e_{b \cdot S})$ for $I = I(X^e_a; X^e_b \mid X^e_S)$.

Last, let the vectors of $S$ be linearly independent and neither residual vanish. By Eq.~\ref{eq:cmi-cov}, the covariance matrix $\Sigma$ of $X^e$ is the Gram matrix of the vectors, and $\Sigma_{S,S}$ is invertible. The projection of $e_a$ onto the span of the $e_s$ has the coefficients $\Sigma_{S,S}^{-1} \Sigma_{S,a}$, so
\begin{equation}
\langle e_{a \cdot S}, e_{b \cdot S} \rangle = \Sigma_{a,b} - \Sigma_{a,S}\, \Sigma_{S,S}^{-1}\, \Sigma_{S,b} = \Sigma_{ab \cdot S},
\label{eq:cmi-schur}
\end{equation}
the covariance given $S$ of Eq.~\ref{eq:pcor}. With $b = a$, Eq.~\ref{eq:cmi-schur} gives $\|e_{a \cdot S}\|^2 = \Sigma_{aa \cdot S}$, and likewise for $b$, so $\cos(e_{a \cdot S}, e_{b \cdot S}) = \operatorname{pcor}(a, b \mid S)$.
\end{proof}

\section{Datasets and Preprocessing}
\label{apx:data}

The evaluation uses two kinds of data. Robot and vehicle channels are time series whose names are short labels, and questionnaire items are statements that respondents rate. In both, a share of the true names is masked and the remaining names serve as anchors. Every trained part of \ours{} is fitted on data of its own.

\subsection{Robotics scenarios}
\label{apx:data-robot}

The three scenarios differ in how much of the platform is documented, which is what the claim of the paper depends on. A simulator comes with the true graph of its physics, and a real platform comes with a declared channel layout. A vehicle CAN comes with no documentation from the maker of the car. Within each scenario the platforms are those whose channel layout is known, and for the drives the layout comes from the comma2k19 release. Together they cover the kinds of channel system the method has to handle. Table~\ref{tab:data-r} lists the evaluated arms, logs and drives, and Figure~\ref{fig:data-bodies} shows the channel families each of them carries, which is what differs between platforms.

\paragraph{Simulated arms.}
For five arms we collect 1{,}000 episodes of 200 steps in \texttt{robosuite}
\citep{robosuite} on the MuJoCo engine \citep{MuJoCo} under the default
\href{https://robosuite.ai/docs/modules/controllers.html}{operational-space controller} \citep{Khatib1987}, with random actions. For the Panda we use the 1{,}500 machine-generated lift episodes of 150 steps released with robomimic \citep{robomimic}, whose actions come from a policy. Discovery treats these episodes as it treats the other arms. Each simulator step is one row and each
scalar state variable one column. Columns that are functions of other columns are dropped before the pipeline runs. The dropped columns are of three kinds. The concatenated state vectors repeat the individual channels, and the cosine and sine of each joint angle are functions of the angle column. The object-relative position is the difference of two recorded positions. The manipulated
object is excluded, because its coupling to the end-effector exists only
during contact and a static graph cannot express a state-dependent edge.
The evaluation uses 1{,}200 steps of each arm. The true graph of a simulated arm is written from the controller and the kinematics and checked against the simulator edge by edge. Appendix~\ref{apx:discovery-recovered} compares the discovered structure with it.

\paragraph{Real robot logs.}
The real logs are three datasets read through their LeRobot parquet form \citep{LeRobot}. A UTokyo xArm
pick-and-place log \citep{WeblabXArm} and a Stanford HYDRA log \citep{HYDRA} come from the Open X-Embodiment release
\citep{OpenX}, and both record a single arm at the end-effector. A cup-handling log comes from the ALOHA release \citep{ALOHA} and records a bimanual arm at every joint. Their channels are the recorded state and action columns of each platform, from 13 to 28 per log. The evaluation uses 1{,}200 steps of each log.

\paragraph{Vehicle drives.}
The CAN bus (Controller Area Network) is the network on which the controllers of a car exchange messages. The engine, brake and steering controllers are three of them. Every message carries the current value of a few signals. A logging device attached to the bus records them without any document that says what a signal is. A drive is therefore a natural case of measured but unnamed variables. The vehicle drives come from
comma2k19 \citep{comma2k19}. A drive has twelve channels, six from the CAN bus and six from the inertial sensor of the recording device. The CAN bus supplies the vehicle speed, the steering wheel angle and the four wheel speeds. The inertial sensor supplies three accelerometer axes and three gyroscope rates. Each channel is linearly resampled to a common 10 Hz grid over the segments of the drive and cut to 2{,}400 samples, or to 1{,}199 on one shorter drive. The supplied graph for the drives encodes the standard vehicle dynamics in nine edges. Steering points to yaw rate and lateral acceleration, and speed points to each wheel speed and to yaw rate. Longitudinal acceleration points to speed, and roll rate points to lateral acceleration. The evaluation drives are the third to the twelfth in sorted order, all from one car. They range from stop-and-go traffic to steady highway speed (Figure~\ref{fig:data-drives}). Figure~\ref{fig:data-robot} shows one arm and one drive as they enter the pipeline.

\begin{table}[!htb]\centering\footnotesize\setlength{\tabcolsep}{3.5pt}
\caption{The robot and vehicle data sets of the evaluation. Channels are counted at one time step, actions included.}
\label{tab:data-r}
\begin{tabular}{@{}>{\raggedright\arraybackslash}p{0.17\textwidth}>{\raggedright\arraybackslash}p{0.36\textwidth}rr>{\raggedright\arraybackslash}p{0.18\textwidth}@{}}
\toprule
set & members & number & channels & steps \\
\midrule
simulated arms & Panda (lift), Sawyer, IIWA, UR5e, Kinova3, xArm7 & 6 & 32 to 44 & 1{,}200 \\
real logs & UTokyo xArm, Stanford HYDRA, ALOHA cup & 3 & 13, 14, 28 & 1{,}200 \\
vehicle drives & comma2k19 drives & 10 & 12 & 2{,}400, one drive 1{,}199 \\
\bottomrule
\end{tabular}
\end{table}

\begin{figure}[!htb]\centering
\includegraphics[width=\textwidth]{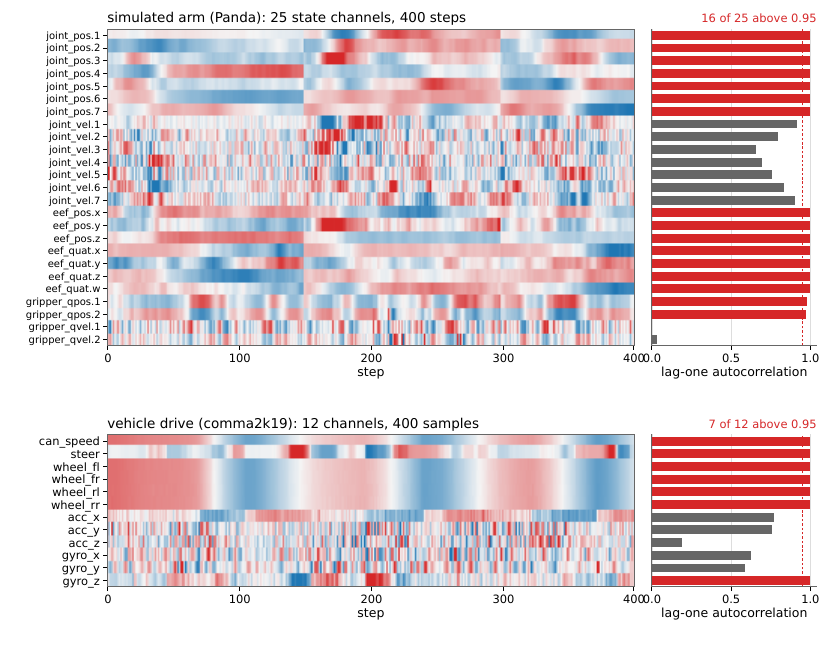}
\caption{One simulated arm and one vehicle drive. Left, every state channel standardized
over 400 consecutive steps. Right, the lag-one autocorrelation of each
channel, in red above .95.}
\label{fig:data-robot}
\end{figure}

\begin{figure}[!htb]\centering
\includegraphics[width=\textwidth]{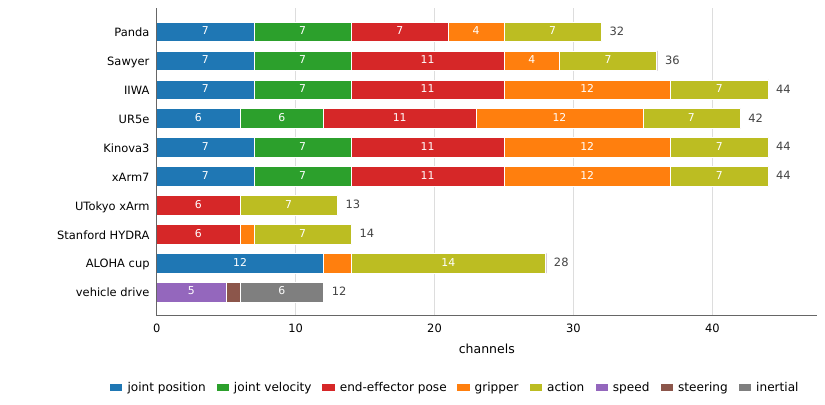}
\caption{Channels of every evaluated arm, log and drive, by family. A simulated arm
carries joint positions and velocities, the end-effector pose, the gripper
and the commanded action, a real log what its platform recorded, and a
vehicle drive the speed signals, the steering wheel angle and the inertial unit.}
\label{fig:data-bodies}
\end{figure}

\begin{figure}[!htb]\centering
\includegraphics[width=\textwidth]{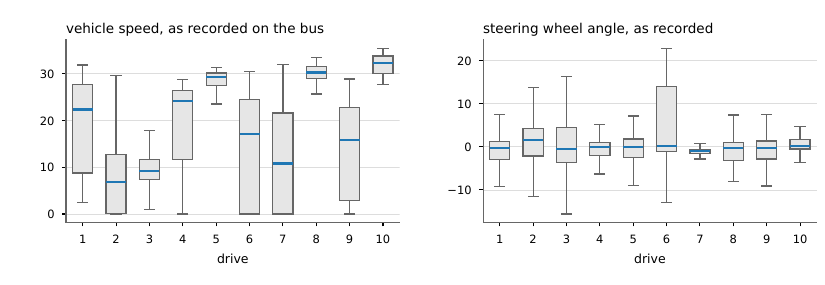}
\caption{Vehicle speed and steering wheel angle over the ten evaluation
drives, as recorded on the bus. Boxes span the middle half of the samples
and whiskers reach 1.5 times that range. The drives range from stop-and-go traffic to steady
highway speed.}
\label{fig:data-drives}
\end{figure}

\subsection{Questionnaires}
\label{apx:data-quest}

\paragraph{Corpus and preprocessing.}
Table~\ref{tab:data-q} lists all twenty-one scales. Seventeen are public response matrices from the Open-Source Psychometrics Project \citep{openpsychometrics}. The World Values Survey variables are the components of the indices of \citet{Welzel2013}, taken from the Survey's Wave 7 release \citep{WVS7}. The Holzinger-Swineford battery \citep{HolzingerSwineford} is the 1939 set of cognitive tests, of which we use twenty-four. Seventeen measures of the Himi multitasking study \citep{Himi2019} and its TLVD benchmark subset \citep{TLVD} come from the study's OSF archive \citep{HimiOSF}. Figure~\ref{fig:data-corpus} shows how the corpus is composed. Items are the
observed variables. The published scoring key supplies the latent variables, which
are the reference for the latent side. Where a scale has a nested key
the nesting is kept. HEXACO has six domains over twenty-four facets, and the World Values Survey has the two overall indices of Welzel over seven of their eight sub-indices. A respondent with a missing entry on any item of the scale is dropped. The response matrix is then capped at 5{,}000 respondents drawn with a fixed seed, and every column is standardized. The correlation matrix that enters the discovery step is therefore complete.

\paragraph{Evaluation scales.}
A scale enters the evaluation only if its scoring key is published and
every latent variable in that key has a published name, so that every
target has a reference. Within that rule the five are chosen to span the forms a questionnaire can take. They cover a clinical scale, a personality inventory, a
mindfulness scale, a vocational interest inventory and a values survey.
They run from 21 to 240 items and from 3 to 30 latent variables counted
with their parent level. HEXACO and the World Values Survey carry a nested key, which allows a test of a latent structure of more than one level. The five also differ in how visible the latent structure is in the responses.

Figure~\ref{fig:data-corr} shows the five as correlation matrices with items ordered by latent variable. DASS,
RIASEC and KIMS show clean blocks, and HEXACO shows twenty-four small
blocks inside six larger ones. The block structure in that figure is what
the discovery step has to find without the key.
For each matrix, Figure~\ref{fig:data-corr-dist} shows the absolute correlation of item pairs inside one latent variable and of item pairs across latent variables. The rank tests have to detect the contrast between these two distributions. The two medians lie farthest apart on the World Values Survey
and RIASEC and closest on DASS, whose three latent variables correlate
with one another.

\begin{table}[!htb]\centering\scriptsize\setlength{\tabcolsep}{3.5pt}
\caption{Every questionnaire scale used in this paper, latent variables counted with any
parent level. The source column cites the data release: the raw-data index of the Open-Source Psychometrics Project (OSP), the World Values Survey Wave 7 release, the \texttt{sem} package of R for the 1939 battery, and the Himi OSF archive cited above, shared by the Himi and TLVD rows.}
\label{tab:data-q}
\resizebox{\textwidth}{!}{%
\fittable{llrrrll}{
\toprule
scale & instrument & items & latent variables & respondents & source & role \\
\midrule
DASS & Depression Anxiety Stress Scales \citep{DASS} & 42 & 3 & 5{,}000 & \citep{OSPDASS} & evaluation \\
HEXACO & IPIP-HEXACO scales \citep{HEXACO} & 240 & 6 domains, 24 facets & 5{,}000 & \citep{OSPHEXACO} & evaluation \\
KIMS & Kentucky Inventory of Mindfulness Skills \citep{KIMS} & 39 & 4 & 521 & \citep{OSPKIMS} & evaluation \\
RIASEC & Holland occupational interest types \citep{RIASEC} & 48 & 6 & 5{,}000 & \citep{OSPRIASEC} & evaluation \\
WVS & World Values Survey, Welzel indices \citep{Welzel2013} & 21 & 2 indices, 7 sub-indices & 5{,}000 & \citep{WVS7} & evaluation \\
\midrule
16PF & IPIP scales for Cattell's sixteen personality factors \citep{Cattell16PF, IPIP} & 162 & 16 & 5{,}000 & \citep{OSP16PF} & training \\
WPI & Woodworth Psychoneurotic Inventory \citep{WPI} & 116 & 1 & 3{,}888 & \citep{OSPWPI} & training \\
TMA & Taylor Manifest Anxiety Scale \citep{TMA} & 50 & 1 & 4{,}474 & \citep{OSPTMA} & training \\
Big Five & IPIP fifty-item Big Five markers \citep{BigFiveMarkers, IPIP} & 50 & 5 & 3{,}000 & \citep{OSPBigFive} & training \\
HSQ & Humor Styles Questionnaire \citep{HSQ} & 32 & 4 & 993 & \citep{OSPHSQ} & training \\
SD3 & Short Dark Triad \citep{SD3} & 27 & 3 & 5{,}000 & \citep{OSPSD3} & training \\
NPAS & Nerdy Personality Attributes Scale \citep{NPASDevelopment} & 26 & 1 & 5{,}000 & \citep{OSPNPAS} & training \\
HS & Holzinger-Swineford cognitive tests \citep{HolzingerSwineford} & 24 & 5 & 301 & \citep{semR} & training \\
HSNS+DD & Hypersensitive Narcissism Scale with the Dirty Dozen \citep{HSNS, DirtyDozen} & 22 & 4 & 5{,}000 & \citep{OSPHSNSDD} & training \\
MACH-IV & Machiavellianism scale \citep{MACH} & 20 & 1 & 5{,}000 & \citep{OSPMACH} & training \\
Himi & multitasking and executive function battery \citep{Himi2019} & 17 & 6 & 202 & Himi OSF archive & training \\
GCBS & Generic Conspiracist Beliefs Scale \citep{GCBS} & 15 & 5 & 2{,}400 & \citep{OSPGCBS} & training \\
CFCS & Consideration of Future Consequences Scale \citep{CFCS} & 12 & 1 & 5{,}000 & \citep{OSPCFCS} & training \\
SCS & Sexual Compulsivity Scale \citep{SCS} & 10 & 1 & 3{,}243 & \citep{OSPSCS} & training \\
RSE & Rosenberg Self-Esteem Scale \citep{RSE} & 10 & 1 & 5{,}000 & \citep{OSPRSE} & training \\
TLVD & the TLVD benchmark subset of the Himi study \citep{TLVD} & 9 & 4 & 202 & Himi OSF archive & training \\
\bottomrule
}}
\end{table}

\begin{figure}[!htb]\centering
\includegraphics[width=\textwidth]{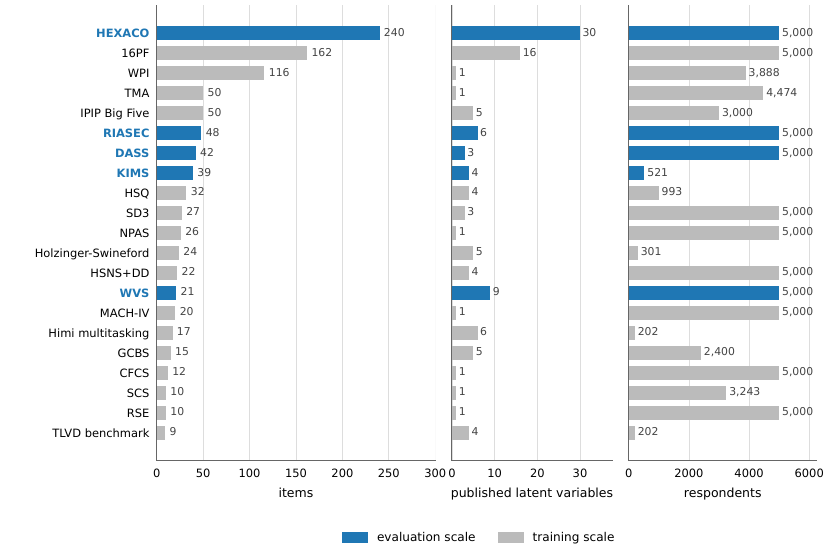}
\caption{Composition of the questionnaire corpus. Bars give the items, the published latent variables and the respondents per scale, with the five evaluation scales in color. HEXACO counts its six domains and twenty-four facets together.}
\label{fig:data-corpus}
\end{figure}

\begin{figure}[!htb]\centering
\includegraphics[width=0.99\textwidth]{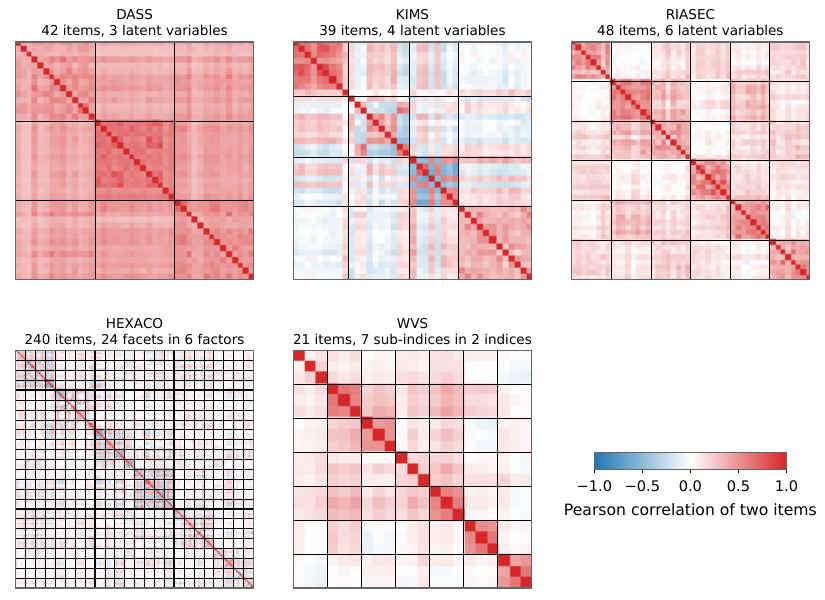}
\caption{The five evaluation scales as item correlation matrices. Items are ordered by the published latent variable, and the borders of the latent variables are drawn. Red is positive and blue negative.}
\label{fig:data-corr}
\end{figure}

\begin{figure}[!htb]\centering
\includegraphics[width=\textwidth]{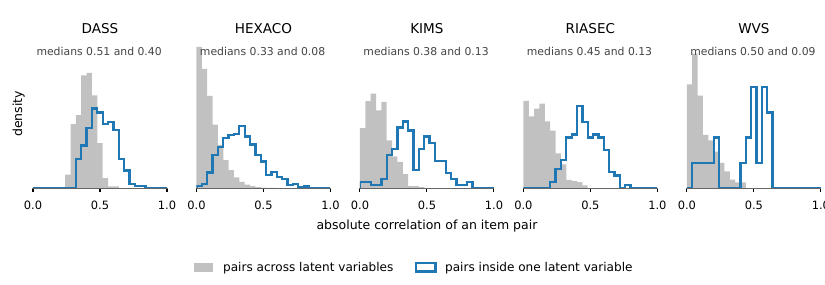}
\caption{Absolute correlation of item pairs inside one published latent variable against
pairs across latent variables, per evaluation scale, with the two medians printed
in each panel.}
\label{fig:data-corr-dist}
\end{figure}

\subsection{Masking}
\label{apx:data-cols}

Names are masked at 20, 50, 75 and 90\%. On a questionnaire scale
five folds are drawn at each level, each with its own set of masked
names. On an arm, log or drive one set is drawn at each level. The
same set is reused for every method and every structure source. Only the name of an unnamed variable is masked. Its data stay available, since the task is to name a variable that is measured but undocumented.
Every latent variable is a target at every level, and the level sets how
many of its items keep a name.
Figure~\ref{fig:data-masking} shows the numbers of masked
items and named anchors on each evaluation scale.

\begin{figure}[!htb]\centering
\includegraphics[width=\textwidth]{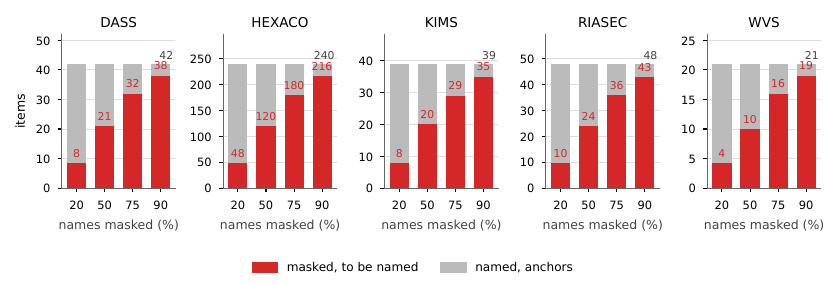}
\caption{What one fold leaves named on each evaluation scale. Bars are the items of the scale. The masked part is what the fold asks the method to name, and the named part is what remains as anchors. The pink count above each masked part is the number of masked items, and the gray count is the total number of items in the scale.}
\label{fig:data-masking}
\end{figure}

\subsection{Training data}
\label{apx:data-roles}

The trained parts of \ours{} belong to the optimization or to the adapter, and each has training data of its own. Table~\ref{tab:data-audit} lists every trained
part against its data, and none of it overlaps the evaluation scales,
arms, logs or drives.

\paragraph{The optimization.}
The operator, the weight network and the encoder fine-tuning of Appendix~\ref{apx:solver} are fitted on the sixteen training scales of Table~\ref{tab:data-q} with their published keys. These public scales span nine to 162 items and one to sixteen latent variables. On the robot side the operator and the weight network are fitted on twenty generated arms and vehicles. The encoder fine-tuning is warm-started from the questionnaire one and fitted on 300 generated arms and vehicles, while it keeps unchanged the vectors of 20{,}000 English words and phrases sampled from the concept bank. This bank combines the ConceptNet Numberbatch vocabulary, WordNet and the Cognitive Atlas \mbox{\citep{ConceptNet, WordNet, CognitiveAtlas}}. The
semantic negation map is trained on WordNet antonym pairs and on the pole pairs of the sixteen training scales.

\paragraph{The adapter.}
The questionnaire adapter is trained on 600 synthetic scales. The local language model Qwen3-8B \citep{Qwen3} writes latent variable names for forty domains of psychology and eight first-person items for each, two or three of them reverse keyed. A scale is assembled from one to five latent variables of one domain. Likert responses for 600 respondents are then simulated from a factor model with loadings, reverse keying and correlations between the latent variables. A generated item or latent variable is dropped when its encoder cosine to any of the 1{,}121 real item texts and latent variable descriptions reaches .92. This filter removes 5{,}666 of 12{,}376 items and 108 of 1{,}547 latent variables. The 600 solved scales yield 33{,}073 training
examples. The robot adapter is trained on 1{,}600 generated arms and vehicles, each entered twice with two edge sets. The entries yield 391{,}352 examples in all, of which the examples of 96 entries are held out for early stopping.

\paragraph{Generated arms and vehicles.}
A generated arm or vehicle is a causal channel system that comes with
its true graph, its measurements and a name for every channel. It takes one of the four shapes of the evaluation platforms. These are an arm, a log that records only the end-effector, a bimanual joint log and a vehicle CAN. An arm has five to eight joints, an end-effector pose block, a gripper and a commanded action. Each joint has an angle, which follows a commanded value with a lag, and a velocity. The edges run from the action to the joints, between the angle and the velocity of a joint, and from the joints to the end-effector pose and the gripper. A vehicle has the twelve channels of the vehicle CAN, with the steering exogenous. The wheel speeds and the vehicle speed lie on the drivetrain, and the inertial channels are responses. The signals are simulated from these dynamics with autoregressive noise over 600 to 1{,}200 steps, and the names come from template banks. About 15\% of the channels have their sign flipped, so that the optimization is trained on inverted channels. Two real logs and the first two comma2k19 drives supply signal statistics for the generator and are not evaluated. The two logs are an ALOHA coffee task and a Berkeley AutoLab UR5 log \citep{BerkeleyUR5}.

\begin{table}[!htb]\centering\scriptsize\setlength{\tabcolsep}{4pt}
\caption{Every trained part of \ours{} and the data it was fitted on.}
\label{tab:data-audit}
\begin{tabular}{p{0.30\textwidth}p{0.44\textwidth}p{0.18\textwidth}}
\toprule
trained part & training data & size \\
\midrule
optimization, questionnaires (operator, weight network, encoder fine-tuning) & the sixteen public scales with their published keys (Table~\ref{tab:data-q}) & 16 scales \\
optimization, robots (operator, weight network) & generated arms and vehicles & 20 \\
encoder fine-tuning, robots & generated arms and vehicles, warm-started from the questionnaire one, 20{,}000 general words held fixed & 300 \\
semantic negation map & WordNet antonym pairs and pole pairs from the sixteen training scales & --- \\
adapter and low-rank update, questionnaires & synthetic scales, after the leakage filter & 600 scales, 33{,}073 examples \\
adapter and low-rank update, robots & generated arms and vehicles, each under its true graph and under a degraded edge set & $1{,}600 \times 2$, 391{,}352 examples \\
\bottomrule
\end{tabular}
\end{table}

\section{Causal Structure Learning}
\label{apx:discovery}

\begin{figure}[!htb]\centering
\includegraphics[width=\textwidth]{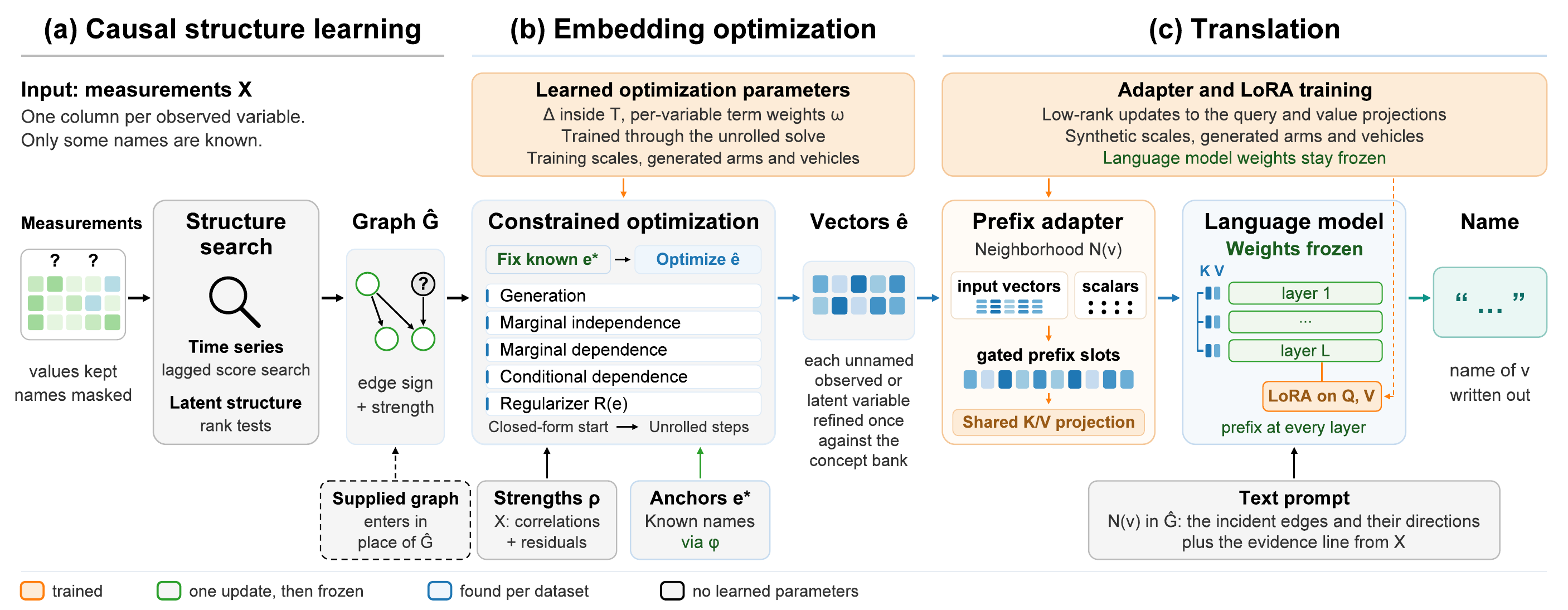}
\caption{The objects \ours{} passes from measurements to a name. Structure learning returns the graph. The optimization assigns a vector to every unnamed variable under the relations of the graph, given the anchors. The adapter passes the vector to a frozen language model as a prefix, with the neighborhood written as text. Inputs enter from below the row and
training from above it.}
\label{fig:pipeline}
\end{figure}

Structure learning learns the graph $\hat G$ from the measurements alone, and the optimization of Appendix~\ref{apx:solver} receives it. Robot and vehicle channels are observed time series, and BOSS runs over two consecutive time steps. Questionnaires carry latent variables, and RLCD runs with BOSS as its first stage. Appendix~\ref{apx:theory-discovery} states what each method guarantees.

\subsection{Robot and vehicle channels}
\label{apx:discovery-robot}

Each arm, log or drive enters the search as a lag-one design matrix. A row holds every channel at the previous and the current time step and the action at the previous time step. A channel whose lag-one autocorrelation is near one is replaced by its first difference. At high sampling rates, such a near-integrated channel would otherwise conceal every other parent. On the real logs and the vehicle drives a channel is
replaced when its lag-one autocorrelation exceeds .95, and on the
simulated arms every state channel is replaced. BOSS runs over all columns on the correlation
matrix. With $u_{t-1}$ and $v_t$ two channels at the previous and the
current time step, the retained edge set is
\begin{equation}
E \subseteq \{u_{t-1} \to v_t : u, v \in V\},
\label{eq:temporal}
\end{equation}
and each child is then regressed on its retained parents by least
squares, which gives every edge a signed coefficient. The lagged graph is then reduced to a summary graph over the channels, so every channel enters the next step as one variable. An edge $u_{t-1} \to v_t$ becomes $u \to v$, and a self-edge $u_{t-1} \to u_t$ is dropped. Two passes on partial correlations then refine the
edge set. The first pass runs on the simulated arms, whose actions are inputs. It adds an edge from an input to a channel when their partial correlation is high, given the channel's own lag and the other inputs. The second pass drops an edge whose partial correlation, given the other parents and the inputs, is low. Table~\ref{tab:settings} gives the two thresholds.

\paragraph{A physical law as an optional prior.}
The discovery step can also admit the edges within a time step that the physics of the platform orients. On an arm these run from the joints to the end-effector pose and the gripper. On a drive they run from the steering wheel angle and the wheel speeds to the vehicle speed, and from those to the accelerations and the rates. Such a law contains no information about the name of a channel, so it can be supplied wherever a platform makes it available. Appendix~\ref{apx:j-graphs} reports how this prior changes the names.

\subsection{Questionnaires}
\label{apx:discovery-quest}

The discovery method is RLCD as released in
\texttt{causal-learn} \citep{causallearn}, with its first stage exchanged. BOSS
on the correlation matrix replaces greedy equivalence search \citep{GES}
and partitions the items into the groups the rank tests then work within.
The penalty discount of the score is raised where a group would otherwise be too large for those tests, since their cost grows exponentially with the number of latent variables in a group. The replacement also keeps the first stage fast on the larger scales (Appendix~\ref{apx:compute}).

Rank tests then introduce latent variables. The search keeps an active set of covers,
which starts as the single observed variables and gains every cover it
finds. Within a group, the search looks for $k + 1$ variables made of observed variables $\mathcal{X}$ and atomic covers $\mathcal{C}$ of the active set. The covariance of these variables with $\mathcal{X}$ and the rest of the group has rank $k$,
\begin{equation}
\operatorname{rank} \Sigma_{\mathcal{C} \cup \mathcal{X},\; \mathcal{X} \cup R} = k,
\qquad |\mathcal{C}| + |\mathcal{X}| = k + 1,
\label{eq:rank}
\end{equation}
where $R$ holds the other variables of the group without the measured
descendants of $\mathcal{C}$. For generic parameters such a rank constraint means that a pair of sets with $k$ members in total t-separates the two sides (Theorem~\ref{thm:trek}). Under the conditions of \citet[Theorem~8]{Dong2024}, it comes from an atomic cover made of $\mathcal{X}$ and $k - |\mathcal{X}|$ latent variables with $\mathcal{C}$ among its pure children, and the search introduces this cover. The latent variables are new unless found before. The rank
of a covariance submatrix is tested through its canonical correlations
with a Wilks statistic \citep[Appendix~A.19]{Dong2024}. As in the RLCD paper, the tests run on the correlation matrix of the standardized item responses. A collider check prevents a collider from being
mistaken for a latent parent. The search starts at $k = 1$, returns to
$k = 1$ after every cover it finds and moves to $k + 1$ otherwise, up to a
maximum cardinality. A cover that is found joins the variables of the
next tests, so that a latent variable may have latent parents. The rank tests run batched on one GPU in a single serial process, which
keeps the search deterministic. Their accept and reject decisions agree
with the released CPU test on all 1{,}000 test cases, and on a
fifteen-variable scale the two return the same graph.

\subsection{Recovered structures}
\label{apx:discovery-recovered}

\paragraph{Robot and vehicle channels.}
On the simulated arms the discovered graph has about as many edges as the true graph and shares about a quarter of the true graph's edges. The true graph is the simulator's own physics, so the discovered graph can be compared with it. Table~\ref{tab:struct-robot} counts the edges of both graphs per arm in the time-lagged form of Eq.~\ref{eq:temporal}. Appendix~\ref{apx:struct-robot} shows one arm, one log and one drive as a summary graph over the channels. The discovered graph recovers the edges from the actions to the joints and the edges from one step to the next among the joints. The edges it adds join channels that are correlated. The forward kinematics act within a time step, so the retention rule of Eq.~\ref{eq:temporal} drops their edges. The edges from a velocity to the position it integrates are the ones the search misses most, since those channels enter the search as first differences.

On the xArm and HYDRA logs and on the vehicle drives, the discovered graph shares few edges with the supplied graph. The vehicle drives are the informative case.
The supplied graph has an edge from the vehicle speed to each of the four
wheel speeds and nine edges in all. The search finds one to six edges per drive, and most of them lie among the accelerations and the rates. The yaw rate enters the lateral acceleration on five drives and the vertical acceleration enters the longitudinal acceleration on five. The search shares at most two edges with the supplied graph on any drive, and it returns none of the four edges from the vehicle speed to the wheel speeds.

\begin{table}[!htbp]\centering\small\setlength{\tabcolsep}{3.5pt}
\caption{The discovered structure of every arm, log and drive beside the supplied graph. Simulated arms are counted at the lagged level, real logs and vehicle drives at the channel level. Shared counts the edges the two graphs have in common.}
\label{tab:struct-robot}
\fittable{lrrr}{
\toprule
arm, log or drive & supplied & found & shared \\
\midrule
Panda & 113 & 148 & 29 \\
Sawyer & 108 & 100 & 27 \\
IIWA & 122 & 106 & 31 \\
UR5e & 106 & 117 & 34 \\
Kinova3 & 121 & 109 & 24 \\
xArm7 & 119 & 92 & 28 \\
\midrule
UTokyo xArm & 6 & 5 & 1 \\
Stanford HYDRA & 7 & 5 & 2 \\
ALOHA cup & 14 & 37 & 10 \\
vehicle CAN, 10 drives & 9 & 1 to 6 & 0 to 2 \\
\bottomrule
}
\end{table}

\begin{table}[!htbp]\centering\small\setlength{\tabcolsep}{4pt}
\caption{The structure discovered on each evaluation scale against the published scoring key. Published latent variables are the groups of the key at the item level, and items reached counts the items under at least one discovered latent variable.}
\label{tab:struct-q}
\fittable{lrrrrr}{
\toprule
scale & published latent variables & discovered latent variables & latent to latent edges & items reached & purity \\
\midrule
DASS & 3 & 9 & 2 & 40 / 42 & 0.86 \\
HEXACO & 24 & 25 & 5 & 64 / 240 & 0.80 \\
KIMS & 4 & 11 & 5 & 29 / 39 & 0.88 \\
RIASEC & 6 & 13 & 8 & 28 / 48 & 0.95 \\
WVS & 7 & 12 & 14 & 21 / 21 & 0.53 \\
\midrule
all five & 44 & 70 & 34 & 182 / 390 & --- \\
\bottomrule
}
\end{table}

\paragraph{Questionnaires.}
The discovered structure is finer than the published scoring key on four
of the five scales and cuts across it on the World Values Survey
(Table~\ref{tab:struct-q}). Purity is the share of the items of a discovered latent variable that belong to its reference (Appendix~\ref{apx:latent-ref}), averaged over the discovered latent variables of the scale. DASS has three published latent variables and nine discovered ones. Six of the nine lie inside one published latent variable, and the other three mix anxiety with stress. The method therefore splits each published latent variable into smaller groups of items and mixes only anxiety and stress. RIASEC and KIMS show the same pattern. HEXACO is
the scale where the rank tests are most conservative, with twenty-five latent variables over 64 of the 240 items. The remaining items enter the next two steps with their BOSS edges and no latent parent. On the World Values Survey the
groups cut across the key the most, with a purity of .53 against .80 to
.95 on the other four scales. There a sub-index holds about three items, while the largest discovered group holds eight. The groups therefore span several sub-indices. Appendix~\ref{apx:struct-merged} shows the discovered graphs of DASS and the World Values Survey.

\section{Constrained Embedding Optimization}
\label{apx:solver}

The optimization receives the graph $\hat G$ of Appendix~\ref{apx:discovery}, the measurements and the names of the named variables. It returns a vector for every unnamed variable, which the adapter of Appendix~\ref{apx:plugin} translates into a name.

\subsection{Space and variables}
\label{apx:solver-space}

Embeddings lie in the space of \texttt{e5-large-v2} \citep{E5}, with the low-rank update of Appendix~\ref{apx:solver-trained}. The vectors to be
optimized are one per unnamed variable. The vectors of the named variables
are the encodings of their names and enter as constants, which enforces the
anchoring line of Eq.~\ref{eq:realization} exactly.

\subsection{The generating operator}
\label{apx:solver-operator}

The \emph{operator} $T$ of the generating function
$g_v(e_{\mathrm{pa}(v)}) = \sum_{u \in \mathrm{pa}(v)} T(e_u; \rho_{uv})$ of
Section~\ref{sec:method-solver} is shared across edges,
\begin{equation}
r_v = e_v - \sum_{u \in \mathrm{pa}(v)} T(e_u; \rho_{uv}),
\qquad
T(e_u; \rho_{uv}) = |\rho_{uv}|\,\big( b_{uv}(e_u) + \Delta(e_u, \rho_{uv}, \eta_{uv}) \big),
\label{eq:residual}
\end{equation}
with $b_{uv}(e_u) = e_u$ on an edge with $\rho_{uv} > 0$ and
$b_{uv}(e_u) = f_{\mathrm{neg}}(e_u)$ on an edge with $\rho_{uv} < 0$, and
$\eta_{uv}$ the type of the edge, which records whether it joins two latent
variables. $\Delta$ is a small network conditioned on $\rho_{uv}$ and
$\eta_{uv}$. Its output starts at zero, so the operator initially scales the parent's vector or its image under the semantic negation map by $|\rho_{uv}|$. The correction is learned during training. $T$ is additive over the parents, so a child receives
nothing from a non-parent and the sparsity of the Jacobian follows the
graph without a penalty. $\Delta$ can change the sign of a contribution, so its training includes the sign-consistency penalty $\mathcal{S}$ of Eq.~\ref{eq:solve}. This penalty aligns the contribution of a positive edge with the parent's vector and that of a negative edge with its image under the semantic negation map.

The \emph{semantic negation map} $f_{\mathrm{neg}}$ is a small network in
the encoder space. It exists because a reverse-keyed item is generated
from the opposite pole of its latent parent's meaning, and the embedding
of the antonym differs from the negated vector $-e_u$.

\subsection{The objective}
\label{apx:solver-objective}

The objective is Eq.~\ref{eq:objective} of Section~\ref{sec:method-solver} with $\delta$ the cosine similarity, and every relation enters as a quadratic penalty \citep{NocedalWright2006}.
$\mathcal{P}_{\mathrm{ind}}$ is the set of pairs that are d-separated by the empty set in $G$. In such a pair the two variables have no common ancestor, and neither is an ancestor of the other. $\mathcal{P}_{\mathrm{con}}$ is the set of pairs of
variables that have parents, and $\mathcal{P}_{\mathrm{dep}}$ the set of
trek-connected pairs of strongest dependence. The distance correlation
$\rho_{ab}$ is computed in the measurements and
signed by the Pearson correlation of the two variables. In the marginal independence term it is set to zero where the absolute Pearson correlation falls below the significance threshold $2/\sqrt{n}$. This thresholded value appears as $\tilde\rho_{ab}$ in Eq.~\ref{eq:relations}. A pair that is independent both in the graph and in the data therefore becomes orthogonal. A measured dependence that the graph does not imply is kept.
The target $\rho_{ab \cdot \mathrm{pa}}$ is the distance correlation of the
residuals of the two variables after a gradient-boosted regression \citep{Friedman2001} on their
own parents, signed by the Pearson correlation of the residuals. The conditional dependence term matches it to the cosine similarity of the residuals of the two vectors on their own parents, formed through $r_v$ of Eq.~\ref{eq:residual}. Both sides therefore compare residuals, as a partial correlation does \citep{Whittaker1990}. The
marginal dependence term keeps the absolute cosine similarity of a
strongly dependent pair above the fraction $\kappa < 1$ of its unsigned
distance correlation, a lower bound that leaves the vectors the freedom to fit the other constraints.
A latent variable enters every correlation of the objective through its
score on the first principal component of its observed descendants \citep{Jolliffe2002}. The
first four terms vanish exactly on the estimated feasible set
(Proposition~\ref{prop:feasible}).

\subsection{The regularizer}
\label{apx:solver-reg}

The feasible set has more than one member, so the regularizer selects one of them, as regularization does in an ill-posed problem. The term $\mathcal{R}$ of Eq.~\ref{eq:objective} is
\begin{equation}
\mathcal{R}(e) = \lambda_{\mathrm{bank}} \sum_{v \in V_{\mathrm{U}}} \big(1 - \cos(e_v, \Pi(e_v))\big)
+ \lambda_{\mathrm{nbr}} \sum_{v \in O \cap V_{\mathrm{U}}} \big(1 - \cos(e_v, c_v)\big)
+ \frac{\lambda_{\mathrm{norm}}}{|V_{\mathrm{U}}|} \sum_{v \in V_{\mathrm{U}}} \big(\|e_v\| - 1\big)^2,
\label{eq:reg}
\end{equation}
where $\Pi(e_v)$ is the soft nearest-neighbor projection of $e_v$ onto the concept bank, which is the softmax-weighted mean of its nearest bank entries. The concept bank is the English phrase bank described in Appendix~\ref{apx:data-roles}. The neighbor mean $c_v$ is
the correlation-weighted mean of the vectors of the few named variables
that an unnamed observed variable correlates with most. On a questionnaire a negatively correlated named variable enters through the semantic negation map $f_{\mathrm{neg}}$, and on a robot it enters by its magnitude alone. The last
term holds the vectors at unit norm.

\subsection{Optimization}
\label{apx:solver-opt}

We initialize the embeddings in closed form by ridge-regularized least squares \citep{Tikhonov1977} with the linear approximation $T(e_u; \rho_{uv}) = \rho_{uv} e_u$. We then run unrolled Adam steps \citep{Adam, Domke2012}. Unrolling keeps the optimization differentiable, so the shared operator and the per-variable term weights can be fitted through it. At inference the same loop optimizes the embeddings with the trained operator and term weights held fixed.

\paragraph{Refinement against the concept bank.}
The optimized vectors are then refined once against the concept bank by
coordinate ascent. Each vector of an unnamed variable takes its fifty
nearest bank phrases as candidates, together with the vector itself. A candidate is scored by its cosine similarities to the current vectors of the other variables, each weighted by the measured correlation with that variable. The mean pairwise cosine similarity of the bank is subtracted from this score.
At most three sweeps of coordinate ascent pick the best candidate for
every unnamed variable, with the vectors of the named variables held
fixed. Where a phrase is picked, the returned vector is the normalized mean
of the optimized vector and that phrase. The operator and the term weights
are fitted without this step.

\subsection{The trained components}
\label{apx:solver-trained}

Four parts of the optimization are fitted once, each on training data of its own (Appendix~\ref{apx:data-roles}). These are the low-rank update of the encoder, the semantic negation map, the operator and the weight network.

\paragraph{Encoder fine-tuning.}
The encoder fine-tuning trains small low-rank matrices in the encoder on the sixteen training scales. Its loss uses the same dependence and independence relations as the objective. A dictionary of English words and phrases is held fixed during this fine-tuning, so that the space keeps its semantics. The trained matrices are then frozen, and scoring uses the unmodified encoder.

\paragraph{Semantic negation map.}
The semantic negation map is trained on
WordNet antonym pairs in both directions and on pole pairs
from the training scales. A pole pair is the mean of a latent
variable's positive item texts against the mean of its negative ones. The loss is a cosine mapping term and an involution term for the condition $f_{\mathrm{neg}}(f_{\mathrm{neg}}(x)) \approx x$.

\paragraph{Operator and weight network.}
The \emph{weight network} $\omega$ is a small network that emits per-variable weights for the terms of Eq.~\ref{eq:objective}, which the equation omits. With these weights the balance between terms can differ between a variable with many named neighbors and one with few. The operator and the weight network are fitted on the sixteen training scales for questionnaires and on generated arms and vehicles for robots. The parameters of the operator
network $\Delta$ and of the weight network $\omega$ are fitted through the
unrolled optimization with
\begin{equation}
\min_{\Delta, \omega}\;
\frac{1}{|O \cap V_{\mathrm{U}}|} \sum_{v \in O \cap V_{\mathrm{U}}} \big(1 - \cos(\hat e_v, e^\star_v)\big)
+ \frac{1}{2\,|L|} \sum_{v \in L} \big(1 - \cos(\hat e_v, e^\star_v)\big)
+ 0.1\, \mathcal{S}(\Delta),
\label{eq:solve}
\end{equation}
where $\hat e$ is the output of the unrolled optimization on a training
scale, arm or vehicle with its names masked as at evaluation, $e^\star_v$
is the encoding of the masked true name, and $\mathcal{S}$ is the
sign-consistency penalty of the operator (Appendix~\ref{apx:solver-operator}).

\subsection{Robot channels}
\label{apx:solver-robot}

On the robot channels the same objective runs, with the parts that carry vocabulary replaced by their robot counterparts. The encoder fine-tuning uses a second set of low-rank matrices, warm-started from the questionnaire ones and fitted on generated arms and vehicles. The concept bank is a bank of templated robot and vehicle phrases. The operator and the weight network are the pair fitted on the generated arms and vehicles (Table~\ref{tab:data-audit}).

\subsection{Conditional mutual information as the dependence measure}
\label{apx:cmi-objective}

The objective can also take the CMI as its dependence measure, with $\psi$ the coefficient of Proposition~\ref{prop:cmi}. In this form Eq.~\ref{eq:objective} keeps its terms and its pair sets and changes in two places. First, every dependence value that the objective reads becomes a tested and signed $\psi$ of an estimated CMI,
\begin{equation}
\tau_{ab \cdot S} = \xi_{ab \cdot S}\; \psi\big(\hat I(a; b \mid S)\big)\; \mathbf{1}\big[\hat I(a; b \mid S) > \beta_{|S|}\big],
\qquad \tau_{ab} = \tau_{ab \cdot \emptyset},
\label{eq:cmi-target}
\end{equation}
where $\hat I$ is the estimate described below, $\beta_{|S|}$ the threshold of its test for a conditioning set of size $|S|$ and $\xi_{ab \cdot S} \in \{-1, +1\}$ a sign. The sign is that of the Spearman correlation of $a$ and $b$ for an empty $S$, and that of their residuals after a nearest-neighbor regression on $S$ otherwise. The target on $\mathcal{P}_{\mathrm{ind}}$, the bound on $\mathcal{P}_{\mathrm{dep}}$ and the edge strength $\rho_{uv}$ of the operator of Eq.~\ref{eq:residual} read $\tau_{ab}$ in place of the distance correlation. The weights of the neighbor mean $c_v$ of Eq.~\ref{eq:reg} and of the refinement read it in place of the Pearson correlation.

Second, the conditional dependence term compares the residuals of the vectors given one conditioning set per pair,
\begin{equation}
\frac{\lambda_{\mathrm{con}}}{|\mathcal{P}_{\mathrm{con}}|} \sum_{(a,b) \in \mathcal{P}_{\mathrm{con}}} \big( \cos(e_{a \cdot S_{ab}}, e_{b \cdot S_{ab}}) - \tau_{ab \cdot S_{ab}} \big)^2,
\qquad
S_{ab} = \big(\mathrm{pa}(a) \cup \mathrm{pa}(b)\big) \setminus \{a, b\} .
\label{eq:cmi-con}
\end{equation}
By Proposition~\ref{prop:cmi}, the absolute value of the first argument is $\psi$ of the CMI of the vectors given $S_{ab}$, and that of the target is $\psi$ of the CMI of the measurements given $S_{ab}$. The target of the main method compares regression residuals, which are the noise terms only when the noise enters additively (Appendix~\ref{apx:theory-objective}). The CMI given $S_{ab}$ measures the dependence given $S_{ab}$ directly, with no premise on how the noise enters. Every term keeps its form, and the anchors and the trained components are those of the main method.

\paragraph{The estimator.}
The CMI of the measurements is estimated by CCMI \citep{CCMI}, with the networks and the training settings of the source. For any conditional law $q(b \mid S)$,
\begin{equation}
I(a; b \mid S) = D_{\mathrm{KL}}\big(p(a, b, S) \,\|\, p(a, S)\, q(b \mid S)\big) - D_{\mathrm{KL}}\big(p(b, S) \,\|\, p(S)\, q(b \mid S)\big),
\label{eq:ccmi}
\end{equation}
since the difference of the two divergences is $\mathbb{E} \log \big(p(b \mid a, S) / p(b \mid S)\big)$. The rows split into two halves. For each row of the first half, $b$ is taken from the row of the second half whose value of $S$ is nearest, which gives samples of $q(b \mid S)$. A classifier separates each pair of samples, and its likelihood ratio enters the Donsker-Varadhan bound of the divergence. The nearest-neighbor draw only approximates $p(b \mid S)$, and Eq.~\ref{eq:ccmi} holds for any $q$, so the second divergence removes the error of the draw from the first. For an empty $S$, the estimate is the divergence between $p(a, b)$ and $p(a)\, p(b)$, with $b$ permuted over the rows.

\paragraph{Ranks and latent codes.}
Every variable is replaced by its ranks before the estimation, and the estimates use at most 2{,}000 rows of a system. The CMI does not change under an invertible transformation of each argument, so the ranks leave it unchanged. On a questionnaire, a latent variable enters through a one-dimensional code of its observed descendants, which a denoising autoencoder \citep{DAE} computes. By the same invariance, a code that recovers the latent variable up to an invertible transformation suffices. A latent code is computed without the columns that the other parts of the same estimate read, so that no dependence comes from a shared column. An estimate in which a latent variable keeps no column of its own gives no target.

\paragraph{Tests.}
A value enters Eq.~\ref{eq:cmi-target} only when a test against surrogates rejects independence at the 5\% level. The threshold $\beta_{|S|}$ is the 95th percentile of the estimator on surrogate estimates with conditioning sets of the same size, drawn from the pairs of the same system. For a nonempty $S$, a surrogate permutes $a$ among rows that are near in $S$, by the local permutation of \citet{Runge2018}. This removes the dependence of $a$ and $b$ given $S$ and keeps the dependence of $a$ on $S$. The rows of an arm, a log or a drive are serial, and the estimator respects their order through a window of $w$ rows. This $w$ is the Theiler window \citep{Theiler1986}, the largest lag at which the median absolute autocorrelation of the signals is at least one half, capped at a tenth of the rows. For $w > 0$, the halves of the estimator are contiguous and separated by a gap of $w$ rows, and a nearest neighbor is never taken within $w$ rows of its row. A surrogate for an empty $S$ then shifts $a$ circularly by at least $w + 1$ rows and at least a tenth of the rows, which keeps its serial correlation. For $w = 0$, as on the exchangeable rows of a questionnaire, a surrogate for an empty $S$ permutes $a$ over all rows.

\section{Translation via Prefix Adaptation}
\label{apx:plugin}

The adapter receives the vector $\hat e_v$ of every unnamed variable from the optimization of Appendix~\ref{apx:solver} and passes it to a frozen language model as a prefix. The prompt gives the neighborhood of $v$ in $\hat G$ as text, and the model writes the name. The prefix enters the KV cache of every attention layer, so the model attends to $\hat e_v$ at every step of writing the name. In contrast, a soft prompt at the input layer \citep{ClipCap} reaches the later layers only through the hidden states. Since $\hat e_v$ lies in the embedding space of the optimization, which also holds the names, one adapter serves every questionnaire scale and one every arm, log and drive.

\subsection{Inputs}
\label{apx:plugin-inputs}

The adapter input $z_v$ of Section~\ref{sec:method-translation} concatenates five vectors $x_1, \ldots, x_5$, each normalized when nonzero, and a vector $s_v$ of a few scalars. For an observed variable, $x_1$ is its vector $\hat e_v$, $x_2$ the vector of its latent parent and $x_3$ the component of $x_1$ orthogonal to $x_2$. The vector $x_4$ is the correlation-weighted mean of the vectors of the named variables most correlated with $v$. The vector $x_5$ is that mean with its negatively correlated members passed through the semantic negation map $f_{\mathrm{neg}}$. An observed variable with no latent parent in the graph supplies its own vector as $x_2$. The discovered graph leaves part of the items of a scale without a latent parent (Table~\ref{tab:struct-q}). For a latent variable, $x_1$ is the loading-weighted mean of the vectors of its children and $x_2$ is its own vector $\hat e_v$. The means of $x_4$ and $x_5$ are then taken over its children.

The scalars $s_v$ record the position of $v$ among its neighbors. They include its correlation to its most correlated named variables and its signed loading on the latent parent. They also include the share of its siblings that are unnamed and the cosine similarity between its vector and the neighbor mean. A robot channel keeps the five vectors, with $x_5$ set to zero since a channel carries no keying. Its scalars describe its parents and children, its lagged correlations and its role as an actuator or a sensor.

\subsection{Architecture}
\label{apx:plugin-arch}

The adapter maps $z_v$ to the prefix $P_v$ and the prefix to the KV cache of every layer $j$ by
\begin{equation}
\begin{aligned}
z_v &= [\,x_1;\; \ldots;\; x_5;\; s_v\,], \qquad
\gamma = \sigma\big(q_\gamma(s_v, \|x_1\|, \ldots, \|x_5\|)\big), \\
P_v &= q(z_v) = \big[\,\gamma_1\, q_1(x_1, s_v);\; \ldots;\; \gamma_5\, q_5(x_5, s_v)\,\big] \in \mathbb{R}^{m \times d_p}, \\
\mathbf{K}_j &= P_v W^K_j, \qquad \mathbf{V}_j = P_v W^V_j
\qquad \text{for every layer } j.
\end{aligned}
\label{eq:adapter}
\end{equation}
Each input vector $x_i$, with the scalars appended, passes through its own two-layer network $q_i$ into a $2 \times d_p$ block of two prefix vectors, so the prefix length is $m = 10$. A learned position embedding is added to the prefix. The gate network $q_\gamma$ computes one sigmoid gate $\gamma_i$ per input vector from the scalars and from the norms $\|x_i\|$, which are one for a present vector and zero for an absent one. The gates are initialized near one, and they raise the weight of the prefix where the neighborhood has few named variables. The matrices $W^K_j$ and $W^V_j$ of layer $j$ are learned and shared by every variable.

\subsection{Prompt and evidence line}
\label{apx:plugin-prompt}

The prompt text $t_v$ is a fixed instruction, the incident edges of $v$ and the evidence line. The instruction states what the prefix carries and asks for one short name. Each variant receives an instruction whose wording is true of its own input. The edge block is rendered from the graph with one line per incident edge.

\begin{quote}\small\ttfamily
CAUSAL NEIGHBORS:\\
- CAUSE: an unnamed latent factor also causes: "$\langle$named sibling$\rangle$"; \ldots\\
- CAUSE: "$\langle$named parent$\rangle$" influences it\\
- EFFECT: it influences "$\langle$named child$\rangle$"\\
- EFFECT: it causes the item "$\langle$named child, or (label hidden)$\rangle$"
\end{quote}

An item shows at most six named siblings and a latent variable at most
twenty named children. An unnamed neighbor of a latent variable appears as the literal string \texttt{(label hidden)}, so the prompt shows how much of the group is named. The identifier of the target
never appears.

The evidence line reports the strongest signed correlations with named variables and the membership and keying of the variable within its group. It also reports the nearest named neighbors of the variable in the embedding space with their cosine similarities and its partial correlations given the group. Last, it gives the sibling that shares the most variance with the variable beyond the group. The robot
form reports lead and lag relations with signed correlations. The evidence
line also carries the nearest bank phrases decoded from the variable's
own vector $\hat e_v$ and from the vectors of its unnamed neighbors.
Where a neighborhood carries no name at all, those phrases are the only
information left.

\subsection{Training}
\label{apx:plugin-training}

The language model stays frozen. A low-rank update is inserted into the query and value projections of every layer, and only that update and the adapter are optimized, under the loss
\begin{equation}
\mathcal{L}_{\mathrm{adapter}} = \mathcal{L}_{\mathrm{NLL}}
+ \lambda_{\mathrm{sim}}\,\big(1 - \cos(\bar\phi_{\mathrm{LM}}(\hat y), \bar\phi_{\mathrm{LM}}(\ell(v)))\big)
+ \lambda_{\mathrm{ctr}}\, \mathcal{L}_{\mathrm{ctr}}
+ \lambda_{\mathrm{sib}}\, \mathcal{L}_{\mathrm{sib}},
\label{eq:plugin-loss}
\end{equation}
where $\mathcal{L}_{\mathrm{NLL}}$ is the negative log-likelihood of the
true name at its token positions. The second term is the semantic similarity loss of Section~\ref{sec:method-translation}. Here $\bar\phi_{\mathrm{LM}}(\hat y)$ is the mean over those positions of the expected token embedding under the distribution that the model predicts for the name $\hat y$. The expected embedding is taken under the token embedding table $\phi_{\mathrm{LM}}$ of the model. The term pulls $\bar\phi_{\mathrm{LM}}(\hat y)$ toward the mean embedding of the tokens of the true name in the same table. The
contrastive loss $\mathcal{L}_{\mathrm{ctr}}$ is a softmax cross-entropy
over the cosine similarities between that expected embedding and the
embeddings of the true name and of a few sibling names. The sibling loss $\mathcal{L}_{\mathrm{sib}} = -\log(1 - p_{\mathrm{sib}})$ penalizes the
probability mass $p_{\mathrm{sib}}$ that a token position of the name
places on tokens unique to a sibling name. The last two push the model
toward a name that separates the variable from its siblings.

One adapter is trained for the questionnaires and one for the robots, on the data of Appendix~\ref{apx:data-roles} and with the same recipe. Only the shape of the KV cache enters the adapter, so the recipe is also the same for every backbone. On the questionnaires, sets of masked names are drawn at the four masking levels of the evaluation, half under the true graph and half under a degraded edge set. Each generated arm or vehicle enters once under its true graph and once under an edge set degraded to the quality of a discovered graph. The adapter is thus trained on the kind of graph error that the discovery step makes. This design applies domain randomization \citep{Tobin2017} to the structure.

The examples are also corrupted, so that the adapter learns to use both the prefix and the text. Of the 28{,}014 base examples of the questionnaires, the prompt text is removed in 5{,}590 and the evidence line in 5{,}719. An unrelated variable's vector replaces $z_v$ in 3{,}420 examples, and a named sibling's vector replaces it in 1{,}565. A further 5{,}059 examples repeat an item whose sibling is also unnamed, with the evidence line removed. Latent targets are drawn more often than item targets, and training stops early on a validation split of the synthetic scales and of the generated arms and vehicles.

\subsection{Decoding}
\label{apx:plugin-decoding}

At inference the adapter maps $z_v$ to the prefix, and the model decodes greedily from the prefix and the prompt text up to a short budget of new tokens. The first line of the output is taken as the name.

\section{Evaluation Protocol}
\label{apx:protocol}

The evaluation scores the name a method writes for an unnamed variable against a target. NRR@1 and MRR rank the name in an embedding space, and identification and naming quality ask a language-model judge.

\subsection{Reference for discovered latent variables}
\label{apx:latent-ref}

A discovered latent variable is given a reference by majority over the observed items beneath it. It carries no name of its own, since a rank test introduces it and only the items beneath it identify it. Every observed item that it reaches in the graph, directly or through nested latent variables, is looked up in the published scoring key. The published latent variable that owns most of those items becomes the target against which the written name is scored. Every discovered latent variable reaches at least one observed item. On four scales the reference owns most of the items beneath a discovered latent variable. On the World Values Survey, whose discovered groups cut across the Welzel sub-indices, it owns about half of them (Table~\ref{tab:struct-q}).

The rank tests introduce a latent variable from the rank deficiency of a cross-covariance block, so its identity is fixed only up to that block (Theorem~\ref{thm:rlcd}). Factor analysis uses the same convention and matches an estimated factor to a reference factor by its loading pattern \citep{LorenzoSeva2006}. The majority rule is the discrete form of that match.

\subsection{Scoring space, candidate pool and metrics}
\label{apx:protocol-metrics}

\paragraph{Scoring space.}
Every method is scored from the text it writes, in one frozen public embedding space shared by every method. We use \texttt{intfloat/e5-large-v2} at a pinned revision and rank by cosine similarity. The encoder's \texttt{query:} prefix is applied to both the written name and the candidates.

\paragraph{Candidate pool.}
The pool is every name of the same arm, log, drive or scale. For a robot channel it is every channel name of that arm, log or drive. For an item it is every item text of the scale, and for a latent variable every latent variable name of its key. Because the known names stay in the pool, a
method that copies a known name scores zero on NRR@1 by
construction. Under the discovered graph, the targets are the discovered latent variables, each assigned to a latent variable of the key by the majority rule above. The pool stays the full list of the key. A tie takes the mean rank of the tied positions, and an empty
output ranks last.

\paragraph{Metrics.}
NRR@1 stands for the recall at rank 1 of name retrieval. It is the rate at which the true name is ranked first among the candidates. MRR is the mean reciprocal rank of the true name. The identification metric asks a judge to pick the target out
of five candidates and credits the case only when the judge picks it in
both presentation orders. Naming quality, reported for latent variables, adds two requirements to identification. One requires the output to have the form of a name, and one rejects an output that repeats an item text verbatim. Every cell is the mean over the cases of its scales, arms, logs or drives, so a scale with more unnamed items weighs more. Every questionnaire cell is a mean over folds.

\paragraph{Auxiliary metrics.}
Three auxiliary metrics are reported in Appendix~\ref{apx:tables}. Sibling discrimination accuracy (SDA) is computed over the variables that have a sibling. It counts the case when the written name lies closer in the scoring space to the true name than to every sibling of the variable. The siblings are the published siblings of an item or the channels of the same type on an arm, log or drive. The metric therefore measures whether a name that reaches the right latent variable or channel type also reaches the right item or channel.
Robot channels report two further metrics. Type
accuracy assigns the written name a channel type and counts the case when
that type is the type of the true name. The type of a true name follows from its documented wording. A channel that a platform names after an arm joint, such as the waist or the elbow, is a joint channel. A written
name that equals a
documented channel name takes the type of that name. Any other written name is embedded in the scoring space and takes the nearest of 21 fixed channel-type descriptions. These descriptions range from the angle of one arm joint to the rotation speed of one wheel. Coordinate accuracy counts the case when the
digit, axis letter or side word in the written name equals the one in the
true name, over the names that carry one.

\subsection{The judge}
\label{apx:protocol-judge}

The judge is Gemma 3 12B in its instruction-tuned release \citep{Gemma3},
served locally at temperature 0. For one case it receives the written name
and five candidates, the target among them. The four distractors for an
item are two published siblings of the item, drawn from the same
latent variable, and two other items of the same scale. The distractors for a latent variable are up to four sibling latent variables of the same scale. A scale with fewer siblings fills the remaining slots with latent variables of other scales. Under the discovered graph the siblings are the
latent variables of the key that the discovered groups reach. For a robot channel, two distractors are the channel names of the same arm, log or drive that share the most words with the true name. The other two are further channel names of that arm, log or drive. A name that fits only the latent variable or the channel type therefore competes with the siblings. On items a description that names only the latent variable is right about one time in three. Every case is judged twice, once in a seeded candidate order and once in the reverse order \citep{MTBench, NotFairEvaluators}, which controls for the position bias of a language-model judge. A case is kept only when both responses parse.

\paragraph{The two requirements of naming quality.}
The form check asks the same judge whether the output names a concept or quotes a specific statement, and only a name passes.
The verbatim check runs offline and overrides the judge. The check lowercases the output, collapses whitespace and removes a final period. An output that then equals an item text of the same scale is rejected whatever the judge said. Together the two checks remove the degenerate solution of repeating a neighbor. Identification alone can credit such a repeat when the neighbor belongs to the same latent variable.

\subsection{Variation across folds}
\label{apx:protocol-variation}

The spread between folds is wide and grows as names are removed, since a
high masking level leaves fewer anchors and each one weighs more. On DASS latent variables under the discovered graph, identification is $.689 \pm .183$ at 20\% masking and $.222 \pm .208$ at 90\%. The $\pm$ term is the standard deviation across the five folds of nine discovered targets each.
On RIASEC the same quantities are $.800 \pm .088$ and $.369 \pm .084$ over
thirteen discovered targets per fold. Differences between methods within one fold are
therefore better evidence than differences between cells, which is why the comparisons in this paper are same-fold and same-target. The one exception is latent variables with no structure supplied, where \ours{} names the latent variables it discovers (Appendix~\ref{apx:j-graphs}). The band around our curves in every result figure is one standard error over replicates. It equals the sample standard deviation across the replicates divided by the square root of their number. On a questionnaire column a replicate is one fold, with the five
scales pooled by sample inside it, so the band rests on five values. On a
robot column a replicate is one arm, log or drive, so the band rests on the six
simulated arms, the three real logs or the ten drives. The value of one arm, log or drive at
20\% masking rests on a few unnamed channels, which is why the robot
bands are widest there. The bands are clipped to the unit interval.
Appendix~\ref{apx:tables} reports the paired differences to the strongest
baseline on these replicates.

\section{Baselines and Fidelity to the Published Designs}
\label{apx:roster}

The comparison covers the three ways a variable has been named from its
data. A method selects a concept from a fixed bank, asks a language model,
or regresses into the embedding space of the names. With a structure
supplied the comparison adds two methods that consume the graph.

\begin{itemize}
\item \textbf{CLIP-Dissect-e5} \citep{CLIPDissect} selects the concept from a fixed bank whose
activation pattern best matches the variable.
\item \textbf{Automated Interpretability} \citep{autointerpBill} prompts a language model to explain the variable from a sample of its cases and then scores the explanation by simulation.
\item \textbf{Delphi} \citep{Delphi} explains the variable from a sample of its activating cases and adds detection scoring passes.
\item \textbf{SASC-style} \citep{SASC} prompts a language model for a short explanation of
the variable's response profile.
\item \textbf{DeViSE-style} \citep{DeViSE} regresses the variable's correlation profile into
the embedding space of the names and decodes the predicted vector to its three
nearest entries in the concept bank of Appendix~\ref{apx:solver}. With fewer
than two named variables there is no regression to fit and the method
returns no name, which NRR@1 and MRR score as a miss.
\item \textbf{Base language model reading the graph} receives the same prompt as \ours{}, which holds the edges and the evidence line. It runs with no prefix and no low-rank update, so it isolates what the prefix and the low-rank update contribute.
\item \textbf{GraphMAE} \citep{GraphMAE} reconstructs the masked node representation by
masked graph autoencoding with a GCN encoder \citep{GraphMAEGCN} and decodes the reconstruction to the nearest known name. On a latent variable it decodes to the nearest named item text. A masked name is therefore never among its candidates.
\end{itemize}

With no structure supplied, every baseline names latent variables behind a shared factor-analysis front end, since naming a latent variable requires a grouping and the front end supplies it. With a structure supplied, each baseline takes its latent grouping from that graph, except SASC-style and DeViSE-style, which keep the factor-analysis grouping. The front end is given the list of latent variables of the
key and matches each to the factor whose item set overlaps it most. For a
parent-level latent variable of a nested key the items are those of its
child latent variables. A baseline therefore names every latent variable
of the key, and one factor can serve two of them. \ours{} names the
latent variables it discovers, referred to the key as
Appendix~\ref{apx:latent-ref} describes.

\paragraph{Name-copy references.}
Three references build a name from the known names alone, and Appendix~\ref{apx:tables} reports them apart from the roster above.
Correlation kNN returns the name of the most correlated named variable.
Feature Propagation \citep{FeaturePropagation} diffuses the vectors of the known names over a correlation graph that the port builds. It decodes the result to the nearest known name on an arm, log or drive and to the nearest phrases of the concept bank on a questionnaire. EFA loading naming returns the text of
the item that loads highest on the factor.

\paragraph{Common protocol.}
Every baseline that calls a language model calls the same pinned Qwen3-4B snapshot, served locally at seed 0 in bfloat16. This is the default backbone of our own method. The one exception is SASC-style, which samples its candidate explanations at temperature .7 and scores them deterministically. Every output of our method and of the baselines is ranked in the same frozen encoder and judged by the same judge in the same dual-order protocol. The cases of a baseline copy the fold, the set of masked names and the target list from ours. The exception is latent variables with no structure supplied, where a baseline names the latent variables of the key. Every method sees the known names of the fold only. A missing or refused output is a failure on NRR@1 and MRR and
is left out of the identification rate. Methods with no structure
supplied receive no edges and no grouping from the key. The factor
analysis supplies the grouping, and the key's item sets serve only to
assign each factor its scoring target.

\paragraph{Fidelity to the published designs.}
Each reimplementation is a port to a new domain.

\begin{itemize}\setlength{\itemsep}{1pt}
\item \textbf{Automated Interpretability} keeps the explain-then-simulate
structure, standardized activations, simulation on a disjoint held-out set
and correlation scoring. The examples are respondent profiles or robot windows, where the original uses token windows. They are drawn across the activation range, where the original explains from top-activating snippets.
\item \textbf{Delphi} uses the contrastive explainer of the library, which explains top-activating examples against hard negatives. The hard negatives here are the response-space nearest neighbors of the strong examples. The port keeps detection scoring on balanced held-out profiles. The port scores three candidate explanations jointly in one
detector request, with disjoint validation and test stages.
\item \textbf{CLIP-Dissect-e5} keeps SoftWPMI and the rank-reorder scoring
at the source defaults. Probe images become profile texts built from the
known names of the fold, and CLIP \citep{CLIP} becomes the frozen e5 encoder, which
the table name records. The two similarity functions are combined by rank averaging, since their scales differ.
\item \textbf{SASC-style} keeps the two phases, in which it proposes short candidate explanations and scores each against data. Candidates come from anchor
correlations, where the original uses n-gram summaries. The score is sign
agreement between predicted and measured correlations, since the original
scorer runs the module on new inputs and an unnamed survey item has no new
respondents. This is the largest structural change among the language
model baselines, and the table name records it.
\item \textbf{GraphMAE} keeps masked feature reconstruction with
learnable mask tokens, re-mask decoding and the paper's scaled cosine error.
The encoder is a GCN, where the paper defaults to GAT \citep{GAT}. A latent
variable has no node text to reconstruct, so its prediction is the
decoder output of its neighbor-conditioned state.
\item \textbf{DeViSE-style} keeps only the mapping of a variable into
the embedding space of the names and the decoding by nearest neighbors. Ridge
regression on correlation features replaces the deep visual encoder and its
hinge ranking loss, so the training signal, the model class and the loss all
differ. The table name records the change.
\item The \textbf{factor-analysis front end} is our own component.
It uses parallel analysis \citep{Horn1965} for the number of factors, varimax rotation \citep{Kaiser1958} and
argmax loading membership.
\end{itemize}

The scored output of Automated Interpretability and Delphi is a short
name of one to four content words, which our interface requires.
Their one-sentence explanation is stored, since ranking a sentence
against a set of names would measure something else.

\section{Supplemental Experimental Results}
\label{apx:tables}

This appendix supplements the first three research questions of
Section~\ref{sec:experiments} in their order.
Appendices~\ref{apx:j-lead} to~\ref{apx:j-checks} belong to RQ1. They give the lead of our method on every metric, the comparison of the discovered graph with the supplied one, and the same lead replicate by replicate and scale by scale. Appendix~\ref{apx:j-components} belongs to RQ2 and takes
the components in turn. Appendix~\ref{apx:j-backbones} belongs to
RQ3. The complete tables close the appendix.

\subsection{RQ1: the lead with no structure supplied}
\label{apx:j-lead}

The finding of RQ1 is that \ours{} leads on every column and that the lead is largest where the names give a language prior the least information.
Tables~\ref{tab:st1-robot} to~\ref{tab:st1-latents} give every metric behind that claim beside the three metrics of Table~\ref{tab:st1-main}. These are the sibling discrimination of items and channels, the type and coordinate accuracy of channels and the naming quality of latent variables. At 50 and 75\% masking on the three robot columns, the NRR@1 of our method is 2.2 to 5.1 times that of the strongest baseline. Its identification rate is 1.6 to 2.9 times and its MRR 1.4 to 2.1 times that of the strongest baseline.
\begin{table}[t]\centering\scriptsize\setlength{\tabcolsep}{2.4pt}
\caption{No structure supplied, \ours{} on the graph it discovers. Bold is the best value in a column within a set of variables and underline the second best. A dash is a method that returned no name at that level. Appendix~\ref{apx:tables} carries every metric each set of variables reports.}
\label{tab:st1-main}
\fittable{llcccc:cccc:cccc}{%
\toprule
 &  & \multicolumn{4}{c:}{NRR@1} & \multicolumn{4}{c:}{MRR} & \multicolumn{4}{c}{identification} \\
variables & method & 20 & 50 & 75 & 90 & 20 & 50 & 75 & 90 & 20 & 50 & 75 & 90 \\
\midrule
simulated arms & \cellcolor{oursrow}\ours{} & \cellcolor{oursrow}\textbf{.083} & \cellcolor{oursrow}\textbf{.140} & \cellcolor{oursrow}\textbf{.137} & \cellcolor{oursrow}\textbf{.046} & \cellcolor{oursrow}\textbf{.224} & \cellcolor{oursrow}\textbf{.286} & \cellcolor{oursrow}\textbf{.278} & \cellcolor{oursrow}\textbf{.194} & \cellcolor{oursrow}\textbf{.106} & \cellcolor{oursrow}\textbf{.203} & \cellcolor{oursrow}\textbf{.208} & \cellcolor{oursrow}\textbf{.116} \\
 & CLIP-Dissect-e5 & \underline{.083} & \underline{.058} & \underline{.027} & .027 & \underline{.185} & \underline{.186} & .116 & \underline{.120} & \underline{.062} & .050 & .033 & \underline{.060} \\
 & AutoInterp & .000 & .008 & .011 & .005 & .078 & .095 & .095 & .086 & .000 & .033 & .039 & .033 \\
 & Delphi & .000 & .025 & .016 & .005 & .079 & .120 & .119 & .083 & \underline{.062} & .034 & \underline{.072} & .028 \\
 & SASC-style & .021 & \underline{.058} & \underline{.027} & .018 & .112 & .162 & .130 & .118 & .042 & \underline{.074} & .060 & .028 \\
 & DeViSE-style & .021 & .033 & .022 & \underline{.032} & .143 & .136 & \underline{.131} & .116 & .000 & .025 & .016 & .005 \\
\midrule
real logs & \cellcolor{oursrow}\ours{} & \cellcolor{oursrow}\textbf{.417} & \cellcolor{oursrow}\textbf{.296} & \cellcolor{oursrow}\textbf{.195} & \cellcolor{oursrow}\textbf{.120} & \cellcolor{oursrow}\textbf{.485} & \cellcolor{oursrow}\textbf{.385} & \cellcolor{oursrow}\textbf{.314} & \cellcolor{oursrow}\textbf{.248} & \cellcolor{oursrow}\textbf{.417} & \cellcolor{oursrow}\textbf{.296} & \cellcolor{oursrow}\textbf{.244} & \cellcolor{oursrow}\textbf{.180} \\
 & CLIP-Dissect-e5 & \underline{.167} & .074 & .049 & .080 & \underline{.305} & \underline{.258} & .205 & .226 & .000 & .037 & \underline{.100} & .080 \\
 & AutoInterp & .083 & .037 & .049 & .020 & .265 & .200 & .178 & .152 & \underline{.250} & .111 & .098 & .041 \\
 & Delphi & .000 & .037 & \underline{.073} & \underline{.120} & .163 & .212 & \underline{.215} & \underline{.228} & \underline{.250} & \underline{.120} & .024 & .100 \\
 & SASC-style & .000 & \underline{.111} & .024 & .060 & .145 & .223 & .138 & .184 & .083 & .115 & .073 & .080 \\
 & DeViSE-style & .083 & .074 & .024 & .000 & .216 & .188 & .178 & .084 & .083 & .074 & .098 & \underline{.120} \\
\midrule
vehicle CAN & \cellcolor{oursrow}\ours{} & \cellcolor{oursrow}\textbf{.800} & \cellcolor{oursrow}\textbf{.367} & \cellcolor{oursrow}\textbf{.244} & \cellcolor{oursrow}\textbf{.218} & \cellcolor{oursrow}\textbf{.859} & \cellcolor{oursrow}\textbf{.570} & \cellcolor{oursrow}\textbf{.412} & \cellcolor{oursrow}\textbf{.397} & \cellcolor{oursrow}\textbf{.800} & \cellcolor{oursrow}\textbf{.400} & \cellcolor{oursrow}\textbf{.300} & \cellcolor{oursrow}\textbf{.282} \\
 & CLIP-Dissect-e5 & .000 & .067 & \underline{.078} & .091 & .135 & .229 & \underline{.298} & \underline{.286} & .000 & .050 & .089 & .118 \\
 & AutoInterp & .000 & .000 & .000 & .000 & .184 & .245 & .171 & .191 & \underline{.050} & .017 & .000 & .009 \\
 & Delphi & .000 & .017 & .067 & \underline{.118} & .195 & .292 & .241 & .281 & .000 & .067 & .033 & .055 \\
 & SASC-style & .000 & \underline{.167} & .044 & .109 & \underline{.276} & \underline{.366} & .223 & .268 & .000 & \underline{.250} & \underline{.146} & \underline{.191} \\
 & DeViSE-style & .000 & .000 & .000 & .000 & .180 & .173 & .141 & .083 & \underline{.050} & .067 & .022 & -- \\
\midrule
questionnaire items & \cellcolor{oursrow}\ours{} & \cellcolor{oursrow}\textbf{.121} & \cellcolor{oursrow}\textbf{.098} & \cellcolor{oursrow}\textbf{.059} & \cellcolor{oursrow}\textbf{.028} & \cellcolor{oursrow}\textbf{.253} & \cellcolor{oursrow}\textbf{.216} & \cellcolor{oursrow}\textbf{.164} & \cellcolor{oursrow}\textbf{.105} & \cellcolor{oursrow}\textbf{.356} & \cellcolor{oursrow}\textbf{.364} & \cellcolor{oursrow}\underline{.283} & \cellcolor{oursrow}\underline{.217} \\
 & CLIP-Dissect-e5 & .028 & .023 & .022 & .014 & .096 & .095 & .099 & .071 & .163 & .179 & .183 & .153 \\
 & AutoInterp & .010 & .012 & .008 & .007 & .058 & .055 & .054 & .054 & .085 & .103 & .101 & .099 \\
 & Delphi & .008 & .010 & .013 & .011 & .060 & .055 & .060 & .063 & .141 & .122 & .133 & .113 \\
 & SASC-style & \underline{.079} & \underline{.062} & \underline{.035} & \underline{.021} & \underline{.211} & \underline{.175} & \underline{.128} & \underline{.087} & \underline{.356} & \underline{.342} & \textbf{.303} & \textbf{.219} \\
 & DeViSE-style & .033 & .023 & .014 & .008 & .131 & .110 & .087 & .064 & .238 & .227 & .193 & .164 \\
\midrule
latent variables & \cellcolor{oursrow}\ours{} & \cellcolor{oursrow}\textbf{.540} & \cellcolor{oursrow}\textbf{.486} & \cellcolor{oursrow}\textbf{.386} & \cellcolor{oursrow}\textbf{.240} & \cellcolor{oursrow}\textbf{.660} & \cellcolor{oursrow}\textbf{.623} & \cellcolor{oursrow}\textbf{.560} & \cellcolor{oursrow}\textbf{.424} & \cellcolor{oursrow}\textbf{.531} & \cellcolor{oursrow}\textbf{.511} & \cellcolor{oursrow}\textbf{.441} & \cellcolor{oursrow}\textbf{.254} \\
 & CLIP-Dissect-e5 & .073 & .119 & .127 & \underline{.142} & .257 & .288 & .298 & \underline{.292} & .128 & .120 & .151 & .121 \\
 & AutoInterp & .096 & .100 & .108 & .088 & .258 & .259 & .270 & .252 & .160 & .140 & .170 & .139 \\
 & Delphi & .123 & .108 & .085 & .069 & .266 & .269 & .251 & .235 & .138 & .108 & .100 & .115 \\
 & SASC-style & \underline{.369} & \underline{.315} & \underline{.204} & .115 & \underline{.523} & \underline{.479} & \underline{.393} & .291 & \underline{.502} & \underline{.424} & \underline{.319} & \underline{.204} \\
 & DeViSE-style & .138 & .138 & .108 & .112 & .291 & .281 & .260 & .266 & .151 & .154 & .123 & .112 \\
\bottomrule}
\end{table}

\begin{table}[!htb]\centering\scriptsize\setlength{\tabcolsep}{2.4pt}
\caption{Robot channels with no structure supplied, \ours{} on the graph it discovers. The table gives every metric a robot channel reports, which are the three of Table~\ref{tab:st1-main}, sibling discrimination, type accuracy and coordinate accuracy (Appendix~\ref{apx:protocol}).}
\label{tab:st1-robot}
\fittable{llcccc:cccc:cccc}{%
\toprule
 &  & \multicolumn{4}{c:}{NRR@1} & \multicolumn{4}{c:}{MRR} & \multicolumn{4}{c}{SDA} \\
variables & method & 20 & 50 & 75 & 90 & 20 & 50 & 75 & 90 & 20 & 50 & 75 & 90 \\
\midrule
simulated arms & \ours{} & \textbf{.083} & \textbf{.140} & \textbf{.137} & \textbf{.046} & \textbf{.224} & \textbf{.286} & \textbf{.278} & \textbf{.194} & .128 & .190 & .153 & .164 \\
 & CLIP-Dissect-e5 & \underline{.083} & \underline{.058} & \underline{.027} & .027 & \underline{.185} & \underline{.186} & .116 & \underline{.120} & \underline{.234} & .155 & .142 & .169 \\
 & AutoInterp & .000 & .008 & .011 & .005 & .078 & .095 & .095 & .086 & \textbf{.255} & \underline{.207} & .165 & .155 \\
 & Delphi & .000 & .025 & .016 & .005 & .079 & .120 & .119 & .083 & .043 & \textbf{.241} & \textbf{.193} & \textbf{.197} \\
 & SASC-style & .021 & \underline{.058} & \underline{.027} & .018 & .112 & .162 & .130 & .118 & .170 & .147 & .165 & \underline{.183} \\
 & DeViSE-style & .021 & .033 & .022 & \underline{.032} & .143 & .136 & \underline{.131} & .116 & .170 & .198 & \underline{.170} & .178 \\
\midrule
real logs & \ours{} & \textbf{.417} & \textbf{.296} & \textbf{.195} & \textbf{.120} & \textbf{.485} & \textbf{.385} & \textbf{.314} & \textbf{.248} & \textbf{.455} & \textbf{.360} & \textbf{.359} & \underline{.255} \\
 & CLIP-Dissect-e5 & \underline{.167} & .074 & .049 & .080 & \underline{.305} & \underline{.258} & .205 & .226 & .273 & .160 & .128 & .234 \\
 & AutoInterp & .083 & .037 & .049 & .020 & .265 & .200 & .178 & .152 & \underline{.364} & \underline{.240} & \underline{.205} & .170 \\
 & Delphi & .000 & .037 & \underline{.073} & \underline{.120} & .163 & .212 & \underline{.215} & \underline{.228} & .182 & .200 & .179 & \textbf{.298} \\
 & SASC-style & .000 & \underline{.111} & .024 & .060 & .145 & .223 & .138 & .184 & .091 & \underline{.240} & \underline{.205} & .234 \\
 & DeViSE-style & .083 & .074 & .024 & .000 & .216 & .188 & .178 & .084 & .273 & .200 & .179 & .064 \\
\midrule
vehicle CAN & \ours{} & \textbf{.800} & \textbf{.367} & \textbf{.244} & \textbf{.218} & \textbf{.859} & \textbf{.570} & \textbf{.412} & \textbf{.397} & \textbf{.800} & \underline{.367} & \textbf{.500} & \textbf{.367} \\
 & CLIP-Dissect-e5 & .000 & .067 & \underline{.078} & .091 & .135 & .229 & \underline{.298} & \underline{.286} & \underline{.300} & .167 & .150 & .233 \\
 & AutoInterp & .000 & .000 & .000 & .000 & .184 & .245 & .171 & .191 & .100 & .100 & .100 & .000 \\
 & Delphi & .000 & .017 & .067 & \underline{.118} & .195 & .292 & .241 & .281 & .100 & .133 & .300 & .000 \\
 & SASC-style & .000 & \underline{.167} & .044 & .109 & \underline{.276} & \underline{.366} & .223 & .268 & .000 & \textbf{.433} & \underline{.500} & \underline{.333} \\
 & DeViSE-style & .000 & .000 & .000 & .000 & .180 & .173 & .141 & .083 & .000 & .333 & .000 & .000 \\
\bottomrule}
\par\vspace{3pt}
\fittable{llcccc:cccc:cccc}{%
\toprule
 &  & \multicolumn{4}{c:}{identification} & \multicolumn{4}{c:}{type acc} & \multicolumn{4}{c}{coord acc} \\
variables & method & 20 & 50 & 75 & 90 & 20 & 50 & 75 & 90 & 20 & 50 & 75 & 90 \\
\midrule
simulated arms & \ours{} & \textbf{.106} & \textbf{.203} & \textbf{.208} & \textbf{.116} & \textbf{.396} & \textbf{.537} & \textbf{.577} & \textbf{.420} & \textbf{.106} & \textbf{.181} & \textbf{.159} & \textbf{.145} \\
 & CLIP-Dissect-e5 & \underline{.062} & .050 & .033 & \underline{.060} & .271 & .215 & .137 & .105 & \underline{.085} & .026 & .023 & .009 \\
 & AutoInterp & .000 & .033 & .039 & .033 & .000 & .107 & .126 & .114 & .000 & .009 & .034 & .000 \\
 & Delphi & \underline{.062} & .034 & \underline{.072} & .028 & .104 & .141 & .170 & .132 & .021 & .052 & \underline{.051} & \underline{.014} \\
 & SASC-style & .042 & \underline{.074} & .060 & .028 & .208 & \underline{.281} & .203 & \underline{.173} & .043 & \underline{.060} & .040 & \underline{.014} \\
 & DeViSE-style & .000 & .025 & .016 & .005 & \underline{.312} & .273 & \underline{.275} & \underline{.173} & .000 & .000 & .000 & .000 \\
\midrule
real logs & \ours{} & \textbf{.417} & \textbf{.296} & \textbf{.244} & \textbf{.180} & \textbf{.667} & \textbf{.518} & \textbf{.512} & \textbf{.460} & \textbf{.750} & \textbf{.545} & \textbf{.606} & \textbf{.579} \\
 & CLIP-Dissect-e5 & .000 & .037 & \underline{.100} & .080 & .083 & .000 & .073 & .060 & .000 & .045 & .061 & .026 \\
 & AutoInterp & \underline{.250} & .111 & .098 & .041 & \underline{.250} & .185 & \underline{.244} & \underline{.260} & .250 & .182 & .212 & .368 \\
 & Delphi & \underline{.250} & \underline{.120} & .024 & .100 & .167 & .074 & .098 & .240 & .000 & .136 & .151 & .026 \\
 & SASC-style & .083 & .115 & .073 & .080 & .083 & \underline{.222} & .146 & .180 & .250 & \underline{.409} & \underline{.333} & \underline{.395} \\
 & DeViSE-style & .083 & .074 & .098 & \underline{.120} & .083 & .111 & .195 & .000 & \underline{.500} & .227 & .303 & .316 \\
\midrule
vehicle CAN & \ours{} & \textbf{.800} & \textbf{.400} & \textbf{.300} & \textbf{.282} & \textbf{.900} & \textbf{.683} & \textbf{.356} & \textbf{.391} & \textbf{.800} & \textbf{.367} & \textbf{.500} & \textbf{.367} \\
 & CLIP-Dissect-e5 & .000 & .050 & .089 & .118 & .000 & .050 & .033 & .027 & .000 & .000 & .000 & .000 \\
 & AutoInterp & \underline{.050} & .017 & .000 & .009 & .000 & \underline{.400} & \underline{.200} & .273 & .000 & .000 & .000 & .000 \\
 & Delphi & .000 & .067 & .033 & .055 & .000 & .183 & .156 & .273 & .000 & .000 & .000 & .000 \\
 & SASC-style & .000 & \underline{.250} & \underline{.146} & \underline{.191} & \underline{.200} & .383 & .067 & \underline{.300} & .000 & \underline{.200} & .000 & \underline{.333} \\
 & DeViSE-style & \underline{.050} & .067 & .022 & -- & .000 & .000 & .000 & .000 & .000 & .000 & .000 & .000 \\
\bottomrule}
\end{table}

\begin{table}[!htb]\centering\scriptsize\setlength{\tabcolsep}{2.4pt}
\caption{Questionnaire items with no structure supplied, \ours{} on the graph it discovers, with sibling discrimination (Appendix~\ref{apx:protocol}).}
\label{tab:st1-items}
\fittable{llcccc:cccc:cccc:cccc}{%
\toprule
 &  & \multicolumn{4}{c:}{NRR@1} & \multicolumn{4}{c:}{MRR} & \multicolumn{4}{c:}{SDA} & \multicolumn{4}{c}{identification} \\
variables & method & 20 & 50 & 75 & 90 & 20 & 50 & 75 & 90 & 20 & 50 & 75 & 90 & 20 & 50 & 75 & 90 \\
\midrule
questionnaire items & \ours{} & \textbf{.121} & \textbf{.098} & \textbf{.059} & \textbf{.028} & \textbf{.253} & \textbf{.216} & \textbf{.164} & \textbf{.105} & \textbf{.226} & \textbf{.189} & \textbf{.164} & \textbf{.147} & \textbf{.356} & \textbf{.364} & \underline{.283} & \underline{.217} \\
 & CLIP-Dissect-e5 & .028 & .023 & .022 & .014 & .096 & .095 & .099 & .071 & .097 & .112 & .112 & \underline{.137} & .163 & .179 & .183 & .153 \\
 & AutoInterp & .010 & .012 & .008 & .007 & .058 & .055 & .054 & .054 & .118 & .124 & .109 & .109 & .085 & .103 & .101 & .099 \\
 & Delphi & .008 & .010 & .013 & .011 & .060 & .055 & .060 & .063 & .108 & .113 & .121 & .110 & .141 & .122 & .133 & .113 \\
 & SASC-style & \underline{.079} & \underline{.062} & \underline{.035} & \underline{.021} & \underline{.211} & \underline{.175} & \underline{.128} & \underline{.087} & \underline{.169} & \underline{.164} & \underline{.164} & .136 & \underline{.356} & \underline{.342} & \textbf{.303} & \textbf{.219} \\
 & DeViSE-style & .033 & .023 & .014 & .008 & .131 & .110 & .087 & .064 & .154 & .162 & .155 & .133 & .238 & .227 & .193 & .164 \\
\bottomrule}
\end{table}
\begin{table}[!htb]\centering\scriptsize\setlength{\tabcolsep}{2.4pt}
\caption{Latent variables, same setting as Table~\ref{tab:st1-items}, with naming quality. The baselines name the factors that a shared factor analysis extracts. \ours{} names the latent variables its own discovery step finds.}
\label{tab:st1-latents}
\fittable{llcccc:cccc:cccc:cccc}{%
\toprule
 &  & \multicolumn{4}{c:}{NRR@1} & \multicolumn{4}{c:}{MRR} & \multicolumn{4}{c:}{identification} & \multicolumn{4}{c}{naming quality} \\
variables & method & 20 & 50 & 75 & 90 & 20 & 50 & 75 & 90 & 20 & 50 & 75 & 90 & 20 & 50 & 75 & 90 \\
\midrule
latent variables & \ours{} & \textbf{.540} & \textbf{.486} & \textbf{.386} & \textbf{.240} & \textbf{.660} & \textbf{.623} & \textbf{.560} & \textbf{.424} & \textbf{.531} & \textbf{.511} & \textbf{.441} & \textbf{.254} & \underline{.446} & \textbf{.426} & \textbf{.390} & \textbf{.217} \\
 & CLIP-Dissect-e5 & .073 & .119 & .127 & \underline{.142} & .257 & .288 & .298 & \underline{.292} & .128 & .120 & .151 & .121 & .116 & .112 & .135 & .105 \\
 & AutoInterp & .096 & .100 & .108 & .088 & .258 & .259 & .270 & .252 & .160 & .140 & .170 & .139 & .160 & .140 & .170 & .139 \\
 & Delphi & .123 & .108 & .085 & .069 & .266 & .269 & .251 & .235 & .138 & .108 & .100 & .115 & .138 & .108 & .100 & .115 \\
 & SASC-style & \underline{.369} & \underline{.315} & \underline{.204} & .115 & \underline{.523} & \underline{.479} & \underline{.393} & .291 & \underline{.502} & \underline{.424} & \underline{.319} & \underline{.204} & \textbf{.502} & \underline{.424} & \underline{.319} & \underline{.204} \\
 & DeViSE-style & .138 & .138 & .108 & .112 & .291 & .281 & .260 & .266 & .151 & .154 & .123 & .112 & .062 & .065 & .069 & .054 \\
\bottomrule}
\end{table}

\textit{A channel label gives a language prior almost no semantic information.} On the three robot columns the baselines remain low at every masking level. The neighbors of a channel are labels that describe no signal, and a sampling method receives only numbers. On the vehicle CAN they even rise a little as names are masked, Delphi from .000 to .118. A sampling method writes the same few names for every channel, and such a name earns credit only once that channel is unnamed. At 90\% masking SASC-style
writes rear right wheel speed for 55 of the 110 targets and front right
wheel speed for 19. At that level one to three names remain per
real log and one per drive, so the anchors are nearly gone and the lead closes. On the real logs \ours{} and Delphi both retrieve .120.

\textit{A channel name has two parts, and \ours{} leads on both.} The type of a channel says what is measured, such as the angle of a joint. Its coordinate says which one, by an index, an axis or a side. Type
accuracy scores the first part and coordinate accuracy the second
\mbox{(Appendix~\ref{apx:protocol} and Figure~\ref{fig:name-parts})}. The type is the part the structure determines most. Our method leads type accuracy on all three robot columns at every masking level. On the simulated arms its type accuracy is 1.3 to 2.4 times that of the strongest baseline, .537 against .281 at 50\% masking and .577 against .275 at 75\%. On the real logs it is 1.8 to 2.7 times that of the strongest baseline.
On the simulated arms its type accuracy rises from .396 to .577 between 20 and 75\% masking, so the type stays recoverable as names are masked.
The coordinate is the harder part, and on the simulated arms it is where the gap to the baselines opens as names are masked. At 90\% masking the coordinate accuracy of our method on the simulated arms is .145 against .014 for the strongest baseline. On the real logs it leads at every level, .579 against .395 at 90\%. On the vehicle CAN it leads at every
level, .367 against .200 at 50\%. Once the names of the siblings are masked, the position in the graph is what still distinguishes joint 3 from joint 5.

\begin{figure}[!htb]\centering
\includegraphics[width=\textwidth]{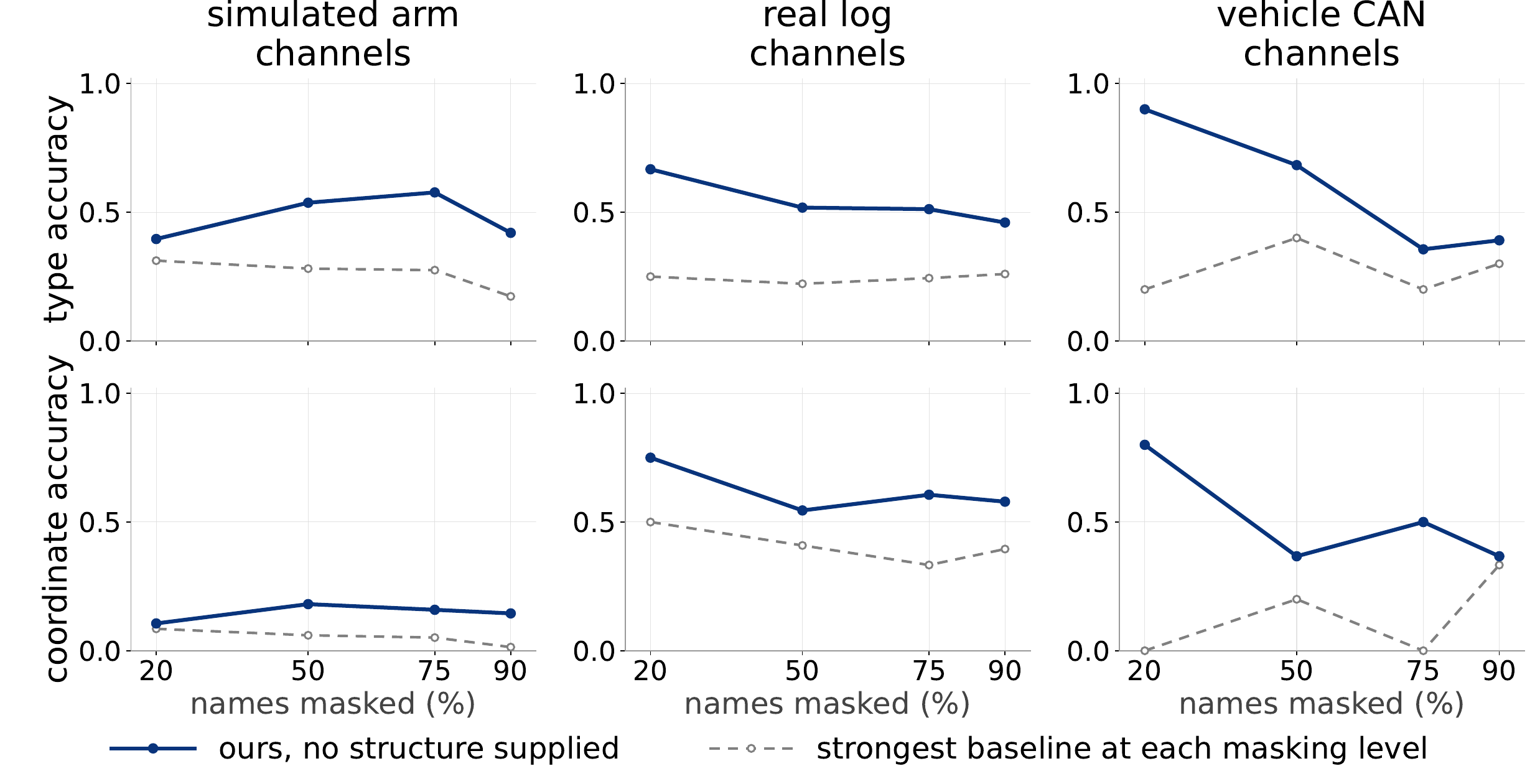}
\caption{The two parts of a channel name on the three robot columns, with
no structure supplied. Top, type accuracy scores what is measured. Bottom,
coordinate accuracy scores which one of that type. The solid line is
\ours{} and the dashed line the strongest baseline at each masking level.}
\label{fig:name-parts}
\end{figure}

\textit{Through the structure, \ours{} names the latent variables that no activation profile can name.} A latent variable has no data column and no text of its own. The four methods that read an activation profile therefore stay between .07 and .14 in NRR@1 at every masking level. SASC-style reads the item texts and falls from .369 to .115 as they are masked. Our method uses the relations of the latent variable to its items and falls from .540 to .240. The ratio between the two therefore grows from 1.5 to 1.9 through 75\% masking. Naming quality rejects a quoted statement and a copied item text. Under naming quality our method ranks first from 50\% masking on, .390 against .319 for SASC-style at 75\%. On questionnaire items, a named sibling indicates the topic of the unnamed item. There the ratio of \ours{} to SASC-style in NRR@1 is 1.5, 1.6, 1.7 and 1.3 at the four levels.

\textit{Two panels depart from this pattern, and the named siblings are the reason in both.} On the simulated arms NRR@1 peaks at 50\% masking, since at light masking \ours{} copies the name of a named sibling that differs from the target by one index. A copied sibling name lies nearer to that sibling than to the target. Sibling discrimination on the simulated arms is therefore the one auxiliary metric on which some baselines lie above our method. On questionnaire items the identification curve of SASC-style overlaps that of our method. The judge picks one of five sentences from the same scale, so it credits any name that reaches the right latent variable. Any method that reads the neighboring sentences reaches that latent variable (Table~\ref{tab:copy-refs}). NRR@1 requires the exact item, and there our method leads at every level.

\subsection{RQ1: the discovered graph and the supplied graph}
\label{apx:j-graphs}

The second finding of RQ1 is that \ours{} discovers the structure it needs and that its lead is wider when none is supplied.
Tables~\ref{tab:st2-robot} to~\ref{tab:st2-latents} give every method under the discovered graph and under the supplied graph. They carry the numbers behind the lower row of Figure~\ref{fig:main}.
\begin{table}[!htb]\centering\scriptsize\setlength{\tabcolsep}{2.4pt}
\caption{Robot channels under the supplied graph and under the graph \ours{} discovers, every method given the same graph. A dagger marks a method whose output does not change with the structure it is supplied, and a dash a method that returned no name for any channel of that set at that level.}
\label{tab:st2-robot}
\fittable{lllcccc:cccc:cccc}{%
\toprule
 &  &  & \multicolumn{4}{c:}{NRR@1} & \multicolumn{4}{c:}{MRR} & \multicolumn{4}{c}{SDA} \\
structure & variables & method & 20 & 50 & 75 & 90 & 20 & 50 & 75 & 90 & 20 & 50 & 75 & 90 \\
\midrule
supplied graph & simulated arms & \ours{} & \textbf{.146} & \textbf{.198} & \textbf{.181} & \textbf{.055} & \textbf{.317} & \textbf{.341} & \textbf{.354} & \textbf{.225} & .170 & \underline{.259} & \textbf{.222} & .117 \\
 &  & CLIP-Dissect-e5$^{\dagger}$ & \underline{.083} & .058 & .027 & .027 & \underline{.185} & .186 & .116 & .120 & \underline{.234} & .155 & .142 & .169 \\
 &  & AutoInterp$^{\dagger}$ & .000 & .008 & .011 & .005 & .078 & .095 & .095 & .086 & \textbf{.255} & .207 & .165 & .155 \\
 &  & Delphi$^{\dagger}$ & .000 & .025 & .016 & .005 & .079 & .120 & .119 & .083 & .043 & .241 & .193 & \underline{.197} \\
 &  & SASC-style$^{\dagger}$ & .021 & .058 & .027 & .018 & .112 & .162 & .130 & .118 & .170 & .147 & .165 & .183 \\
 &  & DeViSE-style$^{\dagger}$ & .021 & .033 & .022 & \underline{.032} & .143 & .136 & .131 & .116 & .170 & .198 & .170 & .178 \\
 &  & base model given graph & .021 & \underline{.083} & \underline{.033} & .023 & .164 & \underline{.193} & \underline{.135} & \underline{.148} & \textbf{.255} & \textbf{.293} & \underline{.205} & \textbf{.211} \\
 &  & GraphMAE & .000 & .000 & .000 & .000 & .162 & .134 & .086 & .082 & .128 & .147 & .097 & .174 \\
\midrule
supplied graph & real logs & \ours{} & \textbf{.500} & \textbf{.333} & \textbf{.244} & \textbf{.160} & \textbf{.566} & \textbf{.451} & \textbf{.369} & \textbf{.276} & \underline{.545} & \textbf{.440} & \textbf{.333} & \textbf{.298} \\
 &  & CLIP-Dissect-e5$^{\dagger}$ & \underline{.167} & .074 & .049 & .080 & .305 & .258 & .205 & .226 & .273 & .160 & .128 & .234 \\
 &  & AutoInterp$^{\dagger}$ & .083 & .037 & .049 & .020 & .265 & .200 & .178 & .152 & .364 & .240 & .205 & .170 \\
 &  & Delphi$^{\dagger}$ & .000 & .037 & .073 & \underline{.120} & .163 & .212 & .215 & \underline{.228} & .182 & .200 & .179 & \underline{.298} \\
 &  & SASC-style$^{\dagger}$ & .000 & .111 & .024 & .060 & .145 & .223 & .138 & .184 & .091 & .240 & .205 & .234 \\
 &  & DeViSE-style$^{\dagger}$ & .083 & .074 & .024 & .000 & .216 & .188 & .178 & .084 & .273 & .200 & .179 & .064 \\
 &  & base model given graph & \underline{.167} & \underline{.185} & \underline{.146} & .020 & \underline{.434} & \underline{.387} & \underline{.278} & .176 & \textbf{.727} & \underline{.440} & \underline{.333} & .213 \\
 &  & GraphMAE & .000 & .000 & .000 & .000 & .205 & .150 & .131 & .136 & .000 & .200 & .103 & .149 \\
\midrule
supplied graph & vehicle CAN & \ours{} & \textbf{.750} & \textbf{.300} & \textbf{.156} & \textbf{.145} & \textbf{.775} & \textbf{.553} & \textbf{.317} & \textbf{.378} & \textbf{1.000} & .300 & \underline{.400} & .133 \\
 &  & CLIP-Dissect-e5$^{\dagger}$ & .000 & .067 & \underline{.078} & .091 & .135 & .229 & \underline{.298} & .286 & \underline{.300} & .167 & .150 & \underline{.233} \\
 &  & AutoInterp$^{\dagger}$ & .000 & .000 & .000 & .000 & .184 & .245 & .171 & .191 & .100 & .100 & .100 & .000 \\
 &  & Delphi$^{\dagger}$ & .000 & .017 & .067 & \underline{.118} & .195 & .292 & .241 & .281 & .100 & .133 & .300 & .000 \\
 &  & SASC-style$^{\dagger}$ & .000 & \underline{.167} & .044 & .109 & \underline{.276} & \underline{.366} & .223 & .268 & .000 & \textbf{.433} & \textbf{.500} & \textbf{.333} \\
 &  & DeViSE-style$^{\dagger}$ & .000 & .000 & .000 & .000 & .180 & .173 & .141 & .083 & .000 & \underline{.333} & .000 & .000 \\
 &  & base model given graph & .000 & .083 & .067 & .091 & .269 & .322 & .269 & \underline{.297} & .000 & .100 & \textbf{.500} & .000 \\
 &  & GraphMAE & .000 & .000 & .000 & .000 & .250 & .199 & .156 & .191 & .000 & \underline{.333} & .000 & .000 \\
\midrule
discovered graph & simulated arms & \ours{} & \textbf{.083} & \textbf{.140} & \textbf{.137} & \textbf{.046} & \textbf{.224} & \textbf{.286} & \textbf{.278} & \textbf{.194} & .128 & .190 & .153 & .164 \\
 &  & CLIP-Dissect-e5$^{\dagger}$ & \underline{.083} & \underline{.058} & \underline{.027} & .027 & \underline{.185} & \underline{.186} & .116 & .120 & \underline{.234} & .155 & .142 & .169 \\
 &  & AutoInterp$^{\dagger}$ & .000 & .008 & .011 & .005 & .078 & .095 & .095 & .086 & \textbf{.255} & \underline{.207} & .165 & .155 \\
 &  & Delphi$^{\dagger}$ & .000 & .025 & .016 & .005 & .079 & .120 & .119 & .083 & .043 & \textbf{.241} & \textbf{.193} & \underline{.197} \\
 &  & SASC-style$^{\dagger}$ & .021 & \underline{.058} & \underline{.027} & .018 & .112 & .162 & .130 & .118 & .170 & .147 & .165 & .183 \\
 &  & DeViSE-style$^{\dagger}$ & .021 & .033 & .022 & \underline{.032} & .143 & .136 & \underline{.131} & .116 & .170 & .198 & .170 & .178 \\
 &  & base model given graph & .000 & .050 & \underline{.027} & .023 & .147 & .176 & .128 & \underline{.131} & .191 & .103 & \underline{.182} & \textbf{.202} \\
 &  & GraphMAE & .000 & .000 & .000 & .000 & .154 & .097 & .082 & .080 & .149 & .198 & .148 & .160 \\
\midrule
discovered graph & real logs & \ours{} & \textbf{.417} & \textbf{.296} & \textbf{.195} & \textbf{.120} & \textbf{.485} & \textbf{.385} & \textbf{.314} & \textbf{.248} & \textbf{.455} & \textbf{.360} & \textbf{.359} & \underline{.255} \\
 &  & CLIP-Dissect-e5$^{\dagger}$ & \underline{.167} & .074 & .049 & .080 & \underline{.305} & .258 & .205 & .226 & .273 & .160 & .128 & .234 \\
 &  & AutoInterp$^{\dagger}$ & .083 & .037 & .049 & .020 & .265 & .200 & .178 & .152 & .364 & .240 & .205 & .170 \\
 &  & Delphi$^{\dagger}$ & .000 & .037 & .073 & \underline{.120} & .163 & .212 & .215 & \underline{.228} & .182 & .200 & .179 & \textbf{.298} \\
 &  & SASC-style$^{\dagger}$ & .000 & .111 & .024 & .060 & .145 & .223 & .138 & .184 & .091 & .240 & .205 & .234 \\
 &  & DeViSE-style$^{\dagger}$ & .083 & .074 & .024 & .000 & .216 & .188 & .178 & .084 & .273 & .200 & .179 & .064 \\
 &  & base model given graph & .000 & \underline{.148} & \underline{.122} & .020 & .297 & \underline{.334} & \underline{.284} & .181 & \underline{.455} & \underline{.320} & \underline{.282} & .128 \\
 &  & GraphMAE & .000 & .000 & .000 & .000 & .156 & .127 & .135 & .134 & .000 & .120 & .103 & .128 \\
\midrule
discovered graph & vehicle CAN & \ours{} & \textbf{.800} & \textbf{.367} & \textbf{.244} & \textbf{.218} & \textbf{.859} & \textbf{.570} & \textbf{.412} & \textbf{.397} & \textbf{.800} & \underline{.367} & \textbf{.500} & \textbf{.367} \\
 &  & CLIP-Dissect-e5$^{\dagger}$ & .000 & .067 & \underline{.078} & .091 & .135 & .229 & \underline{.298} & \underline{.286} & \underline{.300} & .167 & .150 & .233 \\
 &  & AutoInterp$^{\dagger}$ & .000 & .000 & .000 & .000 & .184 & .245 & .171 & .191 & .100 & .100 & .100 & .000 \\
 &  & Delphi$^{\dagger}$ & .000 & .017 & .067 & \underline{.118} & .195 & .292 & .241 & .281 & .100 & .133 & .300 & .000 \\
 &  & SASC-style$^{\dagger}$ & .000 & \underline{.167} & .044 & .109 & .276 & \underline{.366} & .223 & .268 & .000 & \textbf{.433} & \underline{.500} & \underline{.333} \\
 &  & DeViSE-style$^{\dagger}$ & .000 & .000 & .000 & .000 & .180 & .173 & .141 & .083 & .000 & .333 & .000 & .000 \\
 &  & base model given graph & \underline{.100} & .050 & .022 & .073 & \underline{.343} & .279 & .236 & .265 & .000 & .100 & \underline{.500} & .000 \\
 &  & GraphMAE & .000 & .000 & .000 & .000 & .147 & .179 & .169 & .191 & .000 & .267 & .200 & .000 \\
\bottomrule}
\end{table}

\begin{table}[!htb]\centering\scriptsize\setlength{\tabcolsep}{2.4pt}
\addtocounter{table}{-1}
\caption{Continued with the same rows, for identification, type accuracy and coordinate accuracy.}
\fittable{lllcccc:cccc:cccc}{%
\toprule
 &  &  & \multicolumn{4}{c:}{identification} & \multicolumn{4}{c:}{type acc} & \multicolumn{4}{c}{coord acc} \\
structure & variables & method & 20 & 50 & 75 & 90 & 20 & 50 & 75 & 90 & 20 & 50 & 75 & 90 \\
\midrule
supplied graph & simulated arms & \ours{} & \textbf{.167} & \textbf{.252} & \textbf{.242} & \textbf{.106} & \textbf{.542} & \textbf{.612} & \textbf{.703} & \textbf{.525} & \textbf{.149} & \textbf{.233} & \textbf{.216} & \textbf{.132} \\
 &  & CLIP-Dissect-e5$^{\dagger}$ & .062 & .050 & .033 & .060 & .271 & .215 & .137 & .105 & \underline{.085} & .026 & .023 & .009 \\
 &  & AutoInterp$^{\dagger}$ & .000 & .033 & .039 & .033 & .000 & .107 & .126 & .114 & .000 & .009 & .034 & .000 \\
 &  & Delphi$^{\dagger}$ & .062 & .034 & .072 & .028 & .104 & .141 & .170 & .132 & .021 & .052 & .051 & .014 \\
 &  & SASC-style$^{\dagger}$ & .042 & .074 & .060 & .028 & .208 & \underline{.281} & .203 & .173 & .043 & .060 & .040 & .014 \\
 &  & DeViSE-style$^{\dagger}$ & .000 & .025 & .016 & .005 & .312 & .273 & \underline{.275} & .173 & .000 & .000 & .000 & .000 \\
 &  & base model given graph & \underline{.109} & \underline{.190} & \underline{.078} & \underline{.084} & \underline{.375} & \underline{.281} & .214 & \underline{.288} & .043 & \underline{.172} & \underline{.074} & .061 \\
 &  & GraphMAE & .064 & .076 & .041 & .048 & \underline{.375} & .273 & .115 & .101 & .021 & .060 & .040 & \underline{.085} \\
\midrule
supplied graph & real logs & \ours{} & \textbf{.500} & \textbf{.407} & \textbf{.268} & \textbf{.208} & \textbf{.750} & \textbf{.593} & \textbf{.610} & \textbf{.500} & \textbf{1.000} & \textbf{.682} & \underline{.455} & \textbf{.553} \\
 &  & CLIP-Dissect-e5$^{\dagger}$ & .000 & .037 & .100 & .080 & .083 & .000 & .073 & .060 & .000 & .045 & .061 & .026 \\
 &  & AutoInterp$^{\dagger}$ & .250 & .111 & .098 & .041 & .250 & .185 & \underline{.244} & .260 & .250 & .182 & .212 & .368 \\
 &  & Delphi$^{\dagger}$ & .250 & .120 & .024 & .100 & .167 & .074 & .098 & .240 & .000 & .136 & .151 & .026 \\
 &  & SASC-style$^{\dagger}$ & .083 & .115 & .073 & .080 & .083 & \underline{.222} & .146 & .180 & .250 & .409 & .333 & .395 \\
 &  & DeViSE-style$^{\dagger}$ & .083 & .074 & .098 & \underline{.120} & .083 & .111 & .195 & .000 & .500 & .227 & .303 & .316 \\
 &  & base model given graph & \underline{.333} & \underline{.333} & \underline{.128} & \underline{.120} & .167 & \underline{.222} & .195 & .260 & \underline{.875} & \underline{.591} & \textbf{.485} & \underline{.500} \\
 &  & GraphMAE & .083 & .000 & .000 & .061 & \underline{.583} & \underline{.222} & .220 & \underline{.380} & .250 & .500 & .242 & .447 \\
\midrule
supplied graph & vehicle CAN & \ours{} & \textbf{.750} & \textbf{.317} & \textbf{.216} & \textbf{.200} & \textbf{.750} & \textbf{.650} & \textbf{.289} & \textbf{.382} & \textbf{1.000} & \textbf{.300} & \textbf{.400} & \underline{.133} \\
 &  & CLIP-Dissect-e5$^{\dagger}$ & .000 & .050 & .089 & .118 & .000 & .050 & .033 & .027 & .000 & .000 & .000 & .000 \\
 &  & AutoInterp$^{\dagger}$ & .050 & .017 & .000 & .009 & .000 & .400 & .200 & .273 & .000 & .000 & .000 & .000 \\
 &  & Delphi$^{\dagger}$ & .000 & .067 & .033 & .055 & .000 & .183 & .156 & .273 & .000 & .000 & .000 & .000 \\
 &  & SASC-style$^{\dagger}$ & .000 & \underline{.250} & \underline{.146} & \underline{.191} & .200 & .383 & .067 & \underline{.300} & .000 & \underline{.200} & .000 & \textbf{.333} \\
 &  & DeViSE-style$^{\dagger}$ & .050 & .067 & .022 & -- & .000 & .000 & .000 & .000 & .000 & .000 & .000 & .000 \\
 &  & base model given graph & \underline{.105} & .133 & .091 & .074 & .250 & \underline{.483} & .000 & .264 & .000 & .000 & .000 & .000 \\
 &  & GraphMAE & .000 & .017 & .000 & .009 & \underline{.500} & .000 & \underline{.222} & .273 & .000 & .000 & .000 & .000 \\
\midrule
discovered graph & simulated arms & \ours{} & \textbf{.106} & \textbf{.203} & \textbf{.208} & \textbf{.116} & \underline{.396} & \textbf{.537} & \textbf{.577} & \textbf{.420} & \textbf{.106} & \textbf{.181} & \textbf{.159} & \textbf{.145} \\
 &  & CLIP-Dissect-e5$^{\dagger}$ & \underline{.062} & .050 & .033 & .060 & .271 & .215 & .137 & .105 & \underline{.085} & .026 & .023 & .009 \\
 &  & AutoInterp$^{\dagger}$ & .000 & .033 & .039 & .033 & .000 & .107 & .126 & .114 & .000 & .009 & .034 & .000 \\
 &  & Delphi$^{\dagger}$ & \underline{.062} & .034 & .072 & .028 & .104 & .141 & .170 & .132 & .021 & .052 & .051 & .014 \\
 &  & SASC-style$^{\dagger}$ & .042 & .074 & .060 & .028 & .208 & .281 & .203 & .173 & .043 & .060 & .040 & .014 \\
 &  & DeViSE-style$^{\dagger}$ & .000 & .025 & .016 & .005 & .312 & .273 & \underline{.275} & .173 & .000 & .000 & .000 & .000 \\
 &  & base model given graph & .021 & .076 & \underline{.096} & \underline{.089} & \textbf{.417} & \underline{.364} & .225 & \underline{.265} & .021 & .035 & .040 & .052 \\
 &  & GraphMAE & .045 & \underline{.078} & .045 & .029 & .312 & .132 & .099 & .101 & .064 & \underline{.095} & \underline{.085} & \underline{.066} \\
\midrule
discovered graph & real logs & \ours{} & \textbf{.417} & \underline{.296} & \textbf{.244} & \textbf{.180} & \textbf{.667} & \textbf{.518} & \textbf{.512} & \textbf{.460} & \textbf{.750} & \textbf{.545} & \textbf{.606} & \textbf{.579} \\
 &  & CLIP-Dissect-e5$^{\dagger}$ & .000 & .037 & .100 & .080 & .083 & .000 & .073 & .060 & .000 & .045 & .061 & .026 \\
 &  & AutoInterp$^{\dagger}$ & \underline{.250} & .111 & .098 & .041 & \underline{.250} & .185 & .244 & .260 & .250 & .182 & .212 & .368 \\
 &  & Delphi$^{\dagger}$ & \underline{.250} & .120 & .024 & .100 & .167 & .074 & .098 & .240 & .000 & .136 & .151 & .026 \\
 &  & SASC-style$^{\dagger}$ & .083 & .115 & .073 & .080 & .083 & .222 & .146 & .180 & .250 & .409 & .333 & .395 \\
 &  & DeViSE-style$^{\dagger}$ & .083 & .074 & .098 & \underline{.120} & .083 & .111 & .195 & .000 & .500 & .227 & .303 & .316 \\
 &  & base model given graph & .083 & \textbf{.370} & \underline{.171} & .020 & .083 & \underline{.259} & \underline{.293} & .280 & \underline{.625} & \underline{.500} & \underline{.485} & \underline{.474} \\
 &  & GraphMAE & .000 & .000 & .122 & .040 & \underline{.250} & \underline{.259} & .220 & \underline{.380} & .250 & .227 & .303 & .421 \\
\midrule
discovered graph & vehicle CAN & \ours{} & \textbf{.800} & \textbf{.400} & \textbf{.300} & \textbf{.282} & \textbf{.900} & \textbf{.683} & \textbf{.356} & \textbf{.391} & \textbf{.800} & \textbf{.367} & \textbf{.500} & \textbf{.367} \\
 &  & CLIP-Dissect-e5$^{\dagger}$ & .000 & .050 & .089 & .118 & .000 & .050 & .033 & .027 & .000 & .000 & .000 & .000 \\
 &  & AutoInterp$^{\dagger}$ & \underline{.050} & .017 & .000 & .009 & .000 & \underline{.400} & \underline{.200} & .273 & .000 & .000 & .000 & .000 \\
 &  & Delphi$^{\dagger}$ & .000 & .067 & .033 & .055 & .000 & .183 & .156 & .273 & .000 & .000 & .000 & .000 \\
 &  & SASC-style$^{\dagger}$ & .000 & \underline{.250} & \underline{.146} & \underline{.191} & \underline{.200} & .383 & .067 & \underline{.300} & .000 & \underline{.200} & .000 & \underline{.333} \\
 &  & DeViSE-style$^{\dagger}$ & \underline{.050} & .067 & .022 & -- & .000 & .000 & .000 & .000 & .000 & .000 & .000 & .000 \\
 &  & base model given graph & \underline{.050} & .133 & .023 & .135 & .050 & .367 & .000 & .009 & .000 & .000 & .000 & .000 \\
 &  & GraphMAE & .000 & .067 & .000 & .009 & .100 & .150 & .122 & .273 & .000 & .000 & .000 & .000 \\
\bottomrule}
\end{table}

\begin{table}[!htb]\centering\scriptsize\setlength{\tabcolsep}{2.4pt}
\caption{Questionnaire items under both structure sources, every method given the same graph. A dagger marks a method whose output does not change with the structure it is supplied.}
\label{tab:st2-items}
\fittable{llcccc:cccc:cccc:cccc}{%
\toprule
 &  & \multicolumn{4}{c:}{NRR@1} & \multicolumn{4}{c:}{MRR} & \multicolumn{4}{c:}{SDA} & \multicolumn{4}{c}{identification} \\
structure & method & 20 & 50 & 75 & 90 & 20 & 50 & 75 & 90 & 20 & 50 & 75 & 90 & 20 & 50 & 75 & 90 \\
\midrule
supplied graph & \ours{} & \textbf{.126} & \textbf{.097} & \textbf{.064} & \textbf{.038} & \textbf{.254} & \textbf{.220} & \textbf{.179} & \textbf{.121} & \textbf{.200} & \textbf{.176} & .147 & .129 & \underline{.351} & \underline{.314} & \textbf{.304} & \underline{.227} \\
 & CLIP-Dissect-e5$^{\dagger}$ & .028 & .023 & .022 & .014 & .096 & .095 & .099 & .071 & .097 & .112 & .112 & \textbf{.137} & .163 & .179 & .183 & .153 \\
 & AutoInterp$^{\dagger}$ & .010 & .012 & .008 & .007 & .058 & .055 & .054 & .054 & .118 & .124 & .109 & .109 & .085 & .103 & .101 & .099 \\
 & Delphi$^{\dagger}$ & .008 & .010 & .013 & .011 & .060 & .055 & .060 & .063 & .108 & .113 & .121 & .110 & .141 & .122 & .133 & .113 \\
 & SASC-style$^{\dagger}$ & \underline{.079} & \underline{.062} & .035 & \underline{.021} & .211 & .175 & .128 & .087 & \underline{.169} & \underline{.164} & \textbf{.164} & \underline{.136} & \textbf{.356} & \textbf{.342} & \underline{.303} & .219 \\
 & DeViSE-style$^{\dagger}$ & .033 & .023 & .014 & .008 & .131 & .110 & .087 & .064 & .154 & .162 & \underline{.155} & .133 & .238 & .227 & .193 & .164 \\
 & base model given graph & .074 & .051 & \underline{.051} & .017 & \underline{.219} & \underline{.182} & \underline{.166} & \underline{.100} & .126 & .102 & .119 & .109 & .338 & .309 & .280 & \textbf{.228} \\
 & GraphMAE & .000 & .000 & .000 & .000 & .086 & .080 & .075 & .059 & .077 & .080 & .075 & .096 & .144 & .139 & .118 & .108 \\
\midrule
discovered graph & \ours{} & \textbf{.121} & \textbf{.098} & \textbf{.059} & \textbf{.028} & \textbf{.253} & \textbf{.216} & \textbf{.164} & \textbf{.105} & \textbf{.226} & \textbf{.189} & \textbf{.164} & \textbf{.147} & \underline{.356} & \underline{.364} & .283 & .217 \\
 & CLIP-Dissect-e5$^{\dagger}$ & .028 & .023 & .022 & .014 & .096 & .095 & .099 & .071 & .097 & .112 & .112 & \underline{.137} & .163 & .179 & .183 & .153 \\
 & AutoInterp$^{\dagger}$ & .010 & .012 & .008 & .007 & .058 & .055 & .054 & .054 & .118 & .124 & .109 & .109 & .085 & .103 & .101 & .099 \\
 & Delphi$^{\dagger}$ & .008 & .010 & .013 & .011 & .060 & .055 & .060 & .063 & .108 & .113 & .121 & .110 & .141 & .122 & .133 & .113 \\
 & SASC-style$^{\dagger}$ & .079 & \underline{.062} & .035 & \underline{.021} & .211 & .175 & .128 & .087 & .169 & \underline{.164} & \underline{.164} & .136 & .356 & .342 & \underline{.303} & \underline{.219} \\
 & DeViSE-style$^{\dagger}$ & .033 & .023 & .014 & .008 & .131 & .110 & .087 & .064 & .154 & .162 & .155 & .133 & .238 & .227 & .193 & .164 \\
 & base model given graph & \underline{.085} & .054 & \underline{.046} & .017 & \underline{.227} & \underline{.188} & \underline{.162} & \underline{.094} & \underline{.174} & .163 & .154 & .125 & \textbf{.364} & \textbf{.384} & \textbf{.309} & \textbf{.229} \\
 & GraphMAE & .000 & .000 & .000 & .000 & .069 & .067 & .062 & .054 & .103 & .103 & .096 & .109 & .159 & .135 & .123 & .115 \\
\bottomrule}
\end{table}
\begin{table}[!htb]\centering\scriptsize\setlength{\tabcolsep}{2.4pt}
\caption{Latent variables under both structure sources, every method given the same graph.}
\label{tab:st2-latents}
\fittable{llcccc:cccc:cccc:cccc}{%
\toprule
 &  & \multicolumn{4}{c:}{NRR@1} & \multicolumn{4}{c:}{MRR} & \multicolumn{4}{c:}{identification} & \multicolumn{4}{c}{naming quality} \\
structure & method & 20 & 50 & 75 & 90 & 20 & 50 & 75 & 90 & 20 & 50 & 75 & 90 & 20 & 50 & 75 & 90 \\
\midrule
supplied graph & \ours{} & \textbf{.485} & \underline{.415} & \textbf{.415} & \textbf{.238} & \textbf{.624} & \textbf{.560} & \textbf{.551} & \textbf{.400} & \textbf{.627} & \textbf{.537} & \textbf{.535} & \textbf{.351} & \textbf{.596} & \textbf{.518} & \textbf{.500} & \textbf{.332} \\
 & CLIP-Dissect-e5 & .158 & .146 & .135 & .181 & .333 & .314 & .315 & .337 & .243 & .203 & .188 & .186 & .243 & .203 & .188 & .186 \\
 & AutoInterp & .108 & .108 & .112 & .096 & .262 & .267 & .275 & .260 & .139 & .202 & .135 & .142 & .139 & .202 & .135 & .142 \\
 & Delphi & .088 & .123 & .135 & .088 & .250 & .287 & .283 & .257 & .169 & .174 & .192 & .173 & .169 & .174 & .192 & .173 \\
 & SASC-style$^{\dagger}$ & .369 & .315 & .204 & .115 & .523 & .479 & .393 & .291 & .502 & .424 & .319 & .204 & .502 & .424 & .319 & .204 \\
 & DeViSE-style$^{\dagger}$ & .138 & .138 & .108 & .112 & .291 & .281 & .260 & .266 & .151 & .154 & .123 & .112 & .062 & .065 & .069 & .054 \\
 & base model given graph & \underline{.477} & \textbf{.419} & \underline{.331} & \underline{.212} & \underline{.614} & \underline{.560} & \underline{.480} & \underline{.376} & \underline{.512} & \underline{.492} & \underline{.454} & \underline{.285} & \underline{.512} & \underline{.492} & \underline{.450} & \underline{.277} \\
 & GraphMAE & .242 & .181 & .158 & .115 & .400 & .347 & .320 & .276 & .350 & .315 & .208 & .124 & .000 & .000 & .000 & .000 \\
\midrule
discovered graph & \ours{} & \textbf{.540} & \textbf{.486} & \underline{.386} & \textbf{.240} & \textbf{.660} & \textbf{.623} & \underline{.560} & \textbf{.424} & \textbf{.531} & \textbf{.511} & \underline{.441} & \textbf{.254} & .446 & \underline{.426} & \underline{.390} & \underline{.217} \\
 & CLIP-Dissect-e5 & .137 & .203 & .191 & .146 & .347 & .403 & .383 & .351 & .160 & .188 & .177 & .095 & .160 & .188 & .177 & .095 \\
 & AutoInterp & .131 & .126 & .163 & .126 & .334 & .335 & .367 & .330 & .154 & .151 & .157 & .143 & .154 & .151 & .157 & .143 \\
 & Delphi & .177 & .137 & .174 & .126 & .372 & .344 & .366 & .345 & .191 & .157 & .200 & .166 & .183 & .157 & .197 & .166 \\
 & SASC-style$^{\dagger}$ & .369 & .315 & .204 & .115 & .523 & .479 & .393 & .291 & .502 & .424 & .319 & .204 & \underline{.502} & .424 & .319 & .204 \\
 & DeViSE-style$^{\dagger}$ & .138 & .138 & .108 & .112 & .291 & .281 & .260 & .266 & .151 & .154 & .123 & .112 & .062 & .065 & .069 & .054 \\
 & base model given graph & \underline{.394} & \underline{.389} & \textbf{.403} & \underline{.186} & \underline{.576} & \underline{.568} & \textbf{.579} & \underline{.401} & \underline{.531} & \underline{.480} & \textbf{.446} & \underline{.226} & \textbf{.520} & \textbf{.480} & \textbf{.437} & \textbf{.223} \\
 & GraphMAE & .203 & .197 & .200 & .180 & .383 & .380 & .389 & .385 & .294 & .280 & .251 & .163 & .000 & .000 & .000 & .000 \\
\bottomrule}
\end{table}

\textit{\ours{} does as well on the graph it discovers as on the supplied graph.} The dashed line of the lower row is our method on its discovered graph, and it follows the solid line in every panel. On the vehicle CAN it is ahead at every
masking level. On the simulated arms and the real logs the supplied graph describes the mechanism that produced the measurements, and it is ahead there. On latent variables the two graphs define different targets, and the two lines sit at a similar level. Type accuracy agrees
with NRR@1 on all three columns. On the vehicle CAN the discovered graph is ahead at every masking level, .900 and .683 against .750 and .650 at 20 and 50\%. The supplied graph is ahead at every level on the simulated arms and on the real logs.

\textit{On latent variables the two lines of the lower row are two tasks with different targets.} Under the supplied graph the targets are the 52 latent variables of the five keys counted with their parent level. Under the discovered graph they are the 70 latent variables our method discovers, each referred to the key by the rule of Appendix~\ref{apx:latent-ref}. A discovered latent variable holds 4.8 items on average, counting the items beneath its latent children, against 12.5 for a latent variable of the key. At 75\% masking it therefore keeps 1.2 named items against 3.1.
Table~\ref{tab:perscale-latents} gives both sources scale by scale.

\textit{A supplied graph lifts the baselines that can use it.} The five methods that name a variable from its data alone do not change with the structure they are given, which the dagger in Tables~\ref{tab:st2-robot} and~\ref{tab:st2-items} marks. On latent variables the same holds for the two that keep the factor-analysis grouping. The baselines that use the graph gain from it, and the gain depends on what the graph adds to their input. On the real logs, the base model that reads the graph as text reaches .185 and .146 in NRR@1 at 50 and 75\% masking. The strongest baseline with no graph reaches .111 and .073. On the
simulated arms its identification rate is above every such baseline at
all four levels, .190 against .074 at 50\%. On both the gain in NRR@1
is gone at 90\% masking, where few neighbors keep a name. On questionnaire items its MRR is above every baseline with no graph at all four levels, .219 against .211 at 20\%. The gain is small, since an item's named neighbors are sentences that those baselines read too. On latent variables five baselines use the graph.
CLIP-Dissect-e5, AutoInterp and Delphi take their grouping from it. CLIP-Dissect-e5 climbs from .073 to .158 in NRR@1 at 20\% masking and from .128 to .243 in identification. GraphMAE reaches .242 and the base model .477, against .369 for the strongest baseline with no structure supplied. The base model gains since a latent variable of the key enters the text as its list of named items. GraphMAE never places a true name first on a
channel or an item, since it returns a known name there. On the vehicle CAN the base model stays below the strongest baseline with no graph from 50\% masking on, .083 against .167 at 50\%. The documented edges follow the kinematics, while the recorded speed is computed from the wheel speeds.

\textit{Only \ours{} can discover a graph, so its lead over the strongest baseline is wider with no structure supplied.} With no structure supplied, none of the baselines has a graph to use. A baseline takes from a graph what text can carry. The optimization and the prefix turn the structure into a vector, so our method keeps the lead under both sources. On the vehicle CAN, the lead in NRR@1 at the four masking levels is .800, .200, .167 and .100 with no structure supplied. Under the supplied graph it is .750, .133, .078 and .027.
From 50\% masking on the ratio is 1.8 to 3.1 against 1.2 to 2.0.
On the real logs the lead is 2.7 times at 50 and 75\% masking against
1.8 and 1.7 times. The same holds on latent variables at every masking level, where the targets of our method are the two sets described above. There the difference between the two rows is largest through 50\% masking, .163 and .175 against .098 and .072 at 75 and 90\%. Many anchors let the graph constrain an unnamed variable tightly, and the difference closes where few names are left in either row. On the
vehicle CAN the difference stays between .050 and .088 at every level.

\textit{A physical law supplied as a prior raises NRR@1 at 20 and 50\% masking on the simulated arms and on the vehicle CAN.} Table~\ref{tab:physics-prior} compares the full method on the graphs discovered with the prior of Appendix~\ref{apx:discovery} and on the graphs discovered from the time series alone. On the simulated arms the prior raises NRR@1 from .083 to .104 at 20\% masking and from .140 to .157 at 50\%.  On the vehicle CAN it raises NRR@1 from .800 to .950 at 20\% and from .367 to .483 at 50\%.
\begin{table}[!htb]\centering\scriptsize\setlength{\tabcolsep}{2.4pt}
\caption{The main method on the graph discovered from the time series alone, the setting of every other table, against the same method when the discovery step also admits the instantaneous edges that the physics of the platform orients, on an arm from the joints to the end-effector pose and the gripper, and on a drive from the steering wheel angle and the wheel speeds to the vehicle speed and from those to the accelerations and the rates. No structure is supplied in either row. Columns are the four masking levels.}
\label{tab:physics-prior}
\fittable{llcccc:cccc:cccc:cccc:cccc}{%
\toprule
 &  & \multicolumn{4}{c:}{NRR@1} & \multicolumn{4}{c:}{MRR} & \multicolumn{4}{c:}{identification} & \multicolumn{4}{c:}{type acc} & \multicolumn{4}{c}{coord acc} \\
variables & discovery & 20 & 50 & 75 & 90 & 20 & 50 & 75 & 90 & 20 & 50 & 75 & 90 & 20 & 50 & 75 & 90 & 20 & 50 & 75 & 90 \\
\midrule
simulated arms & time only & .083 & .140 & .137 & .046 & .224 & .286 & .278 & .194 & .106 & .203 & .208 & .116 & .396 & .537 & .577 & .420 & .106 & .181 & .159 & .145 \\
 & with the physical law & .104 & .157 & .132 & .068 & .286 & .316 & .299 & .213 & .167 & .229 & .187 & .144 & .583 & .587 & .654 & .466 & .106 & .198 & .153 & .174 \\
\midrule
vehicle CAN & time only & .800 & .367 & .244 & .218 & .859 & .570 & .412 & .397 & .800 & .400 & .300 & .282 & .900 & .683 & .356 & .391 & .800 & .367 & .500 & .367 \\
 & with the physical law & .950 & .483 & .222 & .209 & .955 & .632 & .379 & .394 & .950 & .500 & .278 & .282 & .950 & .800 & .333 & .427 & 1.000 & .333 & .500 & .200 \\
\bottomrule}
\end{table}

\subsection{RQ1: the lead replicate by replicate, against copied names and scale by scale}
\label{apx:j-checks}

Three checks test whether the lead of RQ1 depends on a few replicates, on the way a name is scored or on the pooling.

\paragraph{Paired differences to the strongest baseline.} \textit{The lead holds on the same replicates.} Tables~\ref{tab:paired-st1} and~\ref{tab:paired-st2} put every cell of the main comparison on the same replicates as its strongest baseline and report the mean paired difference with its standard error over replicates. The replicates are folds on the questionnaire columns and arms, logs or drives on the robot columns. The
strongest baseline of a cell is the baseline with the highest pooled
value in that cell of Tables~\ref{tab:st1-robot}
to~\ref{tab:st1-latents}, or of Tables~\ref{tab:st2-robot}
to~\ref{tab:st2-latents} under the supplied graph. Without a supplied structure, \ours{} is ahead in 58 of the 64 cells and ahead by more than two standard errors in 38. It is behind by more than two standard errors in no cell. The lead lies in the retrieval metrics NRR@1 and MRR, where a name has to reach the right variable. There the difference is positive in 38 of the 40
cells and exceeds two standard errors in 29. On the judged metrics, our method is ahead by more than two standard errors in 9 of the 24 cells and level with the strongest baseline within two standard errors in 14. The judge credits any name that reaches the right latent variable or channel type. Under the supplied graph the three counts are 60, 33 and 1 of 64. The one cell behind by more than two standard errors is item identification at 50\% masking. There SASC-style reads the neighboring sentences, and the judge credits the latent variable they name. The cell for real-log identification at 90\% rests on a single replicate and has no standard error. A count below the number of replicates means that the baseline returned no name on the other replicates.

\begin{table}[!htbp]\centering\scriptsize\setlength{\tabcolsep}{3pt}
\caption{Paired differences to the strongest baseline, no structure supplied. Each cell is our value minus the strongest baseline's, as the mean over replicates, its standard error, the number of replicates in which our method is ahead, and the baseline, C CLIP-Dissect-e5, A AutoInterp, D Delphi, S SASC-style, V DeViSE-style. Bold is a difference above two standard errors.}
\label{tab:paired-st1}
\fittable{llllll}{
\toprule
variables & metric & 20 & 50 & 75 & 90 \\
\midrule
simulated arms & NRR@1 & $-$.019 $\pm$ .067 (2/6) C & +.079 $\pm$ .040 (4/6) S & \textbf{+.109 $\pm$ .022} (6/6) S & +.014 $\pm$ .017 (3/6) V \\
 & MRR & +.026 $\pm$ .058 (5/6) C & \textbf{+.097 $\pm$ .030} (6/6) C & \textbf{+.145 $\pm$ .026} (6/6) V & \textbf{+.069 $\pm$ .017} (6/6) C \\
 & identification & +.042 $\pm$ .024 (3/6) D & \textbf{+.130 $\pm$ .022} (6/6) S & \textbf{+.135 $\pm$ .031} (6/6) D & \textbf{+.054 $\pm$ .023} (4/6) C \\
\midrule
real logs & NRR@1 & +.167 $\pm$ .167 (1/3) C & +.238 $\pm$ .133 (3/3) S & \textbf{+.132 $\pm$ .034} (3/3) D & $-$.002 $\pm$ .046 (1/3) D \\
 & MRR & +.090 $\pm$ .182 (1/3) C & +.177 $\pm$ .090 (3/3) C & \textbf{+.104 $\pm$ .008} (3/3) D & +.021 $\pm$ .062 (2/3) D \\
 & identification & +.111 $\pm$ .111 (1/3) D & +.234 $\pm$ .135 (3/3) D & +.163 $\pm$ .121 (2/3) C & +.000 $\pm$ -- (0/1) V \\
\midrule
vehicle CAN & NRR@1 & \textbf{+.800 $\pm$ .111} (9/10) S & \textbf{+.200 $\pm$ .033} (9/10) S & \textbf{+.167 $\pm$ .025} (10/10) C & \textbf{+.100 $\pm$ .042} (5/10) D \\
 & MRR & \textbf{+.583 $\pm$ .089} (10/10) S & \textbf{+.204 $\pm$ .036} (10/10) S & \textbf{+.114 $\pm$ .024} (10/10) C & \textbf{+.111 $\pm$ .041} (7/10) C \\
 & identification & \textbf{+.750 $\pm$ .112} (9/10) V & \textbf{+.150 $\pm$ .046} (6/10) S & \textbf{+.154 $\pm$ .067} (7/10) S & \textbf{+.091 $\pm$ .038} (6/10) S \\
\midrule
questionnaire items & NRR@1 & \textbf{+.041 $\pm$ .016} (4/5) S & \textbf{+.037 $\pm$ .009} (5/5) S & \textbf{+.025 $\pm$ .010} (5/5) S & +.008 $\pm$ .005 (3/5) S \\
 & MRR & \textbf{+.041 $\pm$ .019} (4/5) S & \textbf{+.041 $\pm$ .005} (5/5) S & \textbf{+.037 $\pm$ .009} (5/5) S & \textbf{+.018 $\pm$ .003} (5/5) S \\
 & identification & +.001 $\pm$ .034 (2/5) S & +.022 $\pm$ .011 (4/5) S & $-$.020 $\pm$ .017 (1/5) S & $-$.003 $\pm$ .007 (2/5) S \\
\midrule
latent variables & NRR@1 & \textbf{+.171 $\pm$ .025} (5/5) S & \textbf{+.170 $\pm$ .019} (5/5) S & \textbf{+.182 $\pm$ .028} (5/5) S & \textbf{+.098 $\pm$ .040} (4/5) C \\
 & MRR & \textbf{+.137 $\pm$ .018} (5/5) S & \textbf{+.144 $\pm$ .016} (5/5) S & \textbf{+.168 $\pm$ .024} (5/5) S & \textbf{+.132 $\pm$ .029} (5/5) C \\
 & identification & +.029 $\pm$ .020 (4/5) S & \textbf{+.087 $\pm$ .018} (5/5) S & \textbf{+.122 $\pm$ .040} (4/5) S & +.050 $\pm$ .039 (4/5) S \\
 & naming quality & $-$.056 $\pm$ .031 (1/5) S & +.002 $\pm$ .012 (2/5) S & +.070 $\pm$ .039 (4/5) S & +.013 $\pm$ .035 (3/5) S \\
\bottomrule}\end{table}

\begin{table}[!htbp]\centering\scriptsize\setlength{\tabcolsep}{3pt}
\caption{Paired differences to the strongest baseline under the supplied graph, which every method that accepts a graph as input receives, with the structure-free rows as reference. Each cell is our value minus the strongest baseline's, as the mean over replicates, its standard error, the number of replicates in which our method is ahead, and the baseline, C CLIP-Dissect-e5, A AutoInterp, D Delphi, S SASC-style, V DeViSE-style, B base model given graph, G GraphMAE. Bold is a difference above two standard errors.}
\label{tab:paired-st2}
\fittable{llllll}{
\toprule
variables & metric & 20 & 50 & 75 & 90 \\
\midrule
simulated arms & NRR@1 & +.037 $\pm$ .084 (3/6) C & \textbf{+.121 $\pm$ .033} (6/6) B & \textbf{+.149 $\pm$ .016} (6/6) B & +.027 $\pm$ .021 (3/6) V \\
 & MRR & +.107 $\pm$ .093 (5/6) C & \textbf{+.151 $\pm$ .029} (6/6) B & \textbf{+.220 $\pm$ .020} (6/6) B & \textbf{+.083 $\pm$ .031} (6/6) B \\
 & identification & +.041 $\pm$ .055 (3/6) B & +.058 $\pm$ .038 (4/6) B & \textbf{+.164 $\pm$ .024} (6/6) B & +.022 $\pm$ .019 (3/6) B \\
\midrule
real logs & NRR@1 & +.278 $\pm$ .147 (2/3) B & +.190 $\pm$ .156 (2/3) B & \textbf{+.098 $\pm$ .002} (3/3) B & +.041 $\pm$ .024 (2/3) D \\
 & MRR & +.115 $\pm$ .133 (2/3) B & +.098 $\pm$ .133 (2/3) B & \textbf{+.095 $\pm$ .007} (3/3) B & \textbf{+.057 $\pm$ .018} (3/3) D \\
 & identification & +.167 $\pm$ .096 (2/3) B & +.032 $\pm$ .159 (2/3) B & \textbf{+.143 $\pm$ .030} (3/3) B & +.115 $\pm$ .071 (3/3) B \\
\midrule
vehicle CAN & NRR@1 & \textbf{+.750 $\pm$ .083} (10/10) B & \textbf{+.133 $\pm$ .042} (8/10) S & +.078 $\pm$ .044 (6/10) C & +.027 $\pm$ .030 (4/10) D \\
 & MRR & \textbf{+.499 $\pm$ .078} (10/10) S & \textbf{+.188 $\pm$ .033} (9/10) S & +.019 $\pm$ .042 (5/10) C & \textbf{+.080 $\pm$ .018} (10/10) B \\
 & identification & \textbf{+.650 $\pm$ .130} (8/10) B & +.067 $\pm$ .057 (4/10) S & +.069 $\pm$ .065 (5/10) S & +.009 $\pm$ .025 (4/10) S \\
\midrule
questionnaire items & NRR@1 & \textbf{+.046 $\pm$ .014} (5/5) S & \textbf{+.036 $\pm$ .014} (4/5) S & \textbf{+.013 $\pm$ .006} (3/5) B & \textbf{+.017 $\pm$ .007} (5/5) S \\
 & MRR & \textbf{+.035 $\pm$ .013} (5/5) B & \textbf{+.038 $\pm$ .012} (4/5) B & \textbf{+.012 $\pm$ .005} (5/5) B & \textbf{+.021 $\pm$ .008} (5/5) B \\
 & identification & $-$.005 $\pm$ .019 (2/5) S & \textbf{$-$.028 $\pm$ .009} (0/5) S & +.001 $\pm$ .011 (2/5) S & $-$.001 $\pm$ .018 (2/5) B \\
\midrule
latent variables & NRR@1 & +.008 $\pm$ .019 (2/5) B & $-$.004 $\pm$ .034 (1/5) B & \textbf{+.085 $\pm$ .013} (5/5) B & \textbf{+.027 $\pm$ .005} (5/5) B \\
 & MRR & +.009 $\pm$ .016 (2/5) B & +.000 $\pm$ .028 (2/5) B & \textbf{+.071 $\pm$ .009} (5/5) B & \textbf{+.025 $\pm$ .003} (5/5) B \\
 & identification & \textbf{+.115 $\pm$ .029} (5/5) B & +.045 $\pm$ .026 (4/5) B & \textbf{+.081 $\pm$ .029} (4/5) B & \textbf{+.067 $\pm$ .018} (5/5) B \\
 & naming quality & \textbf{+.085 $\pm$ .023} (5/5) B & +.025 $\pm$ .022 (4/5) B & +.050 $\pm$ .028 (4/5) B & \textbf{+.055 $\pm$ .013} (5/5) B \\
\bottomrule}\end{table}

\paragraph{Name-copy references.}
\textit{A copied name reaches at most the right latent variable or channel type.} Tables~\ref{tab:copy-refs} and~\ref{tab:copy-refs-latents} score the three references of Appendix~\ref{apx:roster}. Where the output is a
known name, on every robot column and for Correlation kNN everywhere,
NRR@1 never places the true name first. Identification still credits such an output, since the judge often confuses a returned neighbor of the same type with the target among five candidates. This is why
the task is scored on NRR@1 and naming quality beside
identification. On questionnaire items, where Feature Propagation decodes to the concept bank, it reaches at most .067 on NRR@1. On latent variables, the two references that run behind the factor-analysis front end approach \ours{} in NRR@1. In the scoring space, the text of an item lies near the name of its own latent variable. EFA loading naming has .500 against .540 at 20\% masking and .254 against .240 at 90\%. Naming quality separates both from our method. EFA loading naming returns an item of the scale, and the verbatim check rejects a copied item text in every case. Its identification rate of .608 at 20\% masking therefore becomes a naming quality of .000. Feature Propagation decodes the mean of a factor's items to the concept bank. Its naming quality lies below that of our method at every masking level, .408 against .446 at 20\%.

\begin{table}[!htbp]\centering\scriptsize\setlength{\tabcolsep}{2.4pt}
\caption{Name-copy references with no structure supplied. Correlation kNN returns the name of the most correlated named variable, and Feature Propagation decodes the diffused known names to the nearest known name on robot channels and to the concept bank on questionnaires.}
\label{tab:copy-refs}
\fittable{llcccc:cccc:cccc}{%
\toprule
 &  & \multicolumn{4}{c:}{NRR@1} & \multicolumn{4}{c:}{MRR} & \multicolumn{4}{c}{identification} \\
variables & method & 20 & 50 & 75 & 90 & 20 & 50 & 75 & 90 & 20 & 50 & 75 & 90 \\
\midrule
simulated arms & Correlation kNN & .000 & .000 & .000 & .000 & {.181} & {.173} & {.159} & {.117} & {.106} & {.067} & {.068} & {.051} \\
 & Feature Propagation & .000 & .000 & .000 & .000 & {.189} & {.157} & {.168} & {.118} & {.128} & {.067} & {.074} & {.052} \\
\midrule
real logs & Correlation kNN & .000 & .000 & .000 & .000 & {.262} & {.218} & {.179} & {.142} & {.083} & {.222} & {.098} & {.040} \\
 & Feature Propagation & .000 & .000 & .000 & .000 & {.210} & {.209} & {.174} & {.136} & {.167} & {.259} & {.122} & {.020} \\
\midrule
vehicle CAN & Correlation kNN & .000 & .000 & .000 & .000 & {.228} & {.251} & {.181} & {.191} & .000 & {.083} & {.012} & .000 \\
 & Feature Propagation & .000 & .000 & .000 & .000 & {.174} & {.248} & {.159} & {.191} & .000 & {.067} & {.023} & .000 \\
\midrule
questionnaire items & Correlation kNN & .000 & .000 & .000 & .000 & {.261} & {.227} & {.169} & {.115} & {.433} & {.390} & {.325} & {.263} \\
 & Feature Propagation & {.067} & {.045} & {.031} & {.013} & {.222} & {.199} & {.159} & {.103} & {.272} & {.281} & {.244} & {.174} \\
\bottomrule}
\end{table}
\begin{table}[!htbp]\centering\scriptsize\setlength{\tabcolsep}{2.4pt}
\caption{The two references on latent variables. EFA loading naming returns the text of the item that loads highest on the factor, and Feature Propagation decodes the mean member embedding of the factor to the concept bank.}
\label{tab:copy-refs-latents}
\fittable{lcccc:cccc:cccc:cccc}{%
\toprule
 & \multicolumn{4}{c:}{NRR@1} & \multicolumn{4}{c:}{MRR} & \multicolumn{4}{c:}{identification} & \multicolumn{4}{c}{naming quality} \\
method & 20 & 50 & 75 & 90 & 20 & 50 & 75 & 90 & 20 & 50 & 75 & 90 & 20 & 50 & 75 & 90 \\
\midrule
EFA loading naming & {.500} & {.458} & {.381} & {.254} & {.643} & {.611} & {.543} & {.431} & {.608} & {.600} & {.550} & {.402} & .000 & .000 & .000 & .000 \\
Feature Propagation + EFA & {.454} & {.400} & {.358} & {.235} & {.616} & {.570} & {.519} & {.413} & {.631} & {.585} & {.519} & {.342} & {.408} & {.373} & {.308} & {.196} \\
\bottomrule}
\end{table}

\paragraph{Scale by scale, arm by arm, log by log.}
\textit{The lead holds scale by scale.} In NRR@1 on items \ours{} is first in 15 of the 20 combinations of scale and masking level. The main text pools the five
evaluation scales by sample and the arms, logs or drives of a robot
column by sample. Tables~\ref{tab:scale-dass} to~\ref{tab:scale-wvs} give every method on the items and the latent variables of each scale. They report the three metrics of Table~\ref{tab:st1-main} at every masking level and naming quality on latent variables. Table~\ref{tab:perbody} gives identification on each simulated arm and real log. Pooled by sample, the cells of the five scale tables
reproduce the main tables. The scales differ in how much information their items give a frozen model.
HEXACO has short everyday statements, so it is the easiest scale for
identification. Table~\ref{tab:perscale-latents} gives our method on each scale under both structure sources. The key groups the items by the theory of the scale, and the measurements group them by how respondents answer. The two sources therefore define different sets of latent variables on every scale. On the World Values Survey, the discovered latent variables differ most from the key (Appendix~\ref{apx:discovery}).

\begin{table}[!htbp]\centering\scriptsize\setlength{\tabcolsep}{2.6pt}
\caption{Latent variables scale by scale, our method under both structure sources.}
\label{tab:perscale-latents}
\fittable{llcccc:cccc:cccc:cccc}{
\toprule
 & & \multicolumn{4}{c}{NRR@1} & \multicolumn{4}{c}{MRR} & \multicolumn{4}{c}{identification} & \multicolumn{4}{c}{naming quality} \\
structure & scale & 20 & 50 & 75 & 90 & 20 & 50 & 75 & 90 & 20 & 50 & 75 & 90 & 20 & 50 & 75 & 90 \\
\midrule
discovered graph & DASS & .756 & .800 & .622 & .400 & .859 & .881 & .789 & .648 & .689 & .689 & .511 & .222 & .667 & .622 & .511 & .156 \\
 & HEXACO & .408 & .320 & .304 & .104 & .516 & .462 & .461 & .247 & .512 & .496 & .504 & .248 & .384 & .432 & .456 & .192 \\
 & KIMS & .691 & .655 & .418 & .400 & .814 & .803 & .633 & .605 & .382 & .382 & .273 & .164 & .382 & .345 & .255 & .164 \\
 & RIASEC & .692 & .585 & .508 & .262 & .793 & .726 & .655 & .473 & .800 & .662 & .562 & .369 & .615 & .446 & .406 & .323 \\
 & WVS & .350 & .333 & .217 & .233 & .524 & .488 & .427 & .409 & .300 & .367 & .283 & .250 & .283 & .317 & .267 & .250 \\
\midrule
supplied graph & DASS & .733 & .667 & .733 & .533 & .867 & .822 & .867 & .733 & .667 & .467 & .467 & .200 & .667 & .467 & .467 & .200 \\
 & HEXACO & .387 & .300 & .373 & .140 & .508 & .442 & .495 & .298 & .560 & .500 & .587 & .407 & .547 & .500 & .587 & .393 \\
 & KIMS & .600 & .750 & .450 & .550 & .792 & .858 & .671 & .721 & .500 & .500 & .450 & .300 & .500 & .500 & .450 & .300 \\
 & RIASEC & .667 & .633 & .600 & .400 & .822 & .786 & .750 & .573 & .933 & .828 & .633 & .433 & .867 & .690 & .367 & .367 \\
 & WVS & .556 & .422 & .311 & .222 & .720 & .584 & .450 & .374 & .689 & .512 & .356 & .182 & .600 & .488 & .333 & .159 \\
\bottomrule}\end{table}

\begin{table}[!htb]\centering\scriptsize\setlength{\tabcolsep}{2.4pt}
\caption{DASS, no structure supplied. Items above and latent variables below. Bold is the best value per column and underline the second best.}
\label{tab:scale-dass}
\fittable{lcccc:cccc:cccc}{
\toprule
 & \multicolumn{4}{c:}{NRR@1} & \multicolumn{4}{c:}{MRR} & \multicolumn{4}{c}{identification} \\
items & 20 & 50 & 75 & 90 & 20 & 50 & 75 & 90 & 20 & 50 & 75 & 90 \\
\midrule
\ours{} & \textbf{.071} & \textbf{.067} & \textbf{.056} & \textbf{.021} & \underline{.192} & \textbf{.184} & \textbf{.173} & \underline{.101} & \underline{.286} & \textbf{.219} & \textbf{.176} & \underline{.105} \\
CLIP-Dissect-e5 & .000 & .010 & .013 & .011 & .078 & .093 & .115 & .094 & .029 & .107 & .088 & .079 \\
AutoInterp & .024 & .019 & .019 & \underline{.016} & .103 & .095 & .103 & \textbf{.104} & .048 & .105 & .113 & \textbf{.111} \\
Delphi & .000 & .029 & .013 & .005 & .092 & .093 & .093 & .089 & .143 & .086 & \underline{.157} & .074 \\
SASC-style & \underline{.071} & \underline{.048} & \underline{.025} & \underline{.016} & \textbf{.218} & \underline{.162} & \underline{.130} & .091 & \textbf{.310} & \underline{.143} & .125 & .053 \\
DeViSE-style & .000 & .019 & .006 & .011 & .129 & .133 & .088 & .085 & .048 & \underline{.143} & .056 & .095 \\
\bottomrule}
\par\vspace{6pt}
\fittable{lcccc:cccc:cccc:cccc}{
\toprule
 & \multicolumn{4}{c:}{NRR@1} & \multicolumn{4}{c:}{MRR} & \multicolumn{4}{c:}{identification} & \multicolumn{4}{c}{naming quality} \\
latent variables & 20 & 50 & 75 & 90 & 20 & 50 & 75 & 90 & 20 & 50 & 75 & 90 & 20 & 50 & 75 & 90 \\
\midrule
\ours{} & \underline{.756} & \textbf{.800} & \textbf{.622} & \textbf{.400} & \underline{.859} & \textbf{.881} & \textbf{.789} & \underline{.648} & \textbf{.689} & \textbf{.689} & \textbf{.511} & \underline{.222} & \textbf{.667} & \underline{.622} & \textbf{.511} & \underline{.156} \\
CLIP-Dissect-e5 & .267 & .467 & .267 & \underline{.400} & .611 & .678 & .567 & \textbf{.667} & .200 & .067 & .133 & .067 & .133 & .067 & .067 & .067 \\
AutoInterp & .333 & .400 & \underline{.400} & .200 & .611 & .644 & \underline{.644} & .544 & .267 & .267 & \underline{.333} & .133 & .267 & .267 & \underline{.333} & .133 \\
Delphi & .333 & .267 & .067 & .267 & .622 & .578 & .456 & .578 & .000 & .133 & .067 & .133 & .000 & .133 & .067 & .133 \\
SASC-style & \textbf{.933} & \underline{.733} & .267 & .333 & \textbf{.967} & \underline{.856} & .556 & .611 & \underline{.600} & \underline{.667} & .133 & .200 & \underline{.600} & \textbf{.667} & .133 & \textbf{.200} \\
DeViSE-style & .467 & .533 & .200 & \underline{.400} & .678 & .722 & .533 & \textbf{.667} & .267 & .267 & .200 & \textbf{.267} & .000 & .067 & .067 & .000 \\
\bottomrule}
\end{table}

\begin{table}[!htb]\centering\scriptsize\setlength{\tabcolsep}{2.4pt}
\caption{HEXACO, no structure supplied. Items above and latent variables below. Bold is the best value per column and underline the second best.}
\label{tab:scale-hexaco}
\fittable{lcccc:cccc:cccc}{
\toprule
 & \multicolumn{4}{c:}{NRR@1} & \multicolumn{4}{c:}{MRR} & \multicolumn{4}{c}{identification} \\
items & 20 & 50 & 75 & 90 & 20 & 50 & 75 & 90 & 20 & 50 & 75 & 90 \\
\midrule
\ours{} & \textbf{.138} & \textbf{.113} & \textbf{.059} & \textbf{.026} & \textbf{.249} & \textbf{.209} & \textbf{.141} & \textbf{.080} & \textbf{.442} & \textbf{.438} & \underline{.348} & \underline{.279} \\
CLIP-Dissect-e5 & .025 & .022 & .017 & .013 & .068 & .074 & .068 & .051 & .204 & .209 & .204 & .190 \\
AutoInterp & .004 & .005 & .004 & .000 & .027 & .027 & .027 & .020 & .083 & .114 & .103 & .108 \\
Delphi & .000 & .003 & .004 & .004 & .023 & .026 & .027 & .030 & .167 & .149 & .146 & .133 \\
SASC-style & \underline{.087} & \underline{.060} & \underline{.039} & \underline{.016} & \underline{.199} & \underline{.160} & \underline{.117} & \underline{.063} & \underline{.421} & \underline{.438} & \textbf{.402} & \textbf{.290} \\
DeViSE-style & .033 & .022 & .014 & .003 & .104 & .082 & .072 & .043 & .312 & .290 & .259 & .209 \\
\bottomrule}
\par\vspace{6pt}
\fittable{lcccc:cccc:cccc:cccc}{
\toprule
 & \multicolumn{4}{c:}{NRR@1} & \multicolumn{4}{c:}{MRR} & \multicolumn{4}{c:}{identification} & \multicolumn{4}{c}{naming quality} \\
latent variables & 20 & 50 & 75 & 90 & 20 & 50 & 75 & 90 & 20 & 50 & 75 & 90 & 20 & 50 & 75 & 90 \\
\midrule
\ours{} & \textbf{.408} & \textbf{.320} & \textbf{.304} & \textbf{.104} & \textbf{.516} & \textbf{.462} & \textbf{.461} & \textbf{.247} & \textbf{.512} & \textbf{.496} & \textbf{.504} & \textbf{.248} & \underline{.384} & \textbf{.432} & \textbf{.456} & \underline{.192} \\
CLIP-Dissect-e5 & .027 & .073 & .093 & \underline{.087} & .157 & .190 & .221 & .184 & .160 & .140 & .187 & .127 & .153 & .127 & .167 & .107 \\
AutoInterp & .053 & .027 & .027 & .053 & .158 & .128 & .140 & .154 & .133 & .127 & .147 & .160 & .133 & .127 & .147 & .160 \\
Delphi & .020 & .067 & .027 & .033 & .110 & .166 & .143 & .126 & .113 & .107 & .081 & .133 & .113 & .107 & .081 & .133 \\
SASC-style & \underline{.267} & \underline{.247} & \underline{.160} & .067 & \underline{.413} & \underline{.391} & \underline{.322} & \underline{.196} & \underline{.460} & \underline{.413} & \underline{.360} & \underline{.220} & \textbf{.460} & \underline{.413} & \underline{.360} & \textbf{.220} \\
DeViSE-style & .067 & .053 & .047 & .033 & .165 & .143 & .145 & .134 & .173 & .160 & .133 & .107 & .047 & .040 & .060 & .047 \\
\bottomrule}
\end{table}

\begin{table}[!htb]\centering\scriptsize\setlength{\tabcolsep}{2.4pt}
\caption{KIMS, no structure supplied. Items above and latent variables below. Bold is the best value per column and underline the second best.}
\label{tab:scale-kims}
\fittable{lcccc:cccc:cccc}{
\toprule
 & \multicolumn{4}{c:}{NRR@1} & \multicolumn{4}{c:}{MRR} & \multicolumn{4}{c}{identification} \\
items & 20 & 50 & 75 & 90 & 20 & 50 & 75 & 90 & 20 & 50 & 75 & 90 \\
\midrule
\ours{} & \textbf{.179} & .050 & \textbf{.069} & \textbf{.074} & \textbf{.370} & \underline{.237} & \textbf{.236} & \textbf{.198} & .179 & \textbf{.310} & \textbf{.214} & \underline{.143} \\
CLIP-Dissect-e5 & .026 & .010 & .041 & .017 & .147 & .127 & .165 & .118 & .184 & .140 & .148 & .093 \\
AutoInterp & .026 & .010 & .014 & .011 & .096 & .092 & .099 & .100 & .077 & .051 & .083 & .069 \\
Delphi & .026 & .040 & .041 & .034 & .117 & .126 & .136 & .124 & .154 & .090 & .083 & .069 \\
SASC-style & \underline{.077} & \textbf{.120} & \underline{.055} & \underline{.057} & \underline{.273} & \textbf{.270} & \underline{.203} & \underline{.175} & \textbf{.333} & \underline{.242} & \underline{.193} & \textbf{.190} \\
DeViSE-style & .051 & \underline{.060} & .034 & .029 & .170 & .174 & .132 & .114 & \underline{.231} & .121 & .131 & .137 \\
\bottomrule}
\par\vspace{6pt}
\fittable{lcccc:cccc:cccc:cccc}{
\toprule
 & \multicolumn{4}{c:}{NRR@1} & \multicolumn{4}{c:}{MRR} & \multicolumn{4}{c:}{identification} & \multicolumn{4}{c}{naming quality} \\
latent variables & 20 & 50 & 75 & 90 & 20 & 50 & 75 & 90 & 20 & 50 & 75 & 90 & 20 & 50 & 75 & 90 \\
\midrule
\ours{} & \textbf{.691} & \textbf{.655} & \underline{.418} & \textbf{.400} & \textbf{.814} & \textbf{.803} & \underline{.633} & \textbf{.605} & \underline{.382} & \textbf{.382} & \underline{.273} & \textbf{.164} & \underline{.382} & \underline{.345} & \underline{.255} & \textbf{.164} \\
CLIP-Dissect-e5 & .300 & .250 & .250 & .350 & .562 & .517 & .525 & .588 & .050 & .050 & .050 & \underline{.150} & .000 & .050 & .050 & \underline{.150} \\
AutoInterp & .100 & .350 & .200 & .200 & .421 & .588 & .487 & .467 & .211 & .158 & .158 & .050 & .211 & .158 & .158 & .050 \\
Delphi & \underline{.500} & .300 & .300 & .150 & \underline{.671} & .537 & .550 & .487 & .250 & .050 & .150 & \underline{.150} & .250 & .050 & .150 & \underline{.150} \\
SASC-style & .300 & \underline{.500} & \textbf{.500} & .200 & .575 & \underline{.688} & \textbf{.700} & .521 & \textbf{.500} & \underline{.350} & \textbf{.350} & \underline{.150} & \textbf{.500} & \textbf{.350} & \textbf{.350} & \underline{.150} \\
DeViSE-style & .250 & .400 & .350 & \underline{.400} & .558 & .633 & .592 & \underline{.604} & .150 & .150 & .050 & .050 & .150 & .150 & .050 & .000 \\
\bottomrule}
\end{table}

\begin{table}[!htb]\centering\scriptsize\setlength{\tabcolsep}{2.4pt}
\caption{RIASEC, no structure supplied. Items above and latent variables below. Bold is the best value per column and underline the second best.}
\label{tab:scale-riasec}
\fittable{lcccc:cccc:cccc}{
\toprule
 & \multicolumn{4}{c:}{NRR@1} & \multicolumn{4}{c:}{MRR} & \multicolumn{4}{c}{identification} \\
items & 20 & 50 & 75 & 90 & 20 & 50 & 75 & 90 & 20 & 50 & 75 & 90 \\
\midrule
\ours{} & \textbf{.062} & \underline{.042} & \textbf{.050} & .009 & \textbf{.220} & \textbf{.198} & \textbf{.194} & \textbf{.124} & \textbf{.271} & \textbf{.258} & \textbf{.200} & \textbf{.116} \\
CLIP-Dissect-e5 & .042 & \textbf{.050} & \underline{.028} & \underline{.014} & .148 & .153 & \underline{.147} & .074 & .083 & .167 & \underline{.194} & .079 \\
AutoInterp & .021 & .025 & .011 & \textbf{.023} & .108 & .085 & .075 & .102 & .125 & .117 & .133 & .088 \\
Delphi & .021 & .008 & .022 & \textbf{.023} & .122 & .085 & .089 & \underline{.105} & .042 & .100 & .106 & .102 \\
SASC-style & \underline{.062} & \textbf{.050} & .022 & \underline{.014} & \underline{.201} & \underline{.179} & .119 & .102 & \underline{.167} & \underline{.217} & .172 & \underline{.112} \\
DeViSE-style & .021 & .008 & .006 & .009 & .180 & .150 & .108 & .079 & .125 & .158 & .100 & .079 \\
\bottomrule}
\par\vspace{6pt}
\fittable{lcccc:cccc:cccc:cccc}{
\toprule
 & \multicolumn{4}{c:}{NRR@1} & \multicolumn{4}{c:}{MRR} & \multicolumn{4}{c:}{identification} & \multicolumn{4}{c}{naming quality} \\
latent variables & 20 & 50 & 75 & 90 & 20 & 50 & 75 & 90 & 20 & 50 & 75 & 90 & 20 & 50 & 75 & 90 \\
\midrule
\ours{} & \textbf{.692} & \textbf{.585} & \textbf{.508} & \textbf{.262} & \textbf{.793} & \textbf{.726} & \textbf{.655} & \textbf{.473} & \underline{.800} & \underline{.662} & \textbf{.562} & \textbf{.369} & \underline{.615} & \underline{.446} & \textbf{.406} & \textbf{.323} \\
CLIP-Dissect-e5 & .100 & .133 & .200 & \underline{.233} & .362 & .377 & .416 & \underline{.453} & .069 & .100 & .167 & .167 & .069 & .100 & .167 & .133 \\
AutoInterp & .167 & .133 & \underline{.233} & .133 & .392 & .401 & .474 & .383 & .233 & .167 & \underline{.333} & .133 & .233 & .167 & \underline{.333} & .133 \\
Delphi & .200 & .067 & \underline{.233} & .100 & .442 & .350 & .446 & .354 & .267 & .067 & .200 & .033 & .267 & .067 & .200 & .033 \\
SASC-style & \underline{.633} & \underline{.433} & \underline{.233} & .133 & \underline{.776} & \underline{.628} & \underline{.487} & .387 & \textbf{.867} & \textbf{.667} & .267 & \underline{.200} & \textbf{.867} & \textbf{.667} & .267 & \underline{.200} \\
DeViSE-style & .233 & .200 & .200 & .167 & .438 & .423 & .412 & .422 & .172 & .200 & .100 & .167 & .172 & .133 & .100 & .133 \\
\bottomrule}
\end{table}

\begin{table}[!htb]\centering\scriptsize\setlength{\tabcolsep}{2.4pt}
\caption{WVS, no structure supplied. Items above and latent variables below. Bold is the best value per column and underline the second best.}
\label{tab:scale-wvs}
\fittable{lcccc:cccc:cccc}{
\toprule
 & \multicolumn{4}{c:}{NRR@1} & \multicolumn{4}{c:}{MRR} & \multicolumn{4}{c}{identification} \\
items & 20 & 50 & 75 & 90 & 20 & 50 & 75 & 90 & 20 & 50 & 75 & 90 \\
\midrule
\ours{} & \underline{.048} & \textbf{.220} & \textbf{.075} & \underline{.032} & \textbf{.269} & \textbf{.372} & \textbf{.210} & \textbf{.184} & .048 & \textbf{.140} & .075 & \underline{.095} \\
CLIP-Dissect-e5 & \textbf{.095} & .020 & \underline{.050} & .032 & .234 & .140 & \underline{.188} & .165 & .048 & .060 & \textbf{.138} & \textbf{.129} \\
AutoInterp & .000 & \underline{.060} & .013 & .032 & .139 & .171 & .127 & .144 & \underline{.095} & .040 & .025 & .063 \\
Delphi & .048 & .000 & .037 & \textbf{.042} & .165 & .116 & .162 & \underline{.171} & .048 & .000 & \underline{.087} & .063 \\
SASC-style & .048 & .020 & .000 & .032 & .246 & \underline{.181} & .127 & .164 & \textbf{.190} & \underline{.100} & .037 & .042 \\
DeViSE-style & \textbf{.095} & .000 & .000 & .021 & \underline{.254} & .174 & .124 & .131 & .048 & .020 & .050 & .032 \\
\bottomrule}
\par\vspace{6pt}
\fittable{lcccc:cccc:cccc:cccc}{
\toprule
 & \multicolumn{4}{c:}{NRR@1} & \multicolumn{4}{c:}{MRR} & \multicolumn{4}{c:}{identification} & \multicolumn{4}{c}{naming quality} \\
latent variables & 20 & 50 & 75 & 90 & 20 & 50 & 75 & 90 & 20 & 50 & 75 & 90 & 20 & 50 & 75 & 90 \\
\midrule
\ours{} & \underline{.350} & \textbf{.333} & \textbf{.217} & \textbf{.233} & \underline{.524} & \textbf{.488} & \textbf{.427} & \textbf{.409} & \underline{.300} & \textbf{.367} & \textbf{.283} & \textbf{.250} & \underline{.283} & \textbf{.317} & \textbf{.267} & \textbf{.250} \\
CLIP-Dissect-e5 & .044 & .089 & .089 & .089 & .270 & .327 & .288 & .291 & .068 & .114 & .068 & .071 & .068 & .114 & .068 & .071 \\
AutoInterp & .111 & .111 & .156 & .089 & .310 & .328 & .343 & .295 & .143 & .116 & .089 & .114 & .143 & .116 & .089 & .114 \\
Delphi & .178 & .133 & .089 & .067 & .368 & .335 & .281 & .290 & .133 & .156 & .089 & .089 & .133 & .156 & .089 & .089 \\
SASC-style & \textbf{.378} & \underline{.244} & \underline{.178} & \underline{.156} & \textbf{.550} & \underline{.451} & \underline{.377} & \underline{.332} & \textbf{.350} & \underline{.238} & \underline{.267} & \underline{.178} & \textbf{.350} & \underline{.238} & \underline{.267} & \underline{.178} \\
DeViSE-style & .156 & .133 & .111 & .111 & .369 & .345 & .302 & .316 & .022 & .067 & .111 & .067 & .022 & .067 & .089 & .067 \\
\bottomrule}
\end{table}

\begin{table}[!htbp]\centering\small\setlength{\tabcolsep}{4pt}
\caption{Identification arm by arm and log by log, our method under both structure sources, with the number of judged cases in parentheses. Vehicle drives are pooled in Table~\ref{tab:st2-robot}.}
\label{tab:perbody}
\fittable{lcccc:cccc}{
\toprule
 & \multicolumn{4}{c:}{discovered graph} & \multicolumn{4}{c}{supplied graph} \\
\cmidrule(lr){2-5}\cmidrule(lr){6-9}
arm or log & 20 & 50 & 75 & 90 & 20 & 50 & 75 & 90 \\
\midrule
Panda & .167 (6) & .250 (16) & .227 (22) & .138 (29) & .000 (6) & .188 (16) & .208 (24) & .207 (29) \\
Sawyer & .143 (7) & .167 (18) & .154 (26) & .125 (32) & .286 (7) & .278 (18) & .333 (27) & .161 (31) \\
IIWA & .000 (9) & .238 (21) & .152 (33) & .125 (40) & .222 (9) & .300 (20) & .242 (33) & .075 (40) \\
UR5e & .125 (8) & .190 (21) & .258 (31) & .079 (38) & .125 (8) & .190 (21) & .188 (32) & .079 (38) \\
Kinova3 & .111 (9) & .227 (22) & .242 (33) & .128 (39) & .111 (9) & .273 (22) & .182 (33) & .075 (40) \\
xArm7 & .125 (8) & .150 (20) & .212 (33) & .105 (38) & .222 (9) & .273 (22) & .303 (33) & .075 (40) \\
UTokyo xArm & .000 (3) & .667 (6) & .400 (10) & .250 (12) & .333 (3) & .833 (6) & .400 (10) & .333 (12) \\
Stanford HYDRA & .333 (3) & .286 (7) & .100 (10) & .231 (13) & .000 (3) & .000 (7) & .100 (10) & .167 (12) \\
ALOHA cup & .667 (6) & .143 (14) & .238 (21) & .120 (25) & .833 (6) & .429 (14) & .286 (21) & .167 (24) \\
\bottomrule
}
\end{table}

\subsection{RQ2: component by component}
\label{apx:j-components}

\begin{figure}[!htb]\centering
\includegraphics[width=\textwidth]{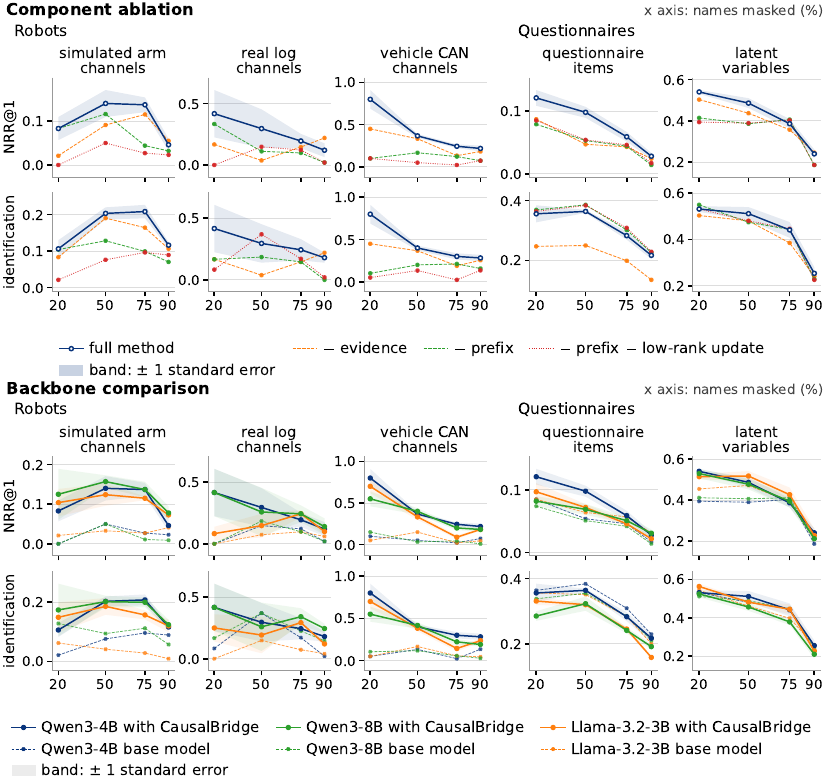}
\caption{One component removed at a time (top) and one recipe on three
backbones (bottom), under the discovered graph. Rows are NRR@1 and
identification. In the bottom block, a solid line is a backbone with \ours{} and the dashed line of the same color is that backbone alone.}
\label{fig:comp}
\end{figure}

The finding of RQ2 is that every component contributes and that the prefix contributes the most. Figure~\ref{fig:comp} (top) shows the variants with one component removed at a time under the discovered graph.

\textit{The prefix carries the structure.} Removing the prefix costs the most. Without it, the NRR@1 curve falls below the curve without the evidence line on the vehicle CAN at every level and on every robot column from 75\% masking on. The variant without the prefix still receives the vector $\hat e_v$
in words, since its evidence line keeps the nearest phrases decoded from
that vector (Appendix~\ref{apx:plugin}). These phrases keep the rough
meaning of the vector and lose how the variable relates to each named
variable, so a few decoded phrases cannot replace the vector. On the vehicle CAN removing the prefix loses 50 to 88\% of the NRR@1 of the full method. On
the simulated arms and the real logs it loses the most at deep masking,
68\% at 75\% on the arms and 83\% at 90\% on the logs. On questionnaire items it loses 25 to 50\% of the NRR@1 of the full method, while identification does not fall. A recognized concept therefore no longer reaches the right variable.

\textit{The evidence line supplies the vocabulary of the name.} On the vehicle CAN the variant without the evidence line lies between the full method and the variant without the prefix at every masking level. On the identification of items it is the lowest curve, 30 to 38\% below
the full method at every masking level. The model phrases the vector $\hat e_v$ better when the evidence line also gives the named variables most correlated with the unnamed variable. On the simulated arms and the real logs, removing the evidence line costs more than removing the prefix through 50\% masking. On the real logs at 20\% masking, the variant without the evidence line reaches .167 and the variant without the prefix .333, with the full method at .417. The evidence line lists the named channels most correlated with an unnamed channel, and their names carry the platform's own vocabulary. From 75\% on removing the prefix costs more.

\textit{The prefix and the evidence line converge on latent variables and at 90\% masking.} On latent variables the prefix and the evidence line both describe the variable through the same items. The four
identification curves therefore lie within .07 of one another at every
level, and the four naming quality values within .10
(Table~\ref{tab:abl-latents}). The full method leads NRR@1 and MRR at 20
and 50\% masking, and the NRR@1 curves cross at 75\%. At 90\% masking the variant without the evidence line meets or passes the full method on the simulated arms, the real logs and the latent variables as well. The evidence line lists the most correlated named variables, and at that level almost none are left to list.

\textit{The prefix determines the type of a channel on every robot column, and the evidence line helps determine its coordinate.} Type and coordinate accuracy (Appendix~\ref{apx:protocol}) separate the contributions of the prefix and the evidence line on all three robot columns (Table~\ref{tab:abl-robot}). The type depends on the prefix everywhere.
At 50\% masking, type accuracy falls on every column without the prefix. It falls from .537 to .347 on the simulated arms, from .518 to .259 on the real logs and from .683 to .300 on the vehicle CAN. The vector $\hat e_v$ therefore carries the type of a channel. Without
the evidence line the type accuracy stays within .07 of the full method
on the simulated arms and the vehicle CAN. On the real logs it falls at
20 and 50\% masking, .370 against .518 at 50\%, since the names in the
evidence line carry the platform's own vocabulary. The coordinate depends
on the evidence line where the names of the most correlated named
channels carry it. On the vehicle CAN the four wheel speeds differ only by a side word. Without the evidence line, coordinate accuracy falls from .800 to .000 at 20\% masking and from .500 to .050 at 75\%.
At 50 and 90\% it stays within .04. On the real logs it falls at every level, from .750 to .625
at 20\% masking and from .579 to .474 at 90\%. On the simulated arms the
coordinate depends on the prefix from 50\% masking on, .181 against .078
at 50\% and .145 against .042 at 90\% without it. The index of a joint follows from its position among the other joints, and no neighbor's name states it. The evidence line is worth
.04 to .05 there through 75\%.

\textit{The low-rank update adds little on its own.} Without the prefix, the model with the low-rank update coincides with the base model alone on both questionnaire columns. It keeps part of the gain on the simulated arms, on the real logs at 20\% masking and on the vehicle CAN at 50 and 75\%.

\textit{Under a supplied graph, the prefix is worth less where the structure can be written into the prompt.} A latent variable of the key can be written into the text
as its item list, while a discovered latent variable exists only as a position in the embedding space. On latent variables the prefix is therefore worth .13 and .10 in NRR@1 at 20 and 50\% masking under the discovered
graph and almost nothing under the supplied graph
(Table~\ref{tab:abl-latents}). On the simulated arms at light masking the text alone even does better, since under the supplied graph the prompt lists a joint's named neighbor with its index.

\subsection{RQ3: across backbones}
\label{apx:j-backbones}

The finding of RQ3 is that one recipe carries across backbones.
Tables~\ref{tab:bb-items}, \ref{tab:bb-latents} and~\ref{tab:bb-robot} carry the numbers of Figure~\ref{fig:comp} (bottom), which shows one recipe on Qwen3-4B, Qwen3-8B and Llama-3.2-3B, the order in which every triple of numbers below is listed.

\textit{One recipe serves every backbone.} With \ours{}, every backbone lies far above itself alone in NRR@1 on the channels, whose labels give a language prior little semantic information. On the vehicle CAN at 20\% masking the three backbones rise from .100, .150 and .050 alone to .800, .550 and .700 with our method. On the real logs they rise from .000 alone to .417, .417 and .083. Type accuracy shows the same gain. Alone, the backbones name the type of .050, .450 and .000 of the unnamed channels of the vehicle CAN at 20\% masking, and with our method .900, .950 and 1.000. From 50\% masking on, the type accuracy of each backbone with our method lies above that backbone alone on every robot column. The gain shrinks where the text already carries more of the name, to a few hundredths on questionnaire items. The item text already indicates the latent variable, and the prefix adds the exact item. 

\textit{The backbones differ in their starting level and in the columns where they perform best.} The two Qwen models lie within .10 of
each other on the simulated arms and the real logs at every level and
on the vehicle CAN from 50\% masking on. The 3B Llama starts lowest on the real logs and reaches the other two by 75\% masking. On latent variables, where the text carries part of the structure, it is the best of the three at 50 and 75\% masking. At 90\% masking the three lie within .10 of one another in NRR@1 on every column. At deep masking the structure therefore sets the level, and the choice of backbone has little effect. The same pattern holds under the supplied graph, where the gain on the vehicle CAN at 20\% masking is .750, .400 and .600 for the three backbones.

\subsection{Tables}
\label{apx:j-tables}

\paragraph{Component and backbone tables.}
These tables carry the numbers behind the component ablation and the
backbone comparison of the main text, under both structure sources, with every
metric each set of variables reports. Bold marks the best value in a column within a set of variables and underline the second best. A zero that a method returns by construction is never marked.

\begingroup
\raggedbottom
\makeatletter
\renewenvironment{table}[1][]{\par\addvspace{\intextsep}\begin{center}\begin{minipage}{\linewidth}\def\@captype{table}}{\end{minipage}\end{center}\addvspace{\intextsep}\par}
\makeatother
\begin{table}[!htbp]\centering\scriptsize\renewcommand{\arraystretch}{0.96}\setlength{\tabcolsep}{2.4pt}
\caption{Questionnaire items, one component removed at a time, under both structure sources.}
\label{tab:abl-items}
\fittable{llcccc:cccc:cccc:cccc}{%
\toprule
 &  & \multicolumn{4}{c:}{NRR@1} & \multicolumn{4}{c:}{MRR} & \multicolumn{4}{c:}{SDA} & \multicolumn{4}{c}{identification} \\
structure & variant & 20 & 50 & 75 & 90 & 20 & 50 & 75 & 90 & 20 & 50 & 75 & 90 & 20 & 50 & 75 & 90 \\
\midrule
supplied graph & \ours{} & \textbf{.126} & \textbf{.097} & \textbf{.064} & \textbf{.038} & \textbf{.254} & \textbf{.220} & \textbf{.179} & \textbf{.121} & \textbf{.200} & \textbf{.176} & \textbf{.147} & \textbf{.129} & \textbf{.351} & \textbf{.314} & \textbf{.304} & \underline{.227} \\
 & $-$ evidence & .067 & \underline{.056} & .042 & \underline{.025} & .177 & .161 & .130 & .092 & .092 & .092 & .081 & .087 & .190 & .218 & .174 & .131 \\
 & $-$ prefix & \underline{.074} & .053 & .050 & .018 & .217 & \underline{.184} & .165 & \underline{.101} & .123 & \underline{.103} & \underline{.122} & \underline{.109} & \underline{.344} & \underline{.314} & \underline{.286} & .227 \\
 & $-$ prefix $-$ low-rank update & \underline{.074} & .051 & \underline{.051} & .017 & \underline{.219} & .182 & \underline{.166} & .100 & \underline{.126} & .102 & .119 & \underline{.109} & .338 & .309 & .280 & \textbf{.228} \\
\midrule
discovered graph & \ours{} & \textbf{.121} & \textbf{.098} & \textbf{.059} & \textbf{.028} & \textbf{.253} & \textbf{.216} & \textbf{.164} & \textbf{.105} & \textbf{.226} & \textbf{.189} & \textbf{.164} & \textbf{.147} & .356 & .364 & .283 & .217 \\
 & $-$ evidence & \underline{.087} & .047 & .043 & \underline{.024} & .196 & .143 & .117 & .084 & .172 & .156 & .141 & .123 & .247 & .250 & .199 & .135 \\
 & $-$ prefix & .079 & .053 & .044 & .014 & \underline{.229} & \underline{.190} & .158 & .091 & .169 & .157 & .152 & .123 & \textbf{.369} & \textbf{.386} & \underline{.302} & \underline{.226} \\
 & $-$ prefix $-$ low-rank update & .085 & \underline{.054} & \underline{.046} & .017 & .227 & .188 & \underline{.162} & \underline{.094} & \underline{.174} & \underline{.163} & \underline{.154} & \underline{.125} & \underline{.364} & \underline{.384} & \textbf{.309} & \textbf{.229} \\
\bottomrule}
\end{table}
\begin{table}[!htb]\centering\scriptsize\setlength{\tabcolsep}{2.4pt}
\caption{Latent variables, one component removed at a time, under both structure sources.}
\label{tab:abl-latents}
\fittable{llcccc:cccc:cccc:cccc}{%
\toprule
 &  & \multicolumn{4}{c:}{NRR@1} & \multicolumn{4}{c:}{MRR} & \multicolumn{4}{c:}{identification} & \multicolumn{4}{c}{naming quality} \\
structure & variant & 20 & 50 & 75 & 90 & 20 & 50 & 75 & 90 & 20 & 50 & 75 & 90 & 20 & 50 & 75 & 90 \\
\midrule
supplied graph & \ours{} & \textbf{.485} & .415 & \textbf{.415} & \underline{.238} & \textbf{.624} & \underline{.560} & \textbf{.551} & \textbf{.400} & \textbf{.627} & \textbf{.537} & \textbf{.535} & \textbf{.351} & \textbf{.596} & \textbf{.518} & \textbf{.500} & \textbf{.332} \\
 & $-$ evidence & .446 & \textbf{.450} & \underline{.381} & \textbf{.262} & .579 & \textbf{.566} & \underline{.507} & \underline{.396} & \underline{.539} & .479 & \underline{.496} & \underline{.290} & \underline{.535} & .475 & \underline{.485} & \underline{.278} \\
 & $-$ prefix & .462 & .404 & .319 & .208 & .600 & .550 & .475 & .371 & .496 & .465 & .465 & .281 & .496 & .465 & .462 & .273 \\
 & $-$ prefix $-$ low-rank update & \underline{.477} & \underline{.419} & .331 & .212 & \underline{.614} & .560 & .480 & .376 & .512 & \underline{.492} & .454 & .285 & .512 & \underline{.492} & .450 & .277 \\
\midrule
discovered graph & \ours{} & \textbf{.540} & \textbf{.486} & .386 & \underline{.240} & \textbf{.660} & \textbf{.623} & .560 & \underline{.424} & \underline{.531} & \textbf{.511} & .441 & \textbf{.254} & .446 & .426 & \underline{.390} & .217 \\
 & $-$ evidence & \underline{.503} & \underline{.437} & .357 & \textbf{.246} & \underline{.637} & \underline{.582} & .530 & \textbf{.428} & .503 & \underline{.480} & .385 & \underline{.241} & .491 & .471 & .382 & \textbf{.229} \\
 & $-$ prefix & .414 & .386 & \textbf{.406} & .186 & .590 & .565 & \textbf{.581} & .404 & \textbf{.549} & .474 & \underline{.443} & .229 & \textbf{.537} & \underline{.474} & \textbf{.437} & \underline{.226} \\
 & $-$ prefix $-$ low-rank update & .394 & .389 & \underline{.403} & .186 & .576 & .568 & \underline{.579} & .401 & .531 & \underline{.480} & \textbf{.446} & .226 & \underline{.520} & \textbf{.480} & \textbf{.437} & .223 \\
\bottomrule}
\end{table}

\begin{table}[!htbp]\centering\scriptsize\renewcommand{\arraystretch}{0.96}\setlength{\tabcolsep}{2.4pt}
\caption{Questionnaire items on three language models, each with \ours{} and as the base model, under both structure sources.}
\label{tab:bb-items}
\fittable{llcccc:cccc:cccc:cccc}{%
\toprule
 &  & \multicolumn{4}{c:}{NRR@1} & \multicolumn{4}{c:}{MRR} & \multicolumn{4}{c:}{SDA} & \multicolumn{4}{c}{identification} \\
structure & model & 20 & 50 & 75 & 90 & 20 & 50 & 75 & 90 & 20 & 50 & 75 & 90 & 20 & 50 & 75 & 90 \\
\midrule
supplied graph & Qwen3-4B & \textbf{.126} & \textbf{.097} & \textbf{.064} & \textbf{.038} & \textbf{.254} & \textbf{.220} & \textbf{.179} & \textbf{.121} & \textbf{.200} & \textbf{.176} & \textbf{.147} & \underline{.129} & \textbf{.351} & \textbf{.314} & \textbf{.304} & \underline{.227} \\
 & Qwen3-4B base model & .074 & .051 & .051 & .017 & .219 & .182 & \underline{.166} & .100 & .126 & .102 & .119 & .109 & .338 & \underline{.309} & \underline{.280} & \textbf{.228} \\
 & Qwen3-8B & \underline{.105} & \underline{.079} & .056 & .032 & \underline{.222} & \underline{.199} & .158 & \underline{.113} & \underline{.182} & \underline{.151} & .137 & \textbf{.132} & \underline{.346} & .294 & .260 & .214 \\
 & Qwen3-8B base model & .069 & .053 & .036 & .012 & .206 & .180 & .146 & .095 & .136 & .107 & .119 & .115 & .321 & .296 & .264 & .223 \\
 & Llama-3.2-3B & .067 & .071 & \underline{.062} & \underline{.033} & .188 & .181 & .162 & .107 & .113 & .116 & .119 & .120 & .282 & .254 & .258 & .180 \\
 & Llama-3.2-3B base model & .077 & .068 & .057 & .032 & .192 & .176 & .157 & .106 & .128 & .123 & \underline{.145} & \underline{.129} & .277 & .270 & .256 & .209 \\
\midrule
discovered graph & Qwen3-4B & \textbf{.121} & \textbf{.098} & \textbf{.059} & \underline{.028} & \textbf{.253} & \textbf{.216} & \textbf{.164} & \textbf{.105} & \textbf{.226} & \textbf{.189} & \underline{.164} & \textbf{.147} & \underline{.356} & \underline{.364} & .283 & \underline{.217} \\
 & Qwen3-4B base model & .085 & .054 & .046 & .017 & \underline{.227} & \underline{.188} & \underline{.162} & .094 & .174 & .163 & .154 & .125 & \textbf{.364} & \textbf{.384} & \textbf{.309} & \textbf{.229} \\
 & Qwen3-8B & .082 & .069 & .051 & \textbf{.031} & .199 & .181 & .140 & .099 & .162 & .152 & .147 & \underline{.138} & .285 & .323 & .241 & .191 \\
 & Qwen3-8B base model & .074 & .051 & .042 & .014 & .221 & .187 & .156 & .089 & .167 & .167 & .161 & .128 & .338 & .355 & \underline{.285} & .210 \\
 & Llama-3.2-3B & \underline{.097} & \underline{.072} & .046 & .022 & .226 & .181 & .139 & .092 & \underline{.208} & \underline{.173} & .141 & .129 & .331 & .320 & .245 & .158 \\
 & Llama-3.2-3B base model & .085 & .064 & \underline{.057} & \textbf{.031} & .208 & .183 & .155 & \underline{.100} & .190 & .150 & \textbf{.165} & .135 & \underline{.356} & .351 & .284 & .200 \\
\bottomrule}
\end{table}

\begin{table}[!htbp]\centering\scriptsize\renewcommand{\arraystretch}{0.96}\setlength{\tabcolsep}{2.4pt}
\caption{Robot channels, one component removed at a time, under both structure sources.}
\label{tab:abl-robot}
\fittable{lllcccc:cccc:cccc}{%
\toprule
 &  &  & \multicolumn{4}{c:}{NRR@1} & \multicolumn{4}{c:}{MRR} & \multicolumn{4}{c}{SDA} \\
structure & variables & variant & 20 & 50 & 75 & 90 & 20 & 50 & 75 & 90 & 20 & 50 & 75 & 90 \\
\midrule
supplied graph & simulated arms & \ours{} & \underline{.146} & \textbf{.198} & \textbf{.181} & \textbf{.055} & \underline{.317} & \underline{.341} & \textbf{.354} & \textbf{.225} & .170 & \underline{.259} & \underline{.222} & .117 \\
 &  & $-$ evidence & .104 & .140 & \underline{.099} & \underline{.055} & .240 & \textbf{.350} & \underline{.277} & \underline{.216} & .149 & .164 & .131 & .150 \\
 &  & $-$ prefix & \textbf{.333} & \underline{.198} & .088 & .018 & \textbf{.402} & .288 & .206 & .122 & \textbf{.447} & \textbf{.293} & \textbf{.250} & \underline{.183} \\
 &  & $-$ prefix $-$ low-rank update & .021 & .083 & .033 & .023 & .164 & .193 & .135 & .148 & \underline{.255} & \textbf{.293} & .205 & \textbf{.211} \\
\midrule
supplied graph & real logs & \ours{} & \textbf{.500} & \textbf{.333} & \underline{.244} & \underline{.160} & \underline{.566} & \textbf{.451} & \underline{.369} & \underline{.276} & \underline{.545} & \textbf{.440} & \underline{.333} & \underline{.298} \\
 &  & $-$ evidence & \underline{.500} & \underline{.259} & \textbf{.268} & \textbf{.200} & \textbf{.567} & .369 & \textbf{.385} & \textbf{.351} & .545 & .360 & \textbf{.359} & \textbf{.319} \\
 &  & $-$ prefix & .250 & .148 & .171 & .060 & .427 & .308 & .310 & .194 & .545 & .400 & .308 & .277 \\
 &  & $-$ prefix $-$ low-rank update & .167 & .185 & .146 & .020 & .434 & \underline{.387} & .278 & .176 & \textbf{.727} & \underline{.440} & .333 & .213 \\
\midrule
supplied graph & vehicle CAN & \ours{} & \textbf{.750} & \textbf{.300} & \textbf{.156} & \textbf{.145} & \textbf{.775} & \textbf{.553} & \textbf{.317} & \textbf{.378} & \textbf{1.000} & \underline{.300} & \underline{.400} & \textbf{.133} \\
 &  & $-$ evidence & \underline{.250} & \underline{.250} & .044 & .009 & \underline{.442} & \underline{.449} & .259 & .295 & .000 & \textbf{.333} & .000 & .000 \\
 &  & $-$ prefix & .000 & .100 & \underline{.111} & .064 & .226 & .327 & \underline{.312} & .281 & .000 & .300 & \textbf{.500} & .000 \\
 &  & $-$ prefix $-$ low-rank update & .000 & .083 & .067 & \underline{.091} & .269 & .322 & .269 & \underline{.297} & .000 & .100 & \textbf{.500} & .000 \\
\midrule
discovered graph & simulated arms & \ours{} & \textbf{.083} & \textbf{.140} & \textbf{.137} & \underline{.046} & \textbf{.224} & \textbf{.286} & \textbf{.278} & \textbf{.194} & .128 & \underline{.190} & .153 & .164 \\
 &  & $-$ evidence & .021 & .091 & \underline{.115} & \textbf{.055} & .156 & \underline{.256} & \underline{.251} & \underline{.189} & .085 & .147 & .136 & .150 \\
 &  & $-$ prefix & \underline{.083} & \underline{.116} & .044 & .032 & \underline{.220} & .223 & .148 & .125 & \textbf{.277} & \textbf{.233} & \textbf{.199} & \underline{.169} \\
 &  & $-$ prefix $-$ low-rank update & .000 & .050 & .027 & .023 & .147 & .176 & .128 & .131 & \underline{.191} & .103 & \underline{.182} & \textbf{.202} \\
\midrule
discovered graph & real logs & \ours{} & \textbf{.417} & \textbf{.296} & \textbf{.195} & \underline{.120} & \underline{.485} & \textbf{.385} & \textbf{.314} & \underline{.248} & \underline{.455} & \textbf{.360} & \textbf{.359} & \underline{.255} \\
 &  & $-$ evidence & .167 & .037 & \underline{.146} & \textbf{.220} & .286 & .190 & \underline{.296} & \textbf{.347} & .182 & .120 & .179 & \textbf{.319} \\
 &  & $-$ prefix & \underline{.333} & .111 & .098 & .020 & \textbf{.533} & .265 & .229 & .154 & \textbf{.636} & \underline{.320} & \underline{.282} & .149 \\
 &  & $-$ prefix $-$ low-rank update & .000 & \underline{.148} & .122 & .020 & .297 & \underline{.334} & .284 & .181 & .455 & \underline{.320} & \underline{.282} & .128 \\
\midrule
discovered graph & vehicle CAN & \ours{} & \textbf{.800} & \textbf{.367} & \textbf{.244} & \textbf{.218} & \textbf{.859} & \textbf{.570} & \textbf{.412} & \textbf{.397} & \textbf{.800} & \textbf{.367} & \textbf{.500} & \textbf{.367} \\
 &  & $-$ evidence & \underline{.450} & \underline{.333} & \underline{.133} & \underline{.182} & \underline{.633} & \underline{.528} & \underline{.365} & \underline{.370} & .000 & \underline{.333} & .050 & \underline{.333} \\
 &  & $-$ prefix & .100 & .167 & .122 & .073 & .320 & .395 & .324 & .297 & \underline{.100} & .200 & \underline{.500} & .000 \\
 &  & $-$ prefix $-$ low-rank update & .100 & .050 & .022 & .073 & .343 & .279 & .236 & .265 & .000 & .100 & \underline{.500} & .000 \\
\bottomrule}
\par\vspace{3pt}
\fittable{lllcccc:cccc:cccc}{%
\toprule
 &  &  & \multicolumn{4}{c:}{identification} & \multicolumn{4}{c:}{type acc} & \multicolumn{4}{c}{coord acc} \\
structure & variables & variant & 20 & 50 & 75 & 90 & 20 & 50 & 75 & 90 & 20 & 50 & 75 & 90 \\
\midrule
supplied graph & simulated arms & \ours{} & \underline{.167} & \textbf{.252} & \textbf{.242} & \textbf{.106} & \textbf{.542} & \underline{.612} & \textbf{.703} & \textbf{.525} & \underline{.149} & \textbf{.233} & \textbf{.216} & \underline{.132} \\
 &  & $-$ evidence & .130 & .215 & .155 & \underline{.105} & .438 & \textbf{.719} & \underline{.626} & \underline{.484} & .106 & .190 & .125 & \textbf{.136} \\
 &  & $-$ prefix & \textbf{.354} & \underline{.229} & \underline{.162} & .069 & \underline{.458} & .347 & .297 & .173 & \textbf{.340} & \underline{.224} & \underline{.153} & .052 \\
 &  & $-$ prefix $-$ low-rank update & .109 & .190 & .078 & .084 & .375 & .281 & .214 & .288 & .043 & .172 & .074 & .061 \\
\midrule
supplied graph & real logs & \ours{} & \textbf{.500} & \textbf{.407} & \underline{.268} & \underline{.208} & \textbf{.750} & \textbf{.593} & \underline{.610} & \underline{.500} & \textbf{1.000} & \textbf{.682} & .455 & \textbf{.553} \\
 &  & $-$ evidence & \underline{.500} & .259 & \textbf{.293} & \textbf{.260} & \underline{.667} & \underline{.556} & \textbf{.658} & \textbf{.640} & \underline{.875} & \underline{.636} & .455 & .447 \\
 &  & $-$ prefix & .273 & .231 & .250 & .149 & .333 & .333 & .366 & .240 & .750 & .500 & \textbf{.606} & \underline{.526} \\
 &  & $-$ prefix $-$ low-rank update & .333 & \underline{.333} & .128 & .120 & .167 & .222 & .195 & .260 & \underline{.875} & .591 & \underline{.485} & .500 \\
\midrule
supplied graph & vehicle CAN & \ours{} & \textbf{.750} & \textbf{.317} & \textbf{.216} & \textbf{.200} & \textbf{.750} & \textbf{.650} & \textbf{.289} & \textbf{.382} & \textbf{1.000} & \textbf{.300} & \textbf{.400} & \textbf{.133} \\
 &  & $-$ evidence & \underline{.250} & \underline{.317} & .115 & .100 & \underline{.750} & \underline{.567} & \underline{.267} & .273 & .000 & \underline{.267} & .000 & .000 \\
 &  & $-$ prefix & .000 & .100 & \underline{.213} & \underline{.136} & .200 & .217 & .111 & \underline{.291} & .000 & .000 & .000 & .000 \\
 &  & $-$ prefix $-$ low-rank update & .105 & .133 & .091 & .074 & .250 & .483 & .000 & .264 & .000 & .000 & .000 & .000 \\
\midrule
discovered graph & simulated arms & \ours{} & \textbf{.106} & \textbf{.203} & \textbf{.208} & \textbf{.116} & .396 & \underline{.537} & \textbf{.577} & \textbf{.420} & \underline{.106} & \textbf{.181} & \textbf{.159} & \textbf{.145} \\
 &  & $-$ evidence & .083 & \underline{.190} & \underline{.164} & \underline{.106} & .375 & \textbf{.579} & \underline{.522} & \underline{.415} & .064 & \underline{.129} & \underline{.119} & \underline{.145} \\
 &  & $-$ prefix & \underline{.104} & .128 & .099 & .070 & \textbf{.438} & .347 & .286 & .256 & \textbf{.128} & .078 & .091 & .042 \\
 &  & $-$ prefix $-$ low-rank update & .021 & .076 & .096 & .089 & \underline{.417} & .364 & .225 & .265 & .021 & .035 & .040 & .052 \\
\midrule
discovered graph & real logs & \ours{} & \textbf{.417} & \underline{.296} & \textbf{.244} & \underline{.180} & \textbf{.667} & \textbf{.518} & \underline{.512} & \underline{.460} & \underline{.750} & \textbf{.545} & \textbf{.606} & \textbf{.579} \\
 &  & $-$ evidence & \underline{.167} & .038 & .146 & \textbf{.220} & \underline{.583} & \underline{.370} & \textbf{.585} & \textbf{.600} & .625 & .409 & .485 & \underline{.474} \\
 &  & $-$ prefix & \underline{.167} & .185 & .146 & .000 & .333 & .259 & .342 & .220 & \textbf{1.000} & .455 & \underline{.606} & .447 \\
 &  & $-$ prefix $-$ low-rank update & .083 & \textbf{.370} & \underline{.171} & .020 & .083 & .259 & .293 & .280 & .625 & \underline{.500} & .485 & \underline{.474} \\
\midrule
discovered graph & vehicle CAN & \ours{} & \textbf{.800} & \textbf{.400} & \textbf{.300} & \textbf{.282} & \underline{.900} & \textbf{.683} & \textbf{.356} & \textbf{.391} & \textbf{.800} & \textbf{.367} & \textbf{.500} & \textbf{.367} \\
 &  & $-$ evidence & \underline{.450} & \underline{.367} & .189 & \underline{.255} & \textbf{.950} & \underline{.667} & \underline{.344} & \underline{.364} & .000 & \underline{.333} & \underline{.050} & \underline{.333} \\
 &  & $-$ prefix & .100 & .200 & \underline{.211} & .157 & .150 & .300 & .133 & .318 & \underline{.100} & .000 & .000 & .000 \\
 &  & $-$ prefix $-$ low-rank update & .050 & .133 & .023 & .135 & .050 & .367 & .000 & .009 & .000 & .000 & .000 & .000 \\
\bottomrule}
\end{table}

\begin{table}[!htb]\centering\scriptsize\setlength{\tabcolsep}{2.4pt}
\caption{Latent variables on three language models, each with \ours{} and as the base model, under both structure sources.}
\label{tab:bb-latents}
\fittable{llcccc:cccc:cccc:cccc}{%
\toprule
 &  & \multicolumn{4}{c:}{NRR@1} & \multicolumn{4}{c:}{MRR} & \multicolumn{4}{c:}{identification} & \multicolumn{4}{c}{naming quality} \\
structure & model & 20 & 50 & 75 & 90 & 20 & 50 & 75 & 90 & 20 & 50 & 75 & 90 & 20 & 50 & 75 & 90 \\
\midrule
supplied graph & Qwen3-4B & \underline{.485} & .415 & \textbf{.415} & \underline{.238} & \underline{.624} & \underline{.560} & \textbf{.551} & .400 & \textbf{.627} & \underline{.537} & \textbf{.535} & \underline{.351} & \textbf{.596} & .518 & \underline{.500} & .332 \\
 & Qwen3-4B base model & .477 & \underline{.419} & .331 & .212 & .614 & \underline{.560} & .480 & .376 & .512 & .492 & .454 & .285 & .512 & .492 & .450 & .277 \\
 & Qwen3-8B & .454 & .404 & .346 & .235 & .582 & .552 & .512 & \underline{.405} & \underline{.573} & \underline{.537} & .504 & \textbf{.377} & \underline{.569} & \underline{.537} & .492 & \textbf{.369} \\
 & Qwen3-8B base model & .385 & .323 & .338 & .219 & .539 & .490 & .488 & .381 & .486 & .469 & .462 & .279 & .486 & .454 & .435 & .244 \\
 & Llama-3.2-3B & \textbf{.535} & \textbf{.454} & \underline{.373} & \textbf{.273} & \textbf{.646} & \textbf{.588} & \underline{.528} & \textbf{.423} & .558 & \textbf{.560} & \underline{.523} & .350 & .558 & \textbf{.560} & \textbf{.515} & \underline{.350} \\
 & Llama-3.2-3B base model & .369 & .350 & .304 & .185 & .521 & .506 & .467 & .363 & .485 & .473 & .462 & .292 & .485 & .473 & .462 & .292 \\
\midrule
discovered graph & Qwen3-4B & \textbf{.540} & \underline{.486} & .386 & \textbf{.240} & \textbf{.660} & \underline{.623} & .560 & \textbf{.424} & .531 & \textbf{.511} & .441 & \underline{.254} & .446 & .426 & .390 & .217 \\
 & Qwen3-4B base model & .394 & .389 & .403 & .186 & .576 & .568 & .579 & .401 & .531 & .480 & \underline{.446} & .226 & .520 & \textbf{.480} & \underline{.437} & .223 \\
 & Qwen3-8B & \underline{.529} & .477 & .394 & .211 & .646 & .613 & .569 & .417 & .524 & .456 & .378 & .209 & .513 & .427 & .361 & .203 \\
 & Qwen3-8B base model & .411 & .406 & \underline{.411} & .214 & .583 & .574 & \underline{.580} & .413 & .511 & \underline{.489} & \textbf{.447} & .239 & .494 & .457 & .390 & \underline{.225} \\
 & Llama-3.2-3B & .514 & \textbf{.517} & \textbf{.426} & \underline{.226} & \underline{.650} & \textbf{.651} & \textbf{.583} & \textbf{.424} & \textbf{.563} & .481 & .444 & .226 & \textbf{.563} & \underline{.479} & \textbf{.438} & .223 \\
 & Llama-3.2-3B base model & .454 & .471 & .391 & .211 & .609 & .621 & .573 & \underline{.423} & \underline{.543} & .461 & .397 & \textbf{.256} & \underline{.543} & .461 & .391 & \textbf{.256} \\
\bottomrule}
\end{table}

\begin{table}[!htbp]\centering\scriptsize\renewcommand{\arraystretch}{0.96}\setlength{\tabcolsep}{2.4pt}
\caption{Robot channels on three language models, each with \ours{} and as the base model, under both structure sources.}
\label{tab:bb-robot}
\fittable{lllcccc:cccc:cccc}{%
\toprule
 &  &  & \multicolumn{4}{c:}{NRR@1} & \multicolumn{4}{c:}{MRR} & \multicolumn{4}{c}{SDA} \\
structure & variables & model & 20 & 50 & 75 & 90 & 20 & 50 & 75 & 90 & 20 & 50 & 75 & 90 \\
\midrule
supplied graph & simulated arms & Qwen3-4B & .146 & \textbf{.198} & \textbf{.181} & .055 & .317 & \textbf{.341} & \textbf{.354} & .225 & .170 & \underline{.259} & \textbf{.222} & .117 \\
 &  & Qwen3-4B base model & .021 & .083 & .033 & .023 & .164 & .193 & .135 & .148 & \underline{.255} & \textbf{.293} & .205 & \underline{.211} \\
 &  & Qwen3-8B & \underline{.167} & \underline{.182} & \underline{.170} & \textbf{.119} & \underline{.326} & \underline{.329} & \underline{.348} & \textbf{.265} & .213 & \underline{.259} & \underline{.216} & .197 \\
 &  & Qwen3-8B base model & .083 & .058 & .027 & .046 & .221 & .168 & .141 & .165 & \underline{.255} & .233 & .210 & .207 \\
 &  & Llama-3.2-3B & \textbf{.271} & .157 & .154 & \underline{.105} & \textbf{.393} & .300 & .304 & \underline{.251} & \textbf{.319} & \underline{.259} & .182 & \textbf{.225} \\
 &  & Llama-3.2-3B base model & .021 & .017 & .044 & .032 & .137 & .113 & .141 & .126 & .213 & .207 & .148 & .188 \\
\midrule
supplied graph & real logs & Qwen3-4B & \textbf{.500} & \textbf{.333} & \underline{.244} & \underline{.160} & \textbf{.566} & \underline{.451} & \underline{.369} & \underline{.276} & \underline{.545} & \underline{.440} & \underline{.333} & \underline{.298} \\
 &  & Qwen3-4B base model & .167 & .185 & .146 & .020 & \underline{.434} & .387 & .278 & .176 & \textbf{.727} & \underline{.440} & \underline{.333} & .213 \\
 &  & Qwen3-8B & .250 & \textbf{.333} & \textbf{.317} & \textbf{.200} & .395 & \textbf{.466} & \textbf{.456} & \textbf{.326} & .364 & .320 & \underline{.333} & \textbf{.319} \\
 &  & Qwen3-8B base model & .083 & .074 & .049 & .020 & .308 & .286 & .207 & .145 & \underline{.545} & \textbf{.520} & .256 & .234 \\
 &  & Llama-3.2-3B & \underline{.333} & \underline{.259} & \underline{.244} & .120 & .430 & .367 & .349 & .253 & .364 & \underline{.440} & \textbf{.385} & \textbf{.319} \\
 &  & Llama-3.2-3B base model & .000 & .074 & .098 & .080 & .177 & .266 & .228 & .195 & .000 & .200 & .179 & .213 \\
\midrule
supplied graph & vehicle CAN & Qwen3-4B & \textbf{.750} & \underline{.300} & .156 & .145 & \underline{.775} & \underline{.553} & .317 & \underline{.378} & \textbf{1.000} & \underline{.300} & \underline{.400} & .133 \\
 &  & Qwen3-4B base model & .000 & .083 & .067 & .091 & .269 & .322 & .269 & .297 & .000 & .100 & \textbf{.500} & .000 \\
 &  & Qwen3-8B & .450 & \textbf{.417} & \textbf{.256} & \underline{.154} & .585 & \textbf{.584} & \underline{.401} & .327 & .100 & \textbf{.333} & \textbf{.500} & \underline{.300} \\
 &  & Qwen3-8B base model & .050 & .000 & .044 & .027 & .332 & .249 & .253 & .253 & .000 & .000 & \textbf{.500} & .000 \\
 &  & Llama-3.2-3B & \underline{.700} & \underline{.300} & \underline{.244} & \textbf{.182} & \textbf{.776} & .528 & \textbf{.418} & \textbf{.398} & \underline{.600} & \textbf{.333} & \textbf{.500} & \textbf{.333} \\
 &  & Llama-3.2-3B base model & .100 & .167 & .078 & .145 & .281 & .383 & .247 & .331 & .000 & .133 & .000 & .267 \\
\midrule
discovered graph & simulated arms & Qwen3-4B & .083 & \underline{.140} & \textbf{.137} & .046 & .224 & \underline{.286} & \underline{.278} & \underline{.194} & .128 & .190 & .153 & .164 \\
 &  & Qwen3-4B base model & .000 & .050 & .027 & .023 & .147 & .176 & .128 & .131 & \underline{.191} & .103 & \underline{.182} & \underline{.202} \\
 &  & Qwen3-8B & \textbf{.125} & \textbf{.157} & \textbf{.137} & \textbf{.078} & \underline{.255} & \textbf{.288} & \textbf{.279} & \textbf{.198} & \underline{.191} & \underline{.198} & .176 & \textbf{.207} \\
 &  & Qwen3-8B base model & .000 & .050 & .011 & .009 & .155 & .184 & .121 & .113 & .085 & .121 & \textbf{.188} & .164 \\
 &  & Llama-3.2-3B & \underline{.104} & .124 & \underline{.115} & \underline{.073} & \textbf{.256} & .261 & .251 & .193 & .170 & \textbf{.216} & .142 & .197 \\
 &  & Llama-3.2-3B base model & .021 & .033 & .027 & .041 & .162 & .135 & .126 & .137 & \textbf{.213} & .190 & .165 & .169 \\
\midrule
discovered graph & real logs & Qwen3-4B & \textbf{.417} & \textbf{.296} & \underline{.195} & \underline{.120} & \underline{.485} & \underline{.385} & .314 & \underline{.248} & \textbf{.455} & \underline{.360} & \textbf{.359} & .255 \\
 &  & Qwen3-4B base model & .000 & .148 & .122 & .020 & .297 & .334 & .284 & .181 & \textbf{.455} & .320 & \underline{.282} & .128 \\
 &  & Qwen3-8B & \textbf{.417} & \underline{.259} & \textbf{.244} & \textbf{.140} & \textbf{.508} & \textbf{.393} & \textbf{.386} & \textbf{.268} & \textbf{.455} & .320 & \textbf{.359} & \underline{.277} \\
 &  & Qwen3-8B base model & .000 & .185 & .098 & .020 & .231 & .353 & .231 & .158 & \underline{.273} & \textbf{.400} & .256 & .191 \\
 &  & Llama-3.2-3B & \underline{.083} & .148 & \textbf{.244} & .100 & .233 & .308 & \underline{.355} & .231 & .091 & .280 & \textbf{.359} & \textbf{.383} \\
 &  & Llama-3.2-3B base model & .000 & .074 & .098 & .060 & .155 & .228 & .218 & .212 & .091 & .160 & .205 & .170 \\
\midrule
discovered graph & vehicle CAN & Qwen3-4B & \textbf{.800} & \underline{.367} & \textbf{.244} & \textbf{.218} & \textbf{.859} & \underline{.570} & \textbf{.412} & \textbf{.397} & \textbf{.800} & \textbf{.367} & \textbf{.500} & \textbf{.367} \\
 &  & Qwen3-4B base model & .100 & .050 & .022 & .073 & .343 & .279 & .236 & .265 & .000 & .100 & \textbf{.500} & .000 \\
 &  & Qwen3-8B & .550 & \textbf{.400} & \underline{.200} & \underline{.182} & .679 & \textbf{.582} & \underline{.375} & .353 & .200 & \textbf{.367} & \textbf{.500} & \underline{.333} \\
 &  & Qwen3-8B base model & .150 & .033 & .044 & .009 & .320 & .287 & .239 & .233 & .000 & .000 & \underline{.150} & .000 \\
 &  & Llama-3.2-3B & \underline{.700} & .333 & .089 & \underline{.182} & \underline{.800} & .534 & .309 & \underline{.376} & \underline{.400} & \underline{.300} & .000 & \underline{.333} \\
 &  & Llama-3.2-3B base model & .050 & .150 & .022 & .045 & .245 & .329 & .195 & .204 & .000 & .100 & .000 & .033 \\
\bottomrule}
\end{table}

\begin{table}[!htbp]\centering\scriptsize\renewcommand{\arraystretch}{0.96}\setlength{\tabcolsep}{2.4pt}
\addtocounter{table}{-1}
\caption{Continued with the same rows, for identification, type accuracy and coordinate accuracy.}
\fittable{lllcccc:cccc:cccc}{%
\toprule
 &  &  & \multicolumn{4}{c:}{identification} & \multicolumn{4}{c:}{type acc} & \multicolumn{4}{c}{coord acc} \\
structure & variables & model & 20 & 50 & 75 & 90 & 20 & 50 & 75 & 90 & 20 & 50 & 75 & 90 \\
\midrule
supplied graph & simulated arms & Qwen3-4B & .167 & .252 & \textbf{.242} & .106 & .542 & \textbf{.612} & \textbf{.703} & \textbf{.525} & .149 & .233 & \underline{.216} & .132 \\
 &  & Qwen3-4B base model & .109 & .190 & .078 & .084 & .375 & .281 & .214 & .288 & .043 & .172 & .074 & .061 \\
 &  & Qwen3-8B & \underline{.229} & \textbf{.267} & .222 & \textbf{.188} & \textbf{.604} & \textbf{.612} & \underline{.692} & \underline{.516} & \underline{.170} & \textbf{.267} & \textbf{.222} & \textbf{.183} \\
 &  & Qwen3-8B base model & .085 & .137 & .084 & .083 & .458 & .281 & .220 & .251 & .043 & .103 & .074 & .070 \\
 &  & Llama-3.2-3B & \textbf{.312} & \underline{.264} & \underline{.228} & \underline{.165} & \underline{.583} & \underline{.488} & .588 & .498 & \textbf{.298} & \underline{.241} & .188 & \underline{.174} \\
 &  & Llama-3.2-3B base model & .000 & .033 & .034 & .028 & .312 & .215 & .203 & .192 & .000 & .000 & .017 & .009 \\
\midrule
supplied graph & real logs & Qwen3-4B & \textbf{.500} & \textbf{.407} & \underline{.268} & .208 & \textbf{.750} & \underline{.593} & \underline{.610} & \textbf{.500} & \textbf{1.000} & \textbf{.682} & .455 & \underline{.553} \\
 &  & Qwen3-4B base model & .333 & \underline{.333} & .128 & .120 & .167 & .222 & .195 & .260 & \underline{.875} & .591 & .485 & .500 \\
 &  & Qwen3-8B & .333 & \underline{.333} & \textbf{.308} & \textbf{.260} & \underline{.500} & \textbf{.630} & \textbf{.707} & \underline{.440} & .500 & .500 & \underline{.545} & .474 \\
 &  & Qwen3-8B base model & .333 & .296 & .171 & .080 & .167 & .259 & .415 & .220 & \underline{.875} & \underline{.636} & \textbf{.576} & .500 \\
 &  & Llama-3.2-3B & \underline{.364} & .259 & \underline{.268} & \underline{.220} & \underline{.500} & .444 & .488 & \underline{.440} & \underline{.875} & .591 & \underline{.545} & \textbf{.605} \\
 &  & Llama-3.2-3B base model & .167 & .111 & .103 & .080 & .000 & .185 & .195 & .120 & .000 & .136 & .030 & .053 \\
\midrule
supplied graph & vehicle CAN & Qwen3-4B & \textbf{.750} & .317 & .216 & .200 & \underline{.750} & \underline{.650} & .289 & \textbf{.382} & \textbf{1.000} & \underline{.300} & \underline{.400} & .133 \\
 &  & Qwen3-4B base model & .105 & .133 & .091 & .074 & .250 & .483 & .000 & .264 & .000 & .000 & .000 & .000 \\
 &  & Qwen3-8B & .450 & \textbf{.433} & \textbf{.333} & \underline{.218} & \textbf{.900} & \textbf{.750} & \textbf{.367} & .345 & .100 & \textbf{.333} & \textbf{.500} & \underline{.300} \\
 &  & Qwen3-8B base model & .100 & .000 & .079 & .065 & .300 & .500 & .011 & .273 & .000 & .000 & .000 & .000 \\
 &  & Llama-3.2-3B & \underline{.700} & \underline{.333} & \underline{.311} & \textbf{.227} & \textbf{.900} & .633 & \underline{.344} & \underline{.364} & \underline{.600} & \textbf{.333} & \textbf{.500} & \textbf{.333} \\
 &  & Llama-3.2-3B base model & .100 & .200 & .125 & .182 & .200 & .233 & .033 & .109 & .000 & .000 & .000 & .000 \\
\midrule
discovered graph & simulated arms & Qwen3-4B & .106 & \textbf{.203} & \textbf{.208} & .116 & .396 & \textbf{.537} & \textbf{.577} & \textbf{.420} & .106 & .181 & \textbf{.159} & \underline{.145} \\
 &  & Qwen3-4B base model & .021 & .076 & .096 & .089 & .417 & .364 & .225 & .265 & .021 & .035 & .040 & .052 \\
 &  & Qwen3-8B & \textbf{.174} & \underline{.202} & \underline{.200} & \textbf{.124} & \underline{.479} & \textbf{.537} & \underline{.555} & \underline{.406} & \textbf{.170} & \underline{.190} & \textbf{.159} & \underline{.145} \\
 &  & Qwen3-8B base model & .128 & .094 & .112 & .057 & \underline{.479} & .380 & .253 & .210 & .021 & .052 & .034 & .042 \\
 &  & Llama-3.2-3B & \underline{.149} & .186 & .157 & \underline{.120} & \textbf{.500} & \underline{.471} & .483 & .361 & \underline{.149} & \textbf{.215} & \underline{.148} & \textbf{.150} \\
 &  & Llama-3.2-3B base model & .062 & .041 & .028 & .009 & .375 & .248 & .236 & .201 & .043 & .009 & .011 & .005 \\
\midrule
discovered graph & real logs & Qwen3-4B & \textbf{.417} & \underline{.296} & .244 & \underline{.180} & \textbf{.667} & \underline{.518} & \underline{.512} & \textbf{.460} & \textbf{.750} & \underline{.545} & \underline{.606} & \underline{.579} \\
 &  & Qwen3-4B base model & .083 & \textbf{.370} & .171 & .020 & .083 & .259 & .293 & .280 & \underline{.625} & .500 & .485 & .474 \\
 &  & Qwen3-8B & \textbf{.417} & .259 & \textbf{.341} & \textbf{.245} & \underline{.583} & \textbf{.556} & \textbf{.537} & \underline{.380} & \textbf{.750} & .500 & .545 & .474 \\
 &  & Qwen3-8B base model & .167 & \textbf{.370} & .225 & .146 & .250 & .333 & .390 & .220 & .500 & \textbf{.591} & \textbf{.667} & .447 \\
 &  & Llama-3.2-3B & \underline{.250} & .192 & \underline{.293} & .120 & .250 & .481 & \underline{.512} & .340 & .375 & .500 & \underline{.606} & \textbf{.658} \\
 &  & Llama-3.2-3B base model & .000 & .148 & .073 & .040 & .000 & .222 & .073 & .120 & .000 & .091 & .091 & .026 \\
\midrule
discovered graph & vehicle CAN & Qwen3-4B & \textbf{.800} & \underline{.400} & \textbf{.300} & \textbf{.282} & .900 & \underline{.683} & \textbf{.356} & \textbf{.391} & \textbf{.800} & \textbf{.367} & \textbf{.500} & \textbf{.367} \\
 &  & Qwen3-4B base model & .050 & .133 & .023 & .135 & .050 & .367 & .000 & .009 & .000 & .000 & .000 & .000 \\
 &  & Qwen3-8B & .550 & \textbf{.417} & \underline{.222} & .191 & \underline{.950} & \textbf{.717} & \underline{.300} & \underline{.364} & .200 & \textbf{.367} & \textbf{.500} & \underline{.333} \\
 &  & Qwen3-8B base model & .105 & .117 & .057 & .028 & .450 & .533 & .211 & .273 & .000 & .000 & .000 & .000 \\
 &  & Llama-3.2-3B & \underline{.700} & .383 & .144 & \underline{.236} & \textbf{1.000} & \underline{.683} & .222 & .318 & \underline{.400} & \underline{.300} & .000 & .300 \\
 &  & Llama-3.2-3B base model & .050 & .167 & .056 & .045 & .000 & .083 & .022 & .036 & .000 & .000 & .000 & .000 \\
\bottomrule}
\end{table}

\endgroup

\section{Results with Conditional Mutual Information as the Dependence Measure}
\label{apx:cmi}

This appendix compares \ours{} with the form of its objective that takes the CMI as the dependence measure (Appendices~\ref{apx:cmi-vectors} and~\ref{apx:cmi-objective}). Tables~\ref{tab:cmi-quest} and~\ref{tab:cmi-robot} report every metric of \ours{} and of \ours{} with CMI, and Table~\ref{tab:cmi-diff} reports their differences. Both versions name the variables with the adapter, the backbone and the judge of the main results.

\textit{Replacing the distance correlation with the CMI leaves the names essentially unchanged.} The two versions give close results on the questionnaires and on the robot channels, under both structure sources and across the masking levels. Neither is consistently ahead of the other. The CMI turns the independence and dependence lines of Eq.~\ref{eq:realization} into exact equalities between conditional mutual informations, and the main method with the distance correlation writes about the same names. The distance correlation is therefore a sound practical choice for the main method.

\begingroup
\raggedbottom
\makeatletter
\renewenvironment{table}[1][]{\par\addvspace{\intextsep}\begin{center}\begin{minipage}{\linewidth}\def\@captype{table}}{\end{minipage}\end{center}\addvspace{\intextsep}\par}
\makeatother

\begin{table}[!htbp]\centering\scriptsize\setlength{\tabcolsep}{2.4pt}
\caption{Questionnaire items (top) and latent variables (bottom), \ours{} and \ours{} with CMI, under both structure sources.}
\label{tab:cmi-quest}
\fittable{llcccc:cccc:cccc:cccc}{%
\toprule
 &  & \multicolumn{4}{c:}{NRR@1} & \multicolumn{4}{c:}{MRR} & \multicolumn{4}{c:}{SDA} & \multicolumn{4}{c}{identification} \\
structure & method & 20 & 50 & 75 & 90 & 20 & 50 & 75 & 90 & 20 & 50 & 75 & 90 & 20 & 50 & 75 & 90 \\
\midrule
supplied graph & \ours{} & .126 & .097 & .064 & .038 & .254 & .220 & .179 & .121 & .200 & .176 & .147 & .129 & .351 & .314 & .304 & .227 \\
 & \ours{} with CMI & .123 & .101 & .072 & .039 & .256 & .223 & .183 & .119 & .195 & .178 & .154 & .138 & .362 & .324 & .309 & .208 \\
\midrule
discovered graph & \ours{} & .121 & .098 & .059 & .028 & .253 & .216 & .164 & .105 & .226 & .189 & .164 & .147 & .356 & .364 & .283 & .217 \\
 & \ours{} with CMI & .133 & .094 & .057 & .036 & .265 & .218 & .166 & .113 & .231 & .193 & .164 & .153 & .367 & .359 & .301 & .209 \\
\bottomrule}
\par\vspace{3pt}
\fittable{llcccc:cccc:cccc:cccc}{%
\toprule
 &  & \multicolumn{4}{c:}{NRR@1} & \multicolumn{4}{c:}{MRR} & \multicolumn{4}{c:}{identification} & \multicolumn{4}{c}{naming quality} \\
structure & method & 20 & 50 & 75 & 90 & 20 & 50 & 75 & 90 & 20 & 50 & 75 & 90 & 20 & 50 & 75 & 90 \\
\midrule
supplied graph & \ours{} & .485 & .415 & .415 & .238 & .624 & .560 & .551 & .400 & .627 & .537 & .535 & .351 & .596 & .518 & .500 & .332 \\
 & \ours{} with CMI & .488 & .438 & .423 & .250 & .621 & .579 & .555 & .414 & .569 & .591 & .579 & .304 & .542 & .564 & .525 & .273 \\
\midrule
discovered graph & \ours{} & .540 & .486 & .386 & .240 & .660 & .623 & .560 & .424 & .531 & .511 & .441 & .254 & .446 & .426 & .390 & .217 \\
 & \ours{} with CMI & .520 & .486 & .374 & .243 & .645 & .625 & .551 & .422 & .543 & .497 & .406 & .240 & .414 & .420 & .349 & .206 \\
\bottomrule}
\end{table}


\begin{table}[!htbp]\centering\scriptsize\setlength{\tabcolsep}{2.4pt}
\caption{Robot channels, \ours{} and \ours{} with CMI, under both structure sources.}
\label{tab:cmi-robot}
\fittable{lllcccc:cccc:cccc}{%
\toprule
 &  &  & \multicolumn{4}{c:}{NRR@1} & \multicolumn{4}{c:}{MRR} & \multicolumn{4}{c}{SDA} \\
structure & variables & method & 20 & 50 & 75 & 90 & 20 & 50 & 75 & 90 & 20 & 50 & 75 & 90 \\
\midrule
supplied graph & simulated arms & \ours{} & .146 & .198 & .181 & .055 & .317 & .341 & .354 & .225 & .170 & .259 & .222 & .117 \\
 &  & \ours{} with CMI & .146 & .215 & .187 & .064 & .301 & .372 & .361 & .227 & .170 & .250 & .199 & .169 \\
\midrule
supplied graph & real logs & \ours{} & .500 & .333 & .244 & .160 & .566 & .451 & .369 & .276 & .545 & .440 & .333 & .298 \\
 &  & \ours{} with CMI & .500 & .333 & .268 & .160 & .593 & .451 & .386 & .300 & .545 & .440 & .359 & .255 \\
\midrule
supplied graph & vehicle CAN & \ours{} & .750 & .300 & .156 & .145 & .775 & .553 & .317 & .378 & 1.000 & .300 & .400 & .133 \\
 &  & \ours{} with CMI & .500 & .250 & .267 & .227 & .608 & .517 & .414 & .408 & .400 & .233 & .450 & .333 \\
\midrule
discovered graph & simulated arms & \ours{} & .083 & .140 & .137 & .046 & .224 & .286 & .278 & .194 & .128 & .190 & .153 & .164 \\
 &  & \ours{} with CMI & .083 & .132 & .143 & .055 & .232 & .284 & .282 & .194 & .128 & .181 & .165 & .197 \\
\midrule
discovered graph & real logs & \ours{} & .417 & .296 & .195 & .120 & .485 & .385 & .314 & .248 & .455 & .360 & .359 & .255 \\
 &  & \ours{} with CMI & .417 & .296 & .171 & .140 & .519 & .409 & .304 & .263 & .545 & .360 & .359 & .298 \\
\midrule
discovered graph & vehicle CAN & \ours{} & .800 & .367 & .244 & .218 & .859 & .570 & .412 & .397 & .800 & .367 & .500 & .367 \\
 &  & \ours{} with CMI & .750 & .333 & .222 & .245 & .797 & .567 & .383 & .436 & .700 & .300 & .500 & .433 \\
\bottomrule}
\par\vspace{3pt}
\fittable{lllcccc:cccc:cccc}{%
\toprule
 &  &  & \multicolumn{4}{c:}{identification} & \multicolumn{4}{c:}{type acc} & \multicolumn{4}{c}{coord acc} \\
structure & variables & method & 20 & 50 & 75 & 90 & 20 & 50 & 75 & 90 & 20 & 50 & 75 & 90 \\
\midrule
supplied graph & simulated arms & \ours{} & .167 & .252 & .242 & .106 & .542 & .612 & .703 & .525 & .149 & .233 & .216 & .132 \\
 &  & \ours{} with CMI & .188 & .294 & .254 & .131 & .583 & .694 & .753 & .493 & .149 & .250 & .210 & .178 \\
\midrule
supplied graph & real logs & \ours{} & .500 & .407 & .268 & .208 & .750 & .593 & .610 & .500 & 1.000 & .682 & .455 & .553 \\
 &  & \ours{} with CMI & .500 & .370 & .293 & .180 & .667 & .556 & .610 & .520 & 1.000 & .636 & .485 & .605 \\
\midrule
supplied graph & vehicle CAN & \ours{} & .750 & .317 & .216 & .200 & .750 & .650 & .289 & .382 & 1.000 & .300 & .400 & .133 \\
 &  & \ours{} with CMI & .500 & .300 & .311 & .282 & .800 & .633 & .389 & .409 & .400 & .233 & .450 & .333 \\
\midrule
discovered graph & simulated arms & \ours{} & .106 & .203 & .208 & .116 & .396 & .537 & .577 & .420 & .106 & .181 & .159 & .145 \\
 &  & \ours{} with CMI & .087 & .220 & .181 & .140 & .438 & .562 & .588 & .406 & .106 & .164 & .153 & .178 \\
\midrule
discovered graph & real logs & \ours{} & .417 & .296 & .244 & .180 & .667 & .518 & .512 & .460 & .750 & .545 & .606 & .579 \\
 &  & \ours{} with CMI & .417 & .385 & .220 & .180 & .583 & .556 & .463 & .440 & 1.000 & .500 & .576 & .658 \\
\midrule
discovered graph & vehicle CAN & \ours{} & .800 & .400 & .300 & .282 & .900 & .683 & .356 & .391 & .800 & .367 & .500 & .367 \\
 &  & \ours{} with CMI & .750 & .350 & .267 & .291 & .900 & .683 & .333 & .391 & .700 & .300 & .500 & .433 \\
\bottomrule}
\end{table}


\begin{table}[!htbp]\centering\scriptsize\setlength{\tabcolsep}{3pt}
\caption{\ours{} with CMI minus \ours{}, per metric, structure source and masking level, from the values of Tables~\ref{tab:cmi-quest} and~\ref{tab:cmi-robot}. The last column is the mean over the eight cells of a row.}
\label{tab:cmi-diff}
\fittable{ll cccc:cccc:c}{%
\toprule
 &  & \multicolumn{4}{c:}{supplied graph} & \multicolumn{4}{c:}{discovered graph} & \\
variables & metric & 20 & 50 & 75 & 90 & 20 & 50 & 75 & 90 & mean \\
\midrule
items & NRR@1 & $-$.003 & +.004 & +.008 & +.001 & +.012 & $-$.004 & $-$.002 & +.008 & +.003 \\
 & MRR & +.002 & +.003 & +.004 & $-$.002 & +.012 & +.002 & +.002 & +.008 & +.004 \\
 & SDA & $-$.005 & +.002 & +.007 & +.009 & +.005 & +.004 & .000 & +.006 & +.004 \\
 & identification & +.011 & +.010 & +.005 & $-$.019 & +.011 & $-$.005 & +.018 & $-$.008 & +.003 \\
\midrule
latent variables & NRR@1 & +.003 & +.023 & +.008 & +.012 & $-$.020 & .000 & $-$.012 & +.003 & +.002 \\
 & MRR & $-$.003 & +.019 & +.004 & +.014 & $-$.015 & +.002 & $-$.009 & $-$.002 & +.001 \\
 & identification & $-$.058 & +.054 & +.044 & $-$.047 & +.012 & $-$.014 & $-$.035 & $-$.014 & $-$.007 \\
 & naming quality & $-$.054 & +.046 & +.025 & $-$.059 & $-$.032 & $-$.006 & $-$.041 & $-$.011 & $-$.017 \\
\midrule
simulated arms & NRR@1 & .000 & +.017 & +.006 & +.009 & .000 & $-$.008 & +.006 & +.009 & +.005 \\
 & MRR & $-$.016 & +.031 & +.007 & +.002 & +.008 & $-$.002 & +.004 & .000 & +.004 \\
 & SDA & .000 & $-$.009 & $-$.023 & +.052 & .000 & $-$.009 & +.012 & +.033 & +.007 \\
 & identification & +.021 & +.042 & +.012 & +.025 & $-$.019 & +.017 & $-$.027 & +.024 & +.012 \\
 & type acc & +.041 & +.082 & +.050 & $-$.032 & +.042 & +.025 & +.011 & $-$.014 & +.026 \\
 & coord acc & .000 & +.017 & $-$.006 & +.046 & .000 & $-$.017 & $-$.006 & +.033 & +.008 \\
\midrule
real logs & NRR@1 & .000 & .000 & +.024 & .000 & .000 & .000 & $-$.024 & +.020 & +.003 \\
 & MRR & +.027 & .000 & +.017 & +.024 & +.034 & +.024 & $-$.010 & +.015 & +.016 \\
 & SDA & .000 & .000 & +.026 & $-$.043 & +.090 & .000 & .000 & +.043 & +.014 \\
 & identification & .000 & $-$.037 & +.025 & $-$.028 & .000 & +.089 & $-$.024 & .000 & +.003 \\
 & type acc & $-$.083 & $-$.037 & .000 & +.020 & $-$.084 & +.038 & $-$.049 & $-$.020 & $-$.027 \\
 & coord acc & .000 & $-$.046 & +.030 & +.052 & +.250 & $-$.045 & $-$.030 & +.079 & +.036 \\
\midrule
vehicle CAN & NRR@1 & $-$.250 & $-$.050 & +.111 & +.082 & $-$.050 & $-$.034 & $-$.022 & +.027 & $-$.023 \\
 & MRR & $-$.167 & $-$.036 & +.097 & +.030 & $-$.062 & $-$.003 & $-$.029 & +.039 & $-$.016 \\
 & SDA & $-$.600 & $-$.067 & +.050 & +.200 & $-$.100 & $-$.067 & .000 & +.066 & $-$.065 \\
 & identification & $-$.250 & $-$.017 & +.095 & +.082 & $-$.050 & $-$.050 & $-$.033 & +.009 & $-$.027 \\
 & type acc & +.050 & $-$.017 & +.100 & +.027 & .000 & .000 & $-$.023 & .000 & +.017 \\
 & coord acc & $-$.600 & $-$.067 & +.050 & +.200 & $-$.100 & $-$.067 & .000 & +.066 & $-$.065 \\
\bottomrule}
\end{table}

\endgroup

\section{Settings, Compute and Licenses}
\label{apx:compute}

\paragraph{Settings.}
Table~\ref{tab:settings} lists the settings of every stage.

\begin{table}[!htbp]\centering\small\setlength{\tabcolsep}{5pt}
\caption{Settings of every stage.}
\label{tab:settings}
\begin{tabular}{@{}lp{0.56\textwidth}@{}}
\toprule
\multicolumn{2}{@{}l}{\textit{Causal structure learning, robot and vehicle channels}} \\
first difference & every state channel of a simulated arm, and a channel of a real log or a drive when its lag-one autocorrelation exceeds .95 \\
input edge added & partial correlation $\ge .03$, simulated arms only \\
edge dropped & partial correlation $< .05$ given the other parents and the inputs \\
\midrule
\multicolumn{2}{@{}l}{\textit{Constrained embedding optimization}} \\
term weights & $\lambda_{\mathrm{ind}} = .3$, $\lambda_{\mathrm{dep}} = .3$ with $\kappa = .5$ on the pairs above the .7 quantile of dependence, $\lambda_{\mathrm{con}} = 1$ \\
regularizer weights & $\lambda_{\mathrm{bank}} = 1$, $\lambda_{\mathrm{nbr}} = 1$, $\lambda_{\mathrm{norm}} = .1$ \\
\midrule
\multicolumn{2}{@{}l}{\textit{Translation via prefix adaptation}} \\
prefix & length $m = 10$, low-rank update of rank 8 \\
training & AdamW \citep{AdamW} with early stopping on the validation split, 4 epochs for the questionnaires and 2 epochs for the robots \\
loss weights & $\lambda_{\mathrm{sim}} = .5$, $\lambda_{\mathrm{ctr}} = .15$, $\lambda_{\mathrm{sib}} = .5$ \\
backbones & Qwen3-4B-Instruct-2507 (main) and Qwen3-8B, Llama-3.2-3B-Instruct \\
\midrule
\multicolumn{2}{@{}l}{\textit{Evaluation}} \\
masking & 20, 50, 75 and 90\% of the names, with five folds per scale \\
scoring encoder & e5-large-v2, frozen \\
judge & Gemma 3 12B, instruction-tuned, temperature 0, both candidate orders \\
\bottomrule
\end{tabular}
\end{table}

\paragraph{What each stage costs.}
Every run reported here used one A100 class GPU. On the 42-item DASS the original first stage of RLCD had not finished after 35 minutes, while the pipeline with BOSS as its first stage produced the graph used here in under five. HEXACO has 240 items over 5{,}000 respondents. Its structure is discovered in 20 seconds, since the first stage partitions the items into small groups before the rank tests run. A synthetic scale of 150 variables generated from 25 latent variables over 8{,}000 rows takes 107 seconds. Exact rank searches stay practical to about 150 active variables at $k \le 3$, and a larger scale relies on the first stage keeping each group small. With the main backbone of four billion parameters, the adapter and the low-rank update together hold about 51 million. Training the questionnaire adapter takes about an hour, and the robot adapter reaches its selected checkpoint after eight and a half hours. Both are trained a single time.
At inference one name is one prompt and one decoding call.

\begin{figure}[!htbp]\centering
\begin{minipage}{0.86\textwidth}\centering
{\footnotesize\sffamily Deployment cost against outcome, no structure supplied\par}
\includegraphics[width=\linewidth,trim=0 0 0 19,clip]{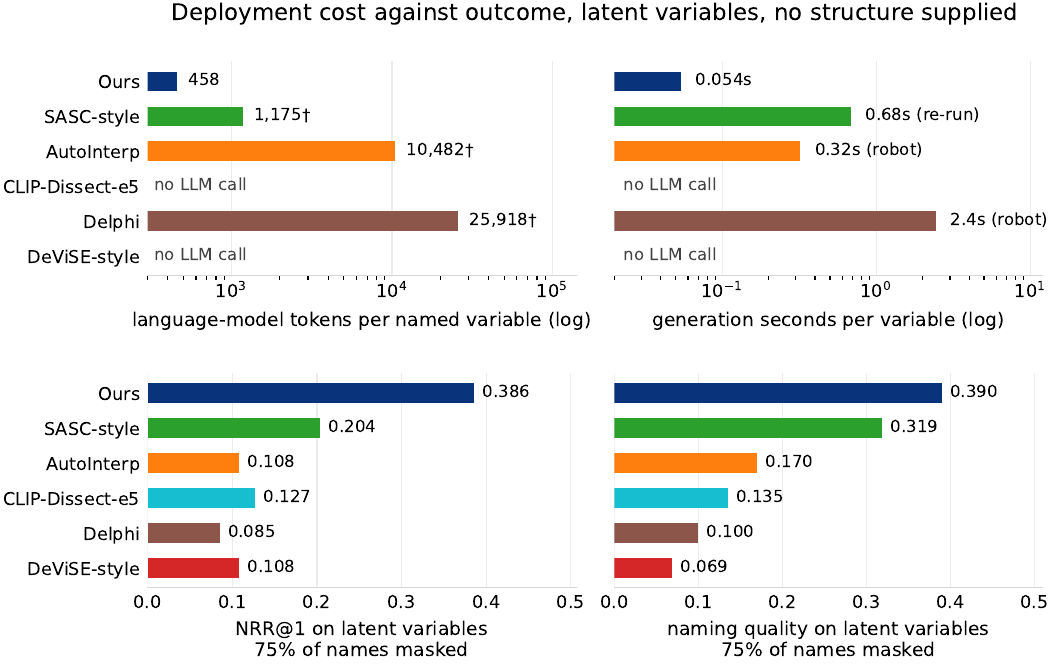}
\end{minipage}
\caption{Cost and outcome per method with no structure
supplied. Top, tokens and generation seconds per named variable on a log
scale. Bottom, NRR@1 and naming quality at 75\% masking. Tokens and naming
metrics are for latent variables. Generation times are from questionnaire
runs for \ours{} and SASC-style, and robot runs for Automated Interpretability
and Delphi. A dagger marks a reconstructed count.}
\label{fig:deploy}
\end{figure}
\paragraph{Naming a new system.}
The deployment cost is the cost of naming a system nobody has
documented, and the three steps together take minutes on one GPU. Ten
vehicle drives go from raw traces to named channels in about seven
minutes. Structure learning takes five seconds, the optimization 334 seconds and writing the names 67 seconds. The names cover the 280 unnamed channels of the four masking levels under each of the eight variants that the evaluation runs. These variants are the full method and its ablations, and they yield 2{,}240 names in all. Structure learning takes a few seconds on a real log and under half a minute on a simulated arm. The structure of a questionnaire scale is learned in seconds to a few minutes. The optimization at one masking level takes a few minutes on the smallest scale, and naming runs at 18.5 variables per second. In the timed runs, the three sampling methods take 0.32 to 2.4 seconds of generation per name against our 0.054 (Figure~\ref{fig:deploy}). They spend 2.6 to 56.6 times the tokens per latent variable (Table~\ref{tab:cost}).

\paragraph{Cost per named variable.}
\label{apx:cost}
The cost of the sampling methods grows with the evidence a column offers,
from the vehicle CAN to the questionnaires, since a channel description is shorter than an item text. Ours stays within a few hundred tokens on every column (Table~\ref{tab:cost}). The lower row of Figure~\ref{fig:deploy} reverses the order, and the cheapest method names the latent variables most accurately.

A cell of Table~\ref{tab:cost} is measured from stored request usage where the protocol was run against a served model. Otherwise it is reconstructed by replaying the published protocol through a token counter. Measured and reconstructed cells both count the protocol's nominal calls and exclude retry overhead.
The generation seconds of Figure~\ref{fig:deploy} all come from the same model snapshot. For \ours{} they are one inference pass over the questionnaire targets, and for SASC-style a timed sequential re-run of its six-call protocol. For Automated Interpretability and Delphi they are the median interval between consecutive cases of the robot run.

\begin{table}[!htb]\centering\small
\caption{Tokens spent per named variable, with the number of model calls the
protocol makes. A dagger marks a cell reconstructed by replaying the
published protocol through the token counter. The other cells are
measured usage.}
\label{tab:cost}
\fittable{lrrrr}{
\toprule
 & questionnaire latent variables & simulated arms & real logs & vehicle CAN \\
\midrule
\ours{} (1 call) & 458 & 276 & 236 & 210 \\
SASC-style (6 calls) & 1{,}175$^{\dagger}$ & 1{,}125$^{\dagger}$ & 1{,}000$^{\dagger}$ & 774$^{\dagger}$ \\
Automated Interpretability (2 calls) & 10{,}482$^{\dagger}$ & 8{,}903 & 6{,}409 & 4{,}117 \\
Delphi (3 calls) & 25{,}918$^{\dagger}$ & 21{,}878 & 15{,}665 & 9{,}960 \\
\bottomrule
}
\end{table}

\paragraph{Licenses.}
The questionnaire responses come from the publicly available anonymized
datasets listed in Table~\ref{tab:data-q}. The comma2k19 dataset, \texttt{robosuite} and two of the real logs are released under the MIT license, and the third log under Apache 2.0. ConceptNet Numberbatch is released under
CC BY-SA 4.0, the Cognitive Atlas under CC BY-SA 3.0 and WordNet under its own permissive
license. The scales themselves are the published instruments listed in
Table~\ref{tab:data-q}.

\section{Recovered Structures}
\label{apx:struct-merged}
\makeatletter\setlength{\@fptop}{0pt plus 1fil}\setlength{\@fpsep}{16pt plus 2fil}\setlength{\@fpbot}{0pt plus 1fil}\makeatother

\paragraph{Robot and vehicle channels.}
\label{apx:struct-robot}
Each figure draws the supplied graph of one arm, log or drive on the left and the discovered graph on the right, with identical node positions. The graphs are drawn at the channel level. The graphs are the summary graphs of Appendix~\ref{apx:discovery-robot}, and Table~\ref{tab:struct-robot} counts the edges of every arm in the time-lagged form of Eq.~\ref{eq:temporal}.

\begin{figure}[!htb]\centering
\makebox[\textwidth][c]{\includegraphics[width=1.15\textwidth]{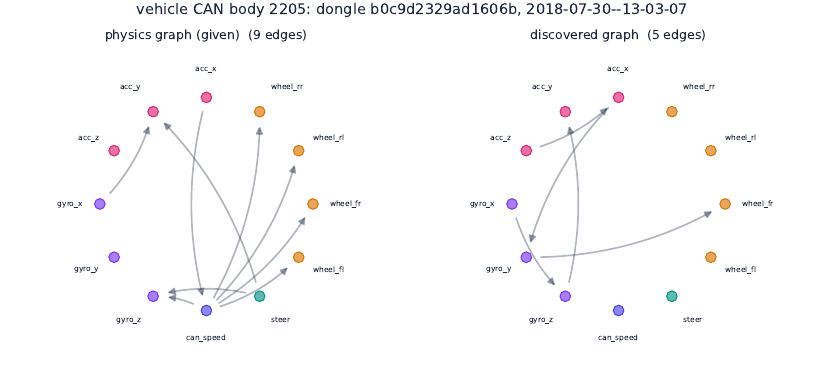}}
\caption{Vehicle CAN drive 2205, which is the drive of the introduction. In the supplied graph, the vehicle speed has edges to the four wheel speeds. The discovered graph has a path from the vertical acceleration through the longitudinal acceleration to the pitch rate. It also has an edge from the pitch rate to the front right wheel speed and two more edges among the rates and the lateral acceleration.}
\label{fig:apx-rob-2205}
\end{figure}

\begin{figure}[!htb]\centering
\makebox[\textwidth][c]{\includegraphics[width=1.15\textwidth]{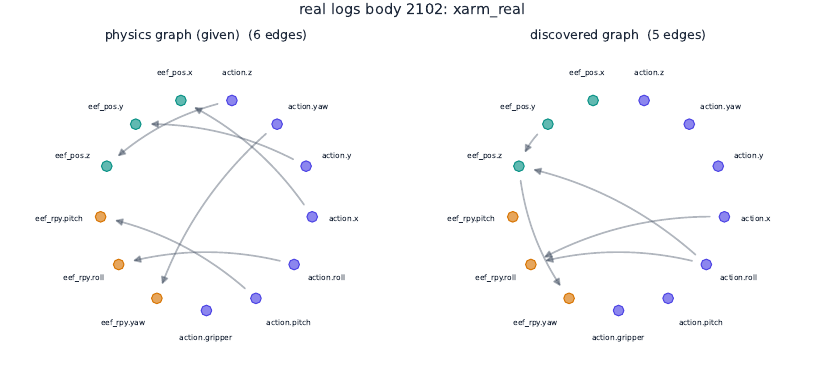}}
\caption{Real robot log 2102, which is the UTokyo xArm pick-and-place log.}
\label{fig:apx-rob-2102}
\end{figure}

\begin{figure}[!htb]\centering
\makebox[\textwidth][c]{\includegraphics[width=1.15\textwidth]{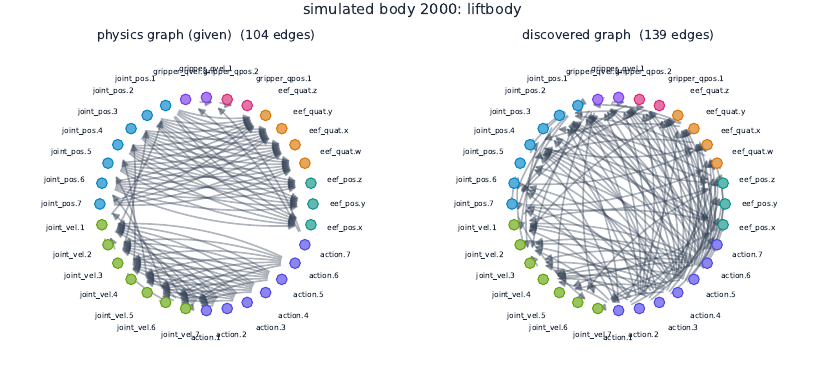}}
\caption{Simulated arm 2000, which is the Panda of the lift task. Its discovered graph is drawn against its true graph.}
\label{fig:apx-rob-2000}
\end{figure}

\paragraph{Questionnaires.}
Each figure draws the published scoring key of a scale above and the structure the discovery step recovers below. Each latent variable has a color, and the items it governs share that color. An item is drawn at its position along the row and marked with its identifier in the source data. A long identifier is truncated to fit. Table~\ref{tab:struct-q} in Appendix~\ref{apx:discovery} gives the counts behind these figures and covers the three scales not drawn.

\begin{figure}[!htb]\centering
\includegraphics[width=0.86\textwidth]{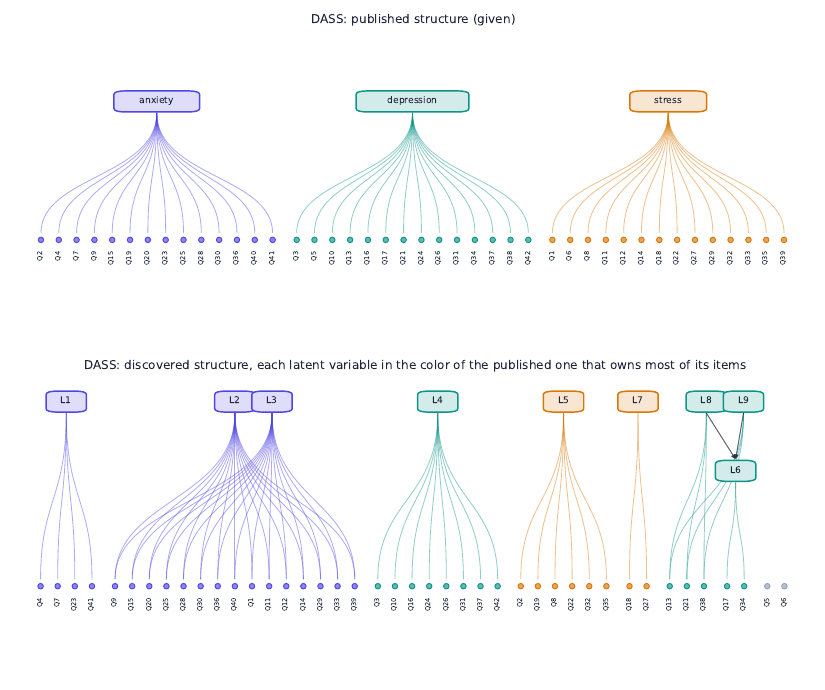}
\caption{DASS, the published structure above and the discovered structure below. Six of the nine discovered latent variables lie inside one published latent variable.}
\label{fig:apx-struct-dass}
\end{figure}

\begin{figure}[!htb]\centering
\includegraphics[width=0.86\textwidth]{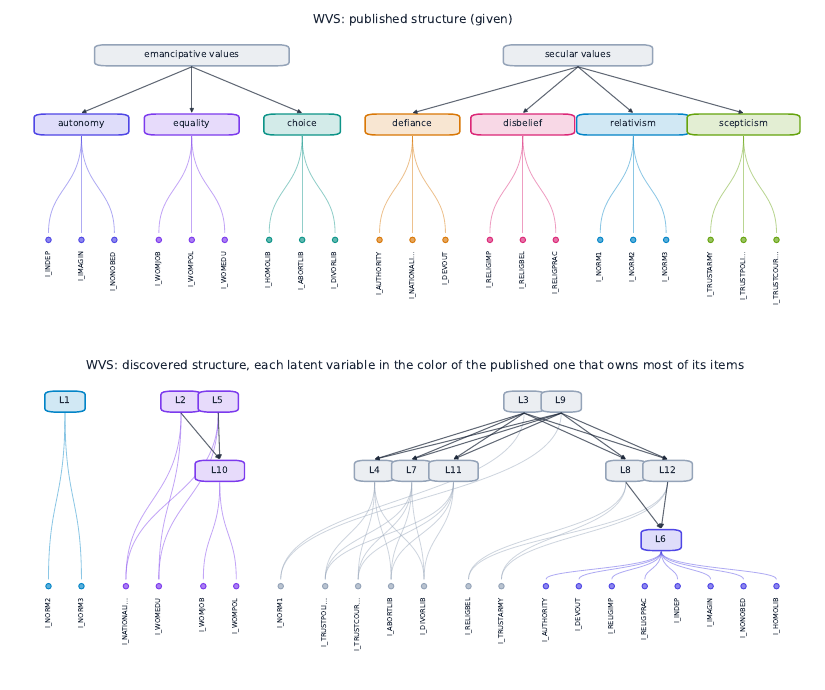}
\caption{World Values Survey, the published structure above and the discovered structure below. The discovered groups cut across the Welzel sub-indices more than on any other scale.}
\label{fig:apx-struct-wvs}
\end{figure}

\end{document}